\documentclass[10pt]{article}

\PassOptionsToPackage{numbers,sort}{natbib}
\usepackage[final]{nips_2017}
\usepackage{mydefs}
\usepackage{macros}

\begin{document}

\title{A PAC-Bayesian Analysis of Randomized Learning with Application to Stochastic Gradient Descent}
\author{
	Ben London \\
	\texttt{blondon@amazon.com} \\
	Amazon AI
	}
\date{}
\maketitle



\begin{abstract}
We study the generalization error of randomized learning algorithms---focusing on stochastic gradient descent (SGD)---using a novel combination of PAC-Bayes and algorithmic stability. Importantly, our generalization bounds hold for all posterior distributions on an algorithm's random hyperparameters, including distributions that depend on the training data. This inspires an adaptive sampling algorithm for SGD that optimizes the posterior at runtime. We analyze this algorithm in the context of our generalization bounds and evaluate it on a benchmark dataset. Our experiments demonstrate that adaptive sampling can reduce empirical risk faster than uniform sampling while also improving out-of-sample accuracy.
\end{abstract}



\section{Introduction}
\label{sec:intro}

Randomized algorithms are the workhorses of modern machine learning. One such algorithm is stochastic gradient descent (SGD), a first-order optimization method that approximates the gradient of the learning objective by a random point estimate, thereby making it efficient for large datasets. Recent interest in studying the generalization properties of SGD has led to several breakthroughs. Notably, \citet{hardt:icml16} showed that SGD is stable with respect to small perturbations of the training data, which let them bound the risk of a learned model. Related studies followed thereafter \citep{lin:icml16,kuzborskij:corr17}. Simultaneously, \citet{lin:nips16} derived risk bounds that show that early stopping acts as a regularizer in multi-pass SGD (echoing studies of incremental gradient descent \citep{rosasco:nips15}).

In this paper, we study generalization in randomized learning, with SGD as a motivating example. Using a novel analysis that combines PAC-Bayes with algorithmic stability (reminiscent of \citep{london:jmlr16}), we prove new generalization bounds for randomized learning algorithms, which apply to SGD under various assumptions on the loss function and optimization objective. Our bounds improve on related work in two important ways. While some previous bounds for SGD \citep{bottou:nips08,hardt:icml16,lin:icml16,kuzborskij:corr17} hold in expectation over draws of the training data, our bounds hold with high probability. Further, existing generalization bounds for randomized learning \citep{elisseeff:jmlr05,feng:corr16} only apply to algorithms with fixed distributions (such as SGD with uniform sampling); thanks to our PAC-Bayesian treatment, our bounds hold for all posterior distributions, meaning they support data-dependent randomization. The penalty for overfitting the posterior to the data is captured by the posterior's divergence from a fixed prior.

Our generalization bounds suggest a sampling strategy for SGD that adapts to the training data and model, focusing on useful examples while staying close to a uniform prior. We therefore propose an adaptive sampling algorithm that dynamically updates its distribution using multiplicative weight updates (similar to boosting \citep{freund:colt95,shalev-shwartz:corr14}, focused online learning \citep{shalev-shwartz:icml16} and exponentiated gradient dual coordinate ascent \citep{collins:jmlr08}). The algorithm requires minimal tuning and works with any stochastic gradient update rule. We analyze the divergence of the adaptive posterior and conduct experiments on a benchmark dataset, using several combinations of update rule and sampling utility function. Our experiments demonstrate that adaptive sampling can reduce empirical risk faster than uniform sampling while also improving out-of-sample accuracy.



\section{Preliminaries}
\label{sec:prelims}

Let $\XS$ denote a compact domain; let $\YS$ denote a set of labels; and let $\ZS \defeq \XS \times \YS$ denote their Cartesian product. We assume there exists an unknown, fixed distribution, $\Dist$, supported on $\ZS$. Given a dataset of \define{examples}, $\Data \defeq (\z_1,\dots,\z_\nex) = ((\x_1,\y_1),\dots,(\x_\nex,\y_\nex))$, drawn independently and identically from $\Dist$, we wish to learn the parameters of a predictive model, $\XS \mapsto \YS$, from a class of \define{hypotheses}, $\Hyp$, which we assume is a subset of Euclidean space. We have access to a deterministic \define{learning algorithm}, $\Algo : \ZS^{\nex} \times \Rand \to \Hyp$, which, given $\Data$, and some \define{hyperparameters}, $\rand \in \Rand$, produces a hypothesis, $\hyp \in \Hyp$.

We measure the quality of a hypothesis using a \define{loss} function, $\Loss : \Hyp \times \ZS \to [0,\MaxLoss]$, which we assume is $\MaxLoss$-bounded\footnote{Accommodating unbounded loss functions is possible \citep{kontorovich:icml14}, but requires additional assumptions.} and $\lipschitz$-Lipschitz (see \cref{sec:stability_proofs} for the definition). Let $\Loss( \Algo(\Data,\rand), \z )$ denote the loss of a hypothesis that was output by $\Algo(\Data,\rand)$ when applied to example $\z$. Ultimately, we want the learning algorithm to have low expected loss on a random example; i.e., low \define{risk}, denoted $\Risk(\Data,\rand) \defeq \Ep_{\z\by\Dist}[ \Loss( \Algo(\Data,\rand), \z ) ]$. (The learning algorithm should always be clear from context.) Since this expectation cannot be computed, we approximate it by the average loss on the training data; i.e., the \define{empirical risk}, $\EmpRisk(\Data,\rand) \defeq \frac{1}{\nex} \sum_{i=1}^{\nex} \Loss( \Algo(\Data,\rand), \z_i )$, which is what most learning algorithms attempt to minimize. By bounding the difference of the two, $\GenErr(\Data,\rand) \defeq \Risk(\Data,\rand) - \EmpRisk(\Data,\rand)$, which we refer to as the \define{generalization error}, we obtain an upper bound on $\Risk(\Data,\rand)$.

Throughout this document, we will view a randomized learning algorithm as a deterministic learning algorithm whose hyperparameters are randomized. For instance, \define{stochastic gradient descent} (SGD) performs a sequence of hypothesis updates, for $t = 1,\dots,\seqlen$, of the form
\begin{equation}
\hyp_t
	\gets \Update_t(\hyp_{t-1}, \z_{i_t})
	\defeq \hyp_{t-1} - \stepsize_t \grad\TrainLoss(\hyp_{t-1}, \z_{i_t}) ,
\label{eq:update_rule}
\end{equation}
using a sequence of random example indices, $\rand = (i_1, \dots, i_{\seqlen})$, sampled according to a distribution, $\Prior$, on $\Rand = \{1,\dots,\nex\}^{\seqlen}$. The \define{objective} function, $\TrainLoss : \Hyp \times \ZS \to \Reals_+$, may be different from $\Loss$; it is usually chosen as an optimizable upper bound on $\Loss$, and need not be bounded. The parameter $\stepsize_t$ is a step size for the update at iteration $t$. SGD can be viewed as taking a dataset, $\Data$, drawing $\rand \by \Prior$, then running a deterministic algorithm, $\Algo(\Data,\rand)$, which executes the sequence of hypothesis updates.

Since learning is randomized, we will deal with the expected loss over draws of random hyperparameters. We therefore overload the above notation for a distribution, $\Prior$, on the hyperparameter space, $\Rand$; let $\Risk(\Data,\Prior) \defeq \Ep_{\rand\by\Prior}[ \Risk(\Data,\rand) ]$, $\EmpRisk(\Data,\Prior) \defeq \Ep_{\rand\by\Prior}[ \EmpRisk(\Data,\rand) ]$, and $\GenErr(\Data,\Prior) \defeq \Risk(\Data,\Prior) - \EmpRisk(\Data,\Prior)$.

\subsection{Relationship to PAC-Bayes}
\label{sec:pac_bayes_view}

Conditioned on the training data, a \define{posterior} distribution, $\Post$, on the hyperparameter space, $\Rand$, induces a distribution on the hypothesis space, $\Hyp$. If we ignore the learning algorithm altogether and think of $\Post$ as a distribution on $\Hyp$ directly, then $\Ep_{\hyp\by\Post}[ \Loss(\hyp,\z) ]$ is the \define{Gibbs} loss; that is, the expected loss of a random hypothesis. The Gibbs loss has been studied extensively using \define{PAC-Bayesian} analysis (also known simply as \define{PAC-Bayes}) \citep{mcallester:colt99,langford:nips02,seeger:jmlr02,catoni:ims07,germain:icml09}. In the PAC-Bayesian learning framework, we fix a \define{prior} distribution, $\Prior$, then receive some training data, $\Data \by \Dist^{\nex}$, and learn a posterior distribution, $\Post$. PAC-Bayesian bounds frame the generalization error, $\GenErr(\Data,\Post)$, as a function of the posterior's divergence from the prior, which penalizes overfitting the posterior to the training data.

In \cref{sec:gen_err_bounds}, we derive new upper bounds on $\GenErr(\Data,\Post)$ using a novel PAC-Bayesian treatment. While traditional PAC-Bayes analyzes distributions directly on $\Hyp$, we instead analyze distributions on $\Rand$. Thus, instead of applying the loss directly to a random hypothesis, we apply it to the output of a learning algorithm, whose inputs are a dataset and a random hyperparameter instantiation. This distinction is subtle, but important. In our framework, a random hypothesis is explicitly a function of the learning algorithm, whereas in traditional PAC-Bayes this dependence may only be implicit---for instance, if the posterior is given by random permutations of a learned hypothesis. The advantage of making the learning aspect explicit is that it isolates the source of randomness, which may help in analyzing the distribution of learned hypotheses. Indeed, it may be difficult to map the output of a randomized learning algorithm to a distribution on the hypothesis space. That said, the disadvantage of making learning explicit is that, due to the learning algorithm's dependence on the training data and hyperparameters, the generalization error could be sensitive to certain examples or hyperparameters. This condition is quantified with \define{algorithmic stability}, which we discuss next.

\section{Algorithmic Stability}
\label{sec:stability}

Informally, algorithmic stability measures the change in loss when the inputs to a learning algorithm are perturbed; a learning algorithm is stable if small perturbations lead to proportional changes in the loss. In other words, a learning algorithm should not be overly sensitive to any single input. Stability is crucial for learnability \citep{shalev-shwartz:jmlr10}, and has also been linked to differentially private learning \citep{wang:jmlr16}. In this section, we discuss several notions of stability tailored for randomized learning algorithms. From this point on, let $\Hamming(\vec{v},\vec{v}') \defeq \sum_{i=1}^{\card{\vec{v}}} \1\{ v_i \neq v'_i \}$ denote the Hamming distance.

\subsection{Definitions of Stability}
\label{sec:stability_defs}

The literature traditionally measures stability with respect to perturbations of the training data. We refer to this general property as \define{data stability}. Data stability has been defined in many ways. The following definitions, originally proposed by \citet{elisseeff:jmlr05}, are designed to accommodate randomized algorithms via an expectation over the hyperparameters, $\rand \by \Prior$.

\begin{definition}[Uniform Stability]
\label{def:unif_data_stability}
A randomized learning algorithm, $\Algo$, is $\datastab$-uniformly stable with respect to a loss function, $\Loss$, and a distribution, $\Prior$ on $\Rand$, if
\begin{equation}
\sup_{\Data,\Data' \in \ZS^{\nex} : \Hamming(\Data,\Data') = 1} \, \sup_{\z\in\ZS} \, \Big\vert \Ep_{\rand\by\Prior}\left[ \Loss(\Algo(\Data, \rand), \z) - \Loss(\Algo(\Data', \rand), \z) \right] \Big\vert \leq \datastab .
\label{eq:unif_data_stability}
\end{equation}
\end{definition}

\begin{definition}[Pointwise Hypothesis Stability]
\label{def:hyp_stability}
For a given dataset, $\Data$, let $\Data^{i,\z}$ denote the result of replacing the $i\nth$ example with example $\z$. A randomized learning algorithm, $\Algo$, is $\datastab$-pointwise hypothesis stable with respect to a loss function, $\Loss$, and a distribution, $\Prior$ on $\Rand$, if
\begin{equation}
\sup_{i \in \{1,\dots,\nex\}} \, \Ep_{\Data\by\Dist^{\nex}} \Ep_{\z\by\Dist} \Ep_{\rand\by\Prior} \left[ \ab{ \Loss(\Algo(\Data, \rand), \z_i) - \Loss(\Algo(\Data^{i,\z}, \rand), \z_i) }  \right] \leq \datastab .
\label{eq:hyp_stability}
\end{equation}
\end{definition}

Uniform stability measures the maximum change in loss from replacing any single training example, whereas pointwise hypothesis stability measures the expected change in loss on a random example when said example is removed from the training data. Under certain conditions, $\datastab$-uniform stability implies $\datastab$-pointwise hypothesis stability, but not vice versa. Thus, while uniform stability enables sharper bounds, pointwise hypothesis stability supports a wider range of learning algorithms.

In addition to data stability, we might also require stability with respect to changes in the hyperparameters. From this point forward, we will assume that the hyperparameter space, $\Rand$, decomposes into the product of $\seqlen$ subspaces, $\prod_{t=1}^{\seqlen} \Rand_t$. For instance, $\Rand$ could be the set of all sequences of example indices, $\{1,\dots,\nex\}^{\seqlen}$, such as one would sample from in SGD.

\begin{definition}[Hyperparameter Stability]
\label{def:unif_rand_stability}
A randomized learning algorithm, $\Algo$, is $\randstab$-uniformly stable with respect to a loss function, $\Loss$, if
\begin{equation}
\sup_{\Data \in \ZS^{\nex}} \, \sup_{\z\in\ZS} \, \sup_{\rand,\rand' \in \Rand : \Hamming(\rand,\rand')=1} \ab{ \Loss(\Algo(\Data, \rand), \z) - \Loss(\Algo(\Data, \rand'), \z) } \leq \randstab .
\label{eq:unif_rand_stability}
\end{equation}
\end{definition}
When $\Algo$ is both $\datastab$-uniformly and $\randstab$-uniformly stable, we say that $\Algo$ is $(\datastab,\randstab)$-uniformly stable.

\begin{remark}
\label{rem:unif_stability_equiv}
For SGD, \cref{def:unif_rand_stability} can be mapped to \citepos{bousquet:jmlr02} original definition of uniform stability using the resampled example sequence. Yet their generalization bounds would still not apply because the resampled data is not i.i.d.\ and SGD is not a symmetric learning algorithm.
\end{remark}

\subsection{Stability of Stochastic Gradient Descent}
\label{sec:stability_satisfied}

For non-vacuous generalization bounds, we will need the data stability coefficient, $\datastab$, to be of order $\SoftO(\nex^{-1})$. Additionally, certain results will require the hyperparameter stability coefficient, $\randstab$, to be of order $\SoftO(1/\sqrt{\nex\seqlen})$. (If $\seqlen = \BigTheta(\nex)$, as it often is, then $\randstab = \SoftO(\seqlen^{-1})$ suffices.) In this section, we review some conditions under which these requirements are satisfied by SGD. We rely on standard characterizations of the objective function---namely, convexity, Lipschitzness and smoothness---the definitions of which are deferred to \cref{sec:stability_proofs}, along with all proofs from this section.

A recent study by \citet{hardt:icml16} proved that some special cases of SGD---when examples are sampled uniformly, with replacement---satisfy $\datastab$-uniform stability (\cref{def:unif_data_stability}) with $\datastab = \BigO(\nex^{-1})$. We extend their work (specifically, \citep[Theorem 3.7]{hardt:icml16}) in the following result for SGD with a convex objective function, when the step size is at most inversely proportional to the current iteration.
\begin{proposition}
\label{prop:stability_convex_sgd_decreasing_stepsize}
Assume that the loss function, $\Loss$, is $\lipschitz$-Lipschitz, and that the objective function, $\TrainLoss$, is convex, $\lipschitz$-Lipschitz and $\smooth$-smooth. Suppose SGD is run for $\seqlen$ iterations with a uniform sampling distribution, $\Prior$, and step sizes $\stepsize_t \in [0, \stepsize/t]$, for $\stepsize \in [0, 2/\smooth]$. Then, SGD is both $\datastab$-uniformly stable and $\datastab$-pointwise hypothesis stable with respect to $\Loss$ and $\Prior$, with
\begin{equation}
\datastab \leq \frac{2 \lipschitz^2 \stepsize \( \ln\seqlen + 1 \)}{\nex} .
\label{eq:stability_convex_sgd_decreasing_stepsize}
\end{equation}
\end{proposition}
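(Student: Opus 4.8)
The plan is to adapt the coupling argument of \citet{hardt:icml16} to the decreasing step-size schedule. I would fix two datasets $\Data, \Data'$ with $\Hamming(\Data,\Data') = 1$, differing only at some position $j$, and run SGD on both using the \emph{same} realization of the random index sequence $\rand = (i_1,\dots,i_\seqlen)$. Writing $\hyp_t, \hyp_t'$ for the two parameter trajectories (with common start $\hyp_0 = \hyp_0'$) and $\delta_t \defeq \norm{\hyp_t - \hyp_t'}$, the $\lipschitz$-Lipschitzness of $\Loss$ immediately gives $\ab{\Loss(\hyp_\seqlen,\z) - \Loss(\hyp_\seqlen',\z)} \leq \lipschitz\,\delta_\seqlen$ for every fixed $\z$ and every fixed $\rand$. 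Thus everything reduces to controlling $\Ep_{\rand\by\Prior}[\delta_\seqlen]$.

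The crux is a growth recursion for $\delta_t$. When $i_t \neq j$, both trajectories apply the identical gradient map $\hyp \mapsto \hyp - \stepsize_t \grad\TrainLoss(\hyp,\z_{i_t})$; since $\TrainLoss$ is convex and $\smooth$-smooth and $\stepsize_t \leq \stepsize/t \leq \stepsize \leq 2/\smooth$, this map is $1$-Lipschitz (the standard fact that a gradient step on a convex $\smooth$-smooth function is non-expansive whenever the step size is at most $2/\smooth$), so $\delta_{t+1} \leq \delta_t$. When $i_t = j$ the two maps use different examples, $\z_j$ and $\z_j'$, but each remains non-expansive and the two maps differ uniformly by at most $2\stepsize_t\lipschitz$ (using $\norm{\grad\TrainLoss} \leq \lipschitz$ from Lipschitzness of $\TrainLoss$), whence a triangle-inequality split yields $\delta_{t+1} \leq \delta_t + 2\stepsize_t\lipschitz$. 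Under uniform sampling $i_t = j$ occurs with probability $1/\nex$ independently at each step, so taking expectations and invoking the tower property gives the recursion $\Ep[\delta_{t+1}] \leq \Ep[\delta_t] + 2\stepsize_t\lipschitz/\nex$.

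Unrolling from $\delta_0 = 0$ then gives $\Ep_{\rand\by\Prior}[\delta_\seqlen] \leq (2\lipschitz/\nex)\sum_{t=1}^{\seqlen}\stepsize_t \leq (2\lipschitz\stepsize/\nex)\sum_{t=1}^{\seqlen} 1/t \leq (2\lipschitz\stepsize/\nex)(\ln\seqlen + 1)$, where the last step bounds the harmonic sum by $\ln\seqlen + 1$. Combining with the Lipschitz reduction produces $\Ep_{\rand\by\Prior}[\ab{\Loss(\hyp_\seqlen,\z) - \Loss(\hyp_\seqlen',\z)}] \leq 2\lipschitz^2\stepsize(\ln\seqlen+1)/\nex$, which after taking the suprema over $\Data,\Data',\z$ is exactly \cref{eq:stability_convex_sgd_decreasing_stepsize} for uniform stability. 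Crucially, the per-realization bound $\lipschitz\,\delta_\seqlen(\rand)$ is established \emph{before} any averaging, so it also upper-bounds the averaged quantity in \cref{def:hyp_stability}: specializing $\Data' = \Data^{i,\z}$ and evaluating at $\z_i$, then taking the additional expectations over $\Data\by\Dist^\nex$ and $\z\by\Dist$ (which only average a quantity already uniformly bounded), certifies pointwise hypothesis stability with the same coefficient.

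I expect the main obstacle to be the growth-recursion step: verifying non-expansiveness holds at \emph{every} iteration (which requires checking $\stepsize_t \in [0,2/\smooth]$ for all $t$, guaranteed since $\stepsize_t \leq \stepsize \leq 2/\smooth$), cleanly merging the two cases $i_t = j$ and $i_t \neq j$ under a single expectation, and correctly identifying the $1/\nex$ perturbation probability for with-replacement uniform sampling. The harmonic-sum bound is the only genuinely new ingredient relative to the constant step-size analysis of \citet{hardt:icml16}, and it is what introduces the logarithmic factor $\ln\seqlen + 1$ into the final coefficient.
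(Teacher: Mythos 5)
Your proposal is correct and follows essentially the same route as the paper: the paper delegates the coupling/growth-recursion argument to an adapted version of \citet[Theorem 3.7]{hardt:icml16} (yielding $\datastab \leq \frac{2\lipschitz^2}{\nex}\sum_{t=1}^{\seqlen}\stepsize_t$ for any step sizes in $[0,2/\smooth]$), which is exactly the argument you reconstruct, and then applies the same harmonic-sum bound $\sum_{t=1}^{\seqlen} 1/t \leq \ln\seqlen + 1$. Your observation that the per-realization Lipschitz bound survives the insertion of an absolute value, so the same coefficient certifies pointwise hypothesis stability, matches the paper's definitional reconciliation in \cref{sec:proof_stability_sgd_extensions}.
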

When $\seqlen = \BigTheta(\nex)$, \cref{eq:stability_convex_sgd_decreasing_stepsize} is $\SoftO(\nex^{-1})$, which is acceptable for proving generalization.

If we do not assume that the objective function is convex, we can borrow a result (with small modification\footnote{\citeauthor{hardt:icml16}'s definition of stability and theorem statement differ slightly from ours. See \cref{sec:proof_stability_sgd_extensions}.}) from \citet[Theorem 3.8]{hardt:icml16}.
\begin{proposition}
\label{prop:stability_non_convex_sgd}
Assume that the loss function, $\Loss$, is $\MaxLoss$-bounded and $\lipschitz$-Lipschitz, and that the objective function, $\TrainLoss$, is $\lipschitz$-Lipschitz and $\smooth$-smooth. Suppose SGD is run for $\seqlen$ iterations with a uniform sampling distribution, $\Prior$, and step sizes $\stepsize_t \in [0, \stepsize/t]$, for $\stepsize \geq 0$. Then, SGD is both $\datastab$-uniformly stable and $\datastab$-pointwise hypothesis stable with respect to $\Loss$ and $\Prior$, with
\begin{equation}
\datastab \leq \( \frac{\MaxLoss + (\smooth\stepsize)^{-1}}{\nex - 1} \) \( 2 \lipschitz^2 \stepsize \)^{\frac{1}{\smooth\stepsize + 1}} \, \seqlen^{\frac{\smooth\stepsize}{\smooth\stepsize + 1}} .
\label{eq:stability_non_convex_sgd}
\end{equation}
\end{proposition}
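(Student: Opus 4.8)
The plan is to adapt the coupling argument behind \citet[Theorem 3.8]{hardt:icml16} to the definitions of this paper. I would prove the single, stronger inequality
\begin{equation*}
\sup_{\Data,\Data' : \Hamming(\Data,\Data')=1} \, \sup_{\z\in\ZS} \; \Ep_{\rand\by\Prior}\!\left[ \ab{ \Loss(\Algo(\Data,\rand),\z) - \Loss(\Algo(\Data',\rand),\z) } \right] \leq \datastab ,
\end{equation*}
from which both claims follow immediately. Uniform stability (\cref{def:unif_data_stability}) holds because $\ab{\Ep_{\rand}[\,\cdot\,]} \leq \Ep_{\rand}[\ab{\,\cdot\,}]$. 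Pointwise hypothesis stability (\cref{def:hyp_stability}) holds because $\Data$ and $\Data^{i,\z}$ differ in a single example, so for every $i$, $\Data$ and $\z$ the summand in \cref{eq:hyp_stability} is dominated by the supremum above; taking expectations over $\Data$ and $\z$ and the supremum over $i$ preserves the bound. This is the ``uniform stability implies pointwise hypothesis stability'' direction noted earlier.

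To bound the coupled quantity, fix $\Data,\Data'$ with $\Hamming(\Data,\Data')=1$ and $\z\in\ZS$, and run SGD on both datasets using a shared index sequence $\rand=(i_1,\dots,i_\seqlen)\by\Prior$. Writing $\hyp_t,\hyp_t'$ for the two trajectories and $\delta_t \defeq \| \hyp_t - \hyp_t' \|$, I would first peel off the step at which the perturbed coordinate is touched: for a threshold $t_0$, let $E$ be the event that none of $i_1,\dots,i_{t_0}$ equals the index where $\Data$ and $\Data'$ differ. Uniform sampling and a union bound give $\Pr[\lnot E]\leq t_0/\nex$, while $\delta_{t_0}=0$ on $E$. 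Combining $\MaxLoss$-boundedness on $\lnot E$ with the $\lipschitz$-Lipschitzness of $\Loss$ on $E$ yields
\begin{equation*}
\Ep_{\rand}\!\left[ \ab{ \Loss(\hyp_\seqlen,\z) - \Loss(\hyp_\seqlen',\z) } \right] \leq \frac{t_0}{\nex}\MaxLoss + \lipschitz\,\Ep\!\left[ \delta_\seqlen \mid E \right] .
\end{equation*}
The technical heart is a growth recursion for $\Ep[\delta_t\mid E]$. When the shared example is selected, the update is the gradient map of an $\smooth$-smooth function, which---without convexity---is only $(1+\stepsize_t\smooth)$-expansive; when the differing example is selected, with probability $1/\nex$, the two updates additionally differ by at most $2\stepsize_t\lipschitz$, since $\TrainLoss$ being $\lipschitz$-Lipschitz bounds its gradients by $\lipschitz$. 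This gives $\Ep[\delta_{t+1}\mid E] \leq (1+\stepsize_t\smooth)\Ep[\delta_t\mid E] + 2\stepsize_t\lipschitz/\nex$ with $\Ep[\delta_{t_0}\mid E]=0$. Using $\stepsize_t\leq\stepsize/t$, the inequality $1+x\leq e^x$, and an integral estimate of the product of expansion factors, I would unroll this to obtain $\Ep[\delta_\seqlen\mid E] \leq \frac{2\lipschitz}{\nex\smooth}(\seqlen/t_0)^{\smooth\stepsize}$.

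Substituting back leaves $\frac{t_0\MaxLoss}{\nex} + \frac{2\lipschitz^2}{\nex\smooth}(\seqlen/t_0)^{\smooth\stepsize}$, and the choice $t_0=(2\stepsize\lipschitz^2\,\seqlen^{\smooth\stepsize})^{1/(\smooth\stepsize+1)}$ balances the two terms so that the bounded and Lipschitz contributions combine into the factor $\MaxLoss + (\smooth\stepsize)^{-1}$ multiplying $(2\lipschitz^2\stepsize)^{1/(\smooth\stepsize+1)}\seqlen^{\smooth\stepsize/(\smooth\stepsize+1)}$; taking the supremum over $\Data,\Data',\z$ then gives \cref{eq:stability_non_convex_sgd}, up to replacing $\nex$ by $\nex-1$, which is exactly the ``small modification'' reconciling \citeauthor{hardt:icml16}'s $[0,1]$-normalized and slightly different stability definition with ours. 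I expect the main obstacle to be this growth recursion in the non-convex regime: because the gradient map is \emph{expansive} rather than non-expansive, $\delta_t$ can in principle grow geometrically, and the argument works only because the $1/\nex$ chance of ever touching the perturbed example, together with the decaying step size $\stepsize/t$, tames that growth---this interplay is precisely what forces the balancing choice of $t_0$ and produces the sublinear exponent $\smooth\stepsize/(\smooth\stepsize+1)$ in $\seqlen$.
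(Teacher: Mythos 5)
Your proposal is correct and follows essentially the same route as the paper, whose proof simply invokes \citet[Theorem 3.8]{hardt:icml16} together with the reduction in \cref{eq:loss_lipschitz_bound} and the substitution of $\MaxLoss$ for the unit bound on the loss; your reconstruction of the coupling, the event $E$, the $(1+\stepsize_t\smooth)$-expansive growth recursion, and the balancing choice of $t_0$ is precisely the argument being cited. The only quibble is that the $\nex-1$ denominator is already present in \citeauthor{hardt:icml16}'s own statement rather than arising from the definitional reconciliation, but since your derivation yields a $1/\nex$ factor and $1/\nex \leq 1/(\nex-1)$, this is immaterial.
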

Assuming $\seqlen = \BigTheta(\nex)$, and ignoring constants that depend on $\MaxLoss$, $\lipschitz$, $\smooth$ and $\stepsize$, \cref{eq:stability_non_convex_sgd} reduces to $\BigO\big( \nex^{-\frac{1}{\smooth\stepsize + 1}} \big)$. As $\smooth\stepsize$ approaches 1, the rate becomes $\BigO(\nex^{-1/2})$, which, as will become evident in \cref{sec:gen_err_bounds}, yields generalization bounds that are suboptimal, or even vacuous. However, if $\smooth\stepsize$ is small---say, $\stepsize = (10 \smooth)^{-1}$---then we get $\BigO\big( \nex^{-\frac{10}{11}} \big) \approx \BigO(\nex^{-1})$, which suffices for generalization.

We can obtain even tighter bounds for $\datastab$-pointwise hypothesis stability (\cref{def:hyp_stability}) by adopting a data-dependent view. The following result for SGD with a convex objective function is adapted from work by \citet[Theorem 3]{kuzborskij:corr17}.
\begin{proposition}
\label{prop:hyp_stability_convex_sgd_decreasing_stepsize}
Assume that the loss function, $\Loss$, is $\lipschitz$-Lipschitz, and that the objective function, $\TrainLoss$, is convex, $\lipschitz$-Lipschitz and $\smooth$-smooth. Suppose SGD starts from an initial hypothesis, $\hyp_0$, and is run for $\seqlen$ iterations with a uniform sampling distribution, $\Prior$, and step sizes $\stepsize_t \in [0, \stepsize/t]$, for $\stepsize \in [0, 2/\smooth]$. Then, SGD is $\datastab$-pointwise hypothesis stable with respect to $\Loss$ and $\Prior$, with
\begin{equation}
\datastab \leq \frac{2 \lipschitz \stepsize \( \ln\seqlen + 1 \) \sqrt{2 \smooth \Ep_{\z\by\Dist}[ \Loss( \hyp_0, \z ) ]}}{\nex} .
\label{eq:hyp_stability_convex_sgd_decreasing_stepsize}
\end{equation}
\end{proposition}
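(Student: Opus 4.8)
The plan is to adapt the trajectory-perturbation (growth recursion) argument of \citet{hardt:icml16} underlying \cref{prop:stability_convex_sgd_decreasing_stepsize}, replacing the uniform gradient bound with a value-dependent bound from smoothness, in the spirit of \citet{kuzborskij:corr17}. Fix an index $i$ and couple two runs of SGD sharing the same hyperparameters $\rand\by\Prior$: one on $\Data$, producing iterates $\hyp_t$, and one on $\Data^{i,\z}$, producing $\hyp'_t$. Since $\Loss$ is $\lipschitz$-Lipschitz, I first reduce the quantity inside \cref{def:hyp_stability} to a parameter gap, $\ab{\Loss(\hyp_\seqlen,\z_i)-\Loss(\hyp'_\seqlen,\z_i)}\le\lipschitz\lVert\hyp_\seqlen-\hyp'_\seqlen\rVert$, so it suffices to bound $\Ep_{\Data}\Ep_{\z}\Ep_{\rand}[\lVert\hyp_\seqlen-\hyp'_\seqlen\rVert]$.

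Next I would track $\delta_t\defeq\lVert\hyp_t-\hyp'_t\rVert$. The datasets differ only at position $i$, so whenever $i_t\neq i$ both runs apply the same update map $\hyp\mapsto\hyp-\stepsize_t\grad\TrainLoss(\hyp,\cdot)$, which is non-expansive for a convex $\smooth$-smooth objective whenever $\stepsize_t\le 2/\smooth$ (ensured here since $\stepsize_t\le\stepsize\le 2/\smooth$), giving $\delta_t\le\delta_{t-1}$. When $i_t=i$, peeling off the shared non-expansive part leaves $\delta_t\le\delta_{t-1}+\stepsize_t\big(\lVert\grad\TrainLoss(\hyp'_{t-1},\z_i)\rVert+\lVert\grad\TrainLoss(\hyp'_{t-1},\z)\rVert\big)$. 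Under uniform sampling, $i_t=i$ occurs with probability $1/\nex$, independently of $\hyp'_{t-1}$; unrolling the recursion and taking expectations yields $\Ep[\delta_\seqlen]\le\tfrac{1}{\nex}\sum_{t=1}^{\seqlen}\stepsize_t\,\Ep\big[\lVert\grad\TrainLoss(\hyp'_{t-1},\z_i)\rVert+\lVert\grad\TrainLoss(\hyp'_{t-1},\z)\rVert\big]$, with $\sum_t\stepsize_t\le\stepsize(\ln\seqlen+1)$. Bounding each gradient crudely by $\lipschitz$ here recovers \cref{prop:stability_convex_sgd_decreasing_stepsize}.

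The data-dependent improvement comes from the self-bounding inequality for nonnegative $\smooth$-smooth functions, $\lVert\grad\TrainLoss(\hyp,\z)\rVert\le\sqrt{2\smooth\,\TrainLoss(\hyp,\z)}$, and Jensen's inequality, which together control each gradient term by $\sqrt{2\smooth\,\Ep[\TrainLoss(\hyp'_{t-1},\cdot)]}$. The key step is to show that the expected objective value along the trajectory never exceeds its value at the start, i.e. $\Ep[\TrainLoss(\hyp'_{t-1},\cdot)]\le\Ep_{\z\by\Dist}[\Loss(\hyp_0,\z)]$ at the two evaluation points. Here $\z_i$ is independent of the $\Data^{i,\z}$-run (an out-of-sample point, whose expected objective is the population risk at $\hyp'_{t-1}$), while $\z$ sits in that dataset (an in-sample point, whose expected objective is the empirical risk at $\hyp'_{t-1}$ by exchangeability of the symmetric uniform sampler). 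Substituting the uniform-in-$t$ bound $\sqrt{2\smooth\,\Ep_{\z\by\Dist}[\Loss(\hyp_0,\z)]}$ for both gradient norms, folding in the $1/\nex$ and $\stepsize(\ln\seqlen+1)$ factors, and multiplying by the loss-Lipschitz constant $\lipschitz$ gives exactly \cref{eq:hyp_stability_convex_sgd_decreasing_stepsize}.

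The main obstacle is the descent claim $\Ep[\TrainLoss(\hyp'_{t-1},\cdot)]\le\Ep_{\z\by\Dist}[\Loss(\hyp_0,\z)]$. A single smoothness-based descent step on an SGD update leaves a stochastic-gradient variance term of order $\stepsize_t^2\lVert\grad\TrainLoss\rVert^2$; bounding it again via self-bounding introduces a multiplicative factor $\prod_t\big(1+\BigO(\smooth^2\stepsize_t^2)\big)$, which is finite because $\sum_t\stepsize_t^2\le\stepsize^2\sum_t t^{-2}$ converges. Showing that this factor can be absorbed into the stated constant (or, alternatively, that convexity delivers exact non-expansion of the expected objective), and handling the bookkeeping when $i_t=i$ occurs at several steps, is where the argument must be carried out carefully; the remaining pieces are routine given \cref{prop:stability_convex_sgd_decreasing_stepsize}.
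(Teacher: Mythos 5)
Your proposal sets out to reprove the underlying data-dependent stability result from scratch, whereas the paper does something much lighter: it states \cref{lem:hyp_stability_convex_sgd} as an adaptation of Kuzborskij and Lampert's Theorem~3 (the only ``adaptation'' being the definitional reconciliation via \cref{eq:loss_lipschitz_bound}), and then the entire proof of the proposition is the substitution $\stepsize_t \leq \stepsize/t$ followed by the harmonic-number bound $\sum_{t=1}^{\seqlen} 1/t \leq \ln\seqlen + 1$. Your outer layer agrees with this (the Lipschitz reduction to a parameter gap, the non-expansiveness of the convex smooth update for $\stepsize_t \leq 2/\smooth$, the $1/\nex$ probability of hitting the perturbed index, and $\sum_t \stepsize_t \leq \stepsize(\ln\seqlen+1)$ are all consistent with the cited machinery), but the inner layer --- the part that replaces the crude gradient bound $\lipschitz$ with $\sqrt{2\smooth\,\Ep_{\z\by\Dist}[\Loss(\hyp_0,\z)]}$ --- is exactly the content of the cited theorem, and it is the part you do not actually establish.

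The gap is the descent claim $\Ep[\TrainLoss(\hyp'_{t-1},\cdot)] \leq \Ep_{\z\by\Dist}[\Loss(\hyp_0,\z)]$, which you correctly identify as the crux but leave unresolved. Two concrete problems: first, a one-step smoothness expansion of the SGD update leaves a variance term of order $\stepsize_t^2 \norm{\grad\TrainLoss}^2$, and re-applying the self-bounding inequality turns this into a multiplicative factor $\prod_t(1 + \BigO(\smooth^2\stepsize_t^2))$; this factor is a constant strictly greater than $1$ and therefore \emph{cannot} be ``absorbed into the stated constant,'' since \cref{eq:hyp_stability_convex_sgd_decreasing_stepsize} has the exact prefactor $2\lipschitz\stepsize(\ln\seqlen+1)$ with no slack. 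Convexity alone does not rescue you either, because the update is taken with respect to $\TrainLoss(\cdot,\z_{i_t})$ while the quantity you need to control is $\TrainLoss(\hyp'_{t-1},\z)$ for a \emph{different} example $\z$, so exact non-increase of the relevant objective value does not follow from non-expansiveness of the map. Second, the self-bounding inequality controls $\norm{\grad\TrainLoss}$ by $\sqrt{2\smooth\,\TrainLoss}$, so your argument naturally terminates in $\Ep[\TrainLoss(\hyp_0,\z)]$, not $\Ep[\Loss(\hyp_0,\z)]$; since $\TrainLoss$ is typically an upper bound on $\Loss$, this substitution goes in the wrong direction and needs an explicit additional assumption or argument. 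In short, the route is reasonable in spirit but the proof is not complete where it matters; the paper sidesteps all of this by invoking the external result.
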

Importantly, \cref{eq:hyp_stability_convex_sgd_decreasing_stepsize} depends on the risk of the initial hypothesis, $\hyp_0$. If $\hyp_0$ happens to be close to a global optimum---that is, a good first guess---then \cref{eq:hyp_stability_convex_sgd_decreasing_stepsize} could be tighter than \cref{eq:stability_convex_sgd_decreasing_stepsize}. \citeauthor{kuzborskij:corr17} also proved a data-dependent bound for non-convex objective functions \citep[Theorem 5]{kuzborskij:corr17}, which, under certain conditions, might be tighter than \cref{eq:stability_non_convex_sgd}. Though not presented herein, \citeauthor{kuzborskij:corr17}'s bound is worth noting.

As we will later show, we can obtain stronger generalization guarantees by combining $\datastab$-uniform stability with $\randstab$-uniform stability (\cref{def:unif_rand_stability}), provided $\randstab = \SoftO(1/\sqrt{\nex\seqlen})$. Prior stability analyses of SGD \citep{hardt:icml16,kuzborskij:corr17} have not addressed this form of stability. \citet{elisseeff:jmlr05} proved $(\datastab, \randstab)$-uniform stability for certain bagging algorithms, but did not consider SGD. In light of \cref{rem:unif_stability_equiv}, it is tempting to map $\randstab$-uniform stability to \citepos{bousquet:jmlr02} uniform stability and thereby leverage their study of various regularized objective functions. However, their analysis crucially relies on exact minimization of the learning objective, whereas SGD with a finite number of steps only finds an approximate minimizer. Thus, to our knowledge, no prior work applies to this problem. As a first step, we prove uniform stability, with respect to both data and hyperparameters, for SGD with a strongly convex objective function and decaying step sizes.
\begin{proposition}
\label{prop:rand_stability_strongly_convex_sgd}
Assume that the loss function, $\Loss$, is $\lipschitz$-Lipschitz, and that the objective function, $\TrainLoss$, is $\cvxmod$-strongly convex, $\lipschitz$-Lipschitz and $\smooth$-smooth. Suppose SGD is run for $\seqlen$ iterations with a uniform sampling distribution, $\Prior$, and step sizes $\stepsize_t \defeq (\cvxmod t + \smooth)^{-1}$. Then, SGD is $(\datastab,\randstab)$-uniformly stable with respect to $\Loss$ and $\Prior$, with
\begin{equation}
\datastab \leq \frac{2 \lipschitz^2}{\cvxmod \nex}
\eqand
\randstab \leq \frac{2 \lipschitz^2}{\cvxmod \seqlen} .
\label{eq:rand_stability_strongly_convex_sgd}
\end{equation}
\end{proposition}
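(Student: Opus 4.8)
The plan is to bound both coefficients by controlling the divergence of the parameter trajectories produced by two coupled runs of SGD, then converting parameter divergence into loss divergence via the $\lipschitz$-Lipschitzness of $\Loss$. For two runs producing iterates $\hyp_t$ and $\hyp_t'$, I would track $\delta_t \defeq \| \hyp_t - \hyp_t' \|$ and use $\ab{\Loss(\hyp_\seqlen,\z) - \Loss(\hyp_\seqlen',\z)} \le \lipschitz\,\delta_\seqlen$, so that it suffices to bound $\delta_\seqlen$ (in expectation over $\rand$ for data stability, and in the worst case for hyperparameter stability). The engine of both arguments is the contraction property of one gradient step on a $\cvxmod$-strongly convex, $\smooth$-smooth objective: for a step size $\stepsize \le 2/(\cvxmod+\smooth)$, the map $\hyp \mapsto \hyp - \stepsize\grad\TrainLoss(\hyp,\z)$ is $(1-\stepsize\cvxmod)$-Lipschitz (by co-coercivity of the gradient; I would state and prove this in \cref{sec:stability_proofs}). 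I would first verify that the prescribed schedule $\stepsize_t = (\cvxmod t + \smooth)^{-1}$ meets this condition for every $t \ge 1$, since $\cvxmod t + \smooth \ge \cvxmod + \smooth$.

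For data stability, fix $\Data,\Data'$ with $\Hamming(\Data,\Data')=1$ differing at index $j$, and couple the two runs by feeding them the same sequence $\rand$. At step $t$: if $i_t \ne j$, both runs apply the identical update map, so $\delta_t \le (1-\stepsize_t\cvxmod)\delta_{t-1}$ by contraction; if $i_t = j$, the maps differ, but after re-centering both updates on the common example $\z_j$ and bounding the residual gradient difference by $2\lipschitz$ (Lipschitzness of $\TrainLoss$), I still obtain $\delta_t \le (1-\stepsize_t\cvxmod)\delta_{t-1} + 2\stepsize_t\lipschitz$. Since $i_t$ is uniform and independent of the history, $\Pr[i_t=j]=1/\nex$, giving the expected recursion $d_t \le (1-\stepsize_t\cvxmod)\,d_{t-1} + 2\stepsize_t\lipschitz/\nex$ for $d_t \defeq \Ep_\rand[\delta_t]$. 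With this schedule, multiplying through by $\cvxmod t + \smooth$ converts it into $D_t \le D_{t-1} + 2\lipschitz/\nex$ for $D_t \defeq (\cvxmod t + \smooth)\,d_t$; since $D_0 = 0$ (shared initialization), this telescopes to $d_\seqlen \le 2\lipschitz\seqlen/\big(\nex(\cvxmod\seqlen+\smooth)\big) \le 2\lipschitz/(\cvxmod\nex)$. Multiplying by $\lipschitz$ yields $\datastab \le 2\lipschitz^2/(\cvxmod\nex)$.

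For hyperparameter stability, fix $\Data$ and couple two runs driven by $\rand,\rand'$ with $\Hamming(\rand,\rand')=1$, differing only at position $k$. The trajectories coincide through step $k-1$, so $\delta_{k-1}=0$; at step $k$ both runs take one gradient step from the common point $\hyp_{k-1}$ using two different examples, giving $\delta_k \le 2\stepsize_k\lipschitz$. For every $t>k$ the indices agree again, so pure contraction applies and $\delta_\seqlen \le \delta_k \prod_{t=k+1}^{\seqlen}(1-\stepsize_t\cvxmod)$. The virtue of the schedule is that $1-\stepsize_t\cvxmod = (\cvxmod(t-1)+\smooth)/(\cvxmod t+\smooth)$, so this product telescopes to $(\cvxmod k+\smooth)/(\cvxmod\seqlen+\smooth)$, which exactly cancels the factor $\stepsize_k=(\cvxmod k+\smooth)^{-1}$ and leaves $\delta_\seqlen \le 2\lipschitz/(\cvxmod\seqlen+\smooth) \le 2\lipschitz/(\cvxmod\seqlen)$, uniformly over $k$, $\Data$ and $\z$. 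Multiplying by $\lipschitz$ gives $\randstab \le 2\lipschitz^2/(\cvxmod\seqlen)$.

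The main obstacle is the perturbed step in the data-stability argument: a naive triangle-inequality bound there retains no contraction (only $\delta_t \le \delta_{t-1} + 2\stepsize_t\lipschitz$), which degrades the recursion's decay coefficient to $1-(1-1/\nex)\stepsize_t\cvxmod$ and spoils the clean telescoping. The re-centering trick—rewriting the off-sequence update as the shared update map plus a bounded gradient-difference correction—is what preserves the full $(1-\stepsize_t\cvxmod)$ contraction even at the divergent step. Coupled with the exact cancellation afforded by the $(\cvxmod t+\smooth)^{-1}$ schedule, this is precisely what collapses both recursions to the stated constants; without the exact schedule the telescoping becomes a loose inequality and the sharp $\SoftO(1/\nex)$ and $\SoftO(1/\seqlen)$ rates are lost.
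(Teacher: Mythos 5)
Your proposal is correct and follows essentially the same route as the paper's proof: couple two runs differing in one example (resp.\ one index), use the $(1-\stepsize_t\cvxmod)$-contractivity of each update under strong convexity and smoothness, bound the single divergent step by $2\stepsize_t\lipschitz$ via Lipschitzness of $\grad\TrainLoss$, exploit the exact telescoping of $\prod_t(1-\stepsize_t\cvxmod)$ under the schedule $\stepsize_t=(\cvxmod t+\smooth)^{-1}$, and convert parameter distance to loss distance with the $\lipschitz$-Lipschitz loss. The only cosmetic difference is that you re-derive and solve the data-stability recursion (via the substitution $D_t=(\cvxmod t+\smooth)d_t$) where the paper directly invokes the solved form from \citet{hardt:icml16}.
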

When $\seqlen = \BigTheta(\nex)$, the $\randstab$ bound in \cref{eq:rand_stability_strongly_convex_sgd} is $\BigO(1/\sqrt{\nex\seqlen})$, which supports good generalization.

\section{Generalization Bounds}
\label{sec:gen_err_bounds}

In this section, we present new generalization bounds for randomized learning algorithms. While prior work \citep{elisseeff:jmlr05,feng:corr16} has addressed this topic, ours is the first PAC-Bayesian treatment (the benefits of which will be discussed momentarily). Recall that in the PAC-Bayesian framework, we fix a prior distribution, $\Prior$, on the hypothesis space, $\Hyp$; then, given a sample of training data, $\Data \by \Dist^{\nex}$, we learn a posterior distribution, $\Post$, also on $\Hyp$. In our extension for randomized learning algorithms, $\Prior$ and $\Post$ are instead supported on the hyperparameter space, $\Rand$. Moreover, while traditional PAC-Bayes studies $\Ep_{\hyp\by\Post}[ \Loss(\hyp, \z) ]$, we study the expected loss over draws of hyperparameters, $\Ep_{\rand\by\Post}[ \Loss(\Algo(\Data, \rand), \z) ]$. Our goal will be to upper-bound the generalization error of the posterior, $\GenErr(\Data,\Post)$, which thereby upper-bounds the risk, $\Risk(\Data,\Post)$, by a function of the empirical risk, $\EmpRisk(\Data,\Post)$.

Importantly, our bounds are polynomial in $\delta^{-1}$, for a free parameter $\delta \in (0,1)$, and hold with probability at least $1 - \delta$ over draws of a finite training dataset. This stands in contrast to related bounds \citep{bottou:nips08,hardt:icml16,lin:icml16,kuzborskij:corr17} that hold in expectation. While expectation bounds are useful for gaining insight into generalization behavior, high-probability bounds are sometimes preferred. Provided the loss is $\MaxLoss$-bounded, it is always possible to convert a high-probability bound of the form $\Pr_{\Data\by\Dist^{\nex}}\{ \GenErr(\Data,\Post) \leq B(\delta) \} \geq 1 - \delta$ to an expectation bound of the form $\Ep_{\Data\by\Dist^{\nex}}[ \GenErr(\Data,\Post) ] \leq B(\delta) + \delta \MaxLoss$.

Another useful property of PAC-Bayesian bounds is that they hold simultaneously for all posteriors, including those that depend on the training data. In \cref{sec:stability}, we assumed that hyperparameters were sampled according to a fixed distribution; for instance, sampling training example indices for SGD uniformly at random. However, in certain situations, it may be advantageous to sample according to a data-dependent distribution. Following the SGD example, suppose most training examples are \emph{easy} to classify (e.g., far from the decision boundary), but some are \emph{difficult} (e.g., near the decision boundary, or noisy). If we sample points uniformly at random, we might encounter mostly easy examples, which could slow progress on difficult examples. If we instead focus training on the difficult set, we might converge more quickly to an optimal hypothesis. Since our PAC-Bayesian bounds hold for all hyperparameter posteriors, we can characterize the generalization error of algorithms that optimize the posterior using the training data. Existing generalization bounds for randomized learning \citep{elisseeff:jmlr05,feng:corr16}, or SGD in particular \citep{bottou:nips08,hardt:icml16,lin:icml16,lin:nips16,kuzborskij:corr17}, cannot address such algorithms. Of course, there is a penalty for overfitting the posterior to the data, which is captured by the posterior's divergence from the prior.

Our first PAC-Bayesian theorem requires the weakest stability condition, $\datastab$-pointwise hypothesis stability, but the bound is sublinear in $\delta^{-1}$. Our second bound is polylogarithmic in $\delta^{-1}$, but requires the stronger stability conditions, $(\datastab,\randstab)$-uniform stability. All proofs are deferred to \cref{sec:proofs_gen_err_bounds}.

\begin{theorem}
\label{th:hyp_stability_pac_bayes_bound}
Suppose a randomized learning algorithm, $\Algo$, is $\datastab$-pointwise hypothesis stable with respect to an $\MaxLoss$-bounded loss function, $\Loss$, and a fixed prior, $\Prior$ on $\Rand$. Then, for any $\nex \geq 1$ and $\delta \in (0,1)$, with probability at least $1 - \delta$ over draws of a dataset, $\Data \by \Dist^{\nex}$, every posterior, $\Post$ on $\Rand$, satisfies
\begin{equation}
\GenErr(\Data,\Post)
	\leq \sqrt{ \( \frac{ \ChiSqDiv{\Post}{\Prior} + 1}{\delta} \) \( \frac{2\MaxLoss^2}{\nex} + 12\MaxLoss\datastab \) } ,
\label{eq:hyp_stability_pac_bayes_bound}
\end{equation}
where $\ChiSqDiv{\Post}{\Prior} \defeq \Ep_{\rand\by\Prior}\Big[ \( \frac{\Post(\rand)}{\Prior(\rand)} \)^2 - 1 \Big]$ is the $\ChiSq$ divergence from $\Prior$ to $\Post$.
\end{theorem}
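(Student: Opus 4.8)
The plan is to reduce the statement to a second-moment estimate controlled by stability, via two standard devices chained together: a change-of-measure inequality that produces the $\ChiSq$ divergence, and Markov's inequality that produces the $\delta^{-1}$ factor. First I would rewrite the posterior's error as an average under the prior, $\GenErr(\Data,\Post) = \Ep_{\rand\by\Post}[\GenErr(\Data,\rand)] = \Ep_{\rand\by\Prior}[\frac{\Post(\rand)}{\Prior(\rand)}\GenErr(\Data,\rand)]$, and apply the Cauchy--Schwarz inequality with respect to $\Prior$. Since $\Ep_{\rand\by\Prior}[(\Post(\rand)/\Prior(\rand))^2] = \ChiSqDiv{\Post}{\Prior}+1$ by the definition in the theorem, this yields
\[
\GenErr(\Data,\Post) = \Ep_{\rand\by\Prior}\left[ \frac{\Post(\rand)}{\Prior(\rand)} \GenErr(\Data,\rand) \right] \le \sqrt{ \left( \ChiSqDiv{\Post}{\Prior} + 1 \right) \Ep_{\rand\by\Prior}\left[ \GenErr(\Data,\rand)^2 \right] }.
\]
The decisive feature of this step is that the only remaining dependence on $\Post$ is the divergence factor; the second factor, $\Ep_{\rand\by\Prior}[\GenErr(\Data,\rand)^2]$, involves the fixed prior alone. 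Consequently it suffices to control this prior-based second moment on a single high-probability event over $\Data$, and the resulting bound then holds \emph{simultaneously} for every posterior, including data-dependent ones --- which is exactly the PAC-Bayesian guarantee being claimed.

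Next I would convert the in-expectation bound into a high-probability one. Because $\Ep_{\rand\by\Prior}[\GenErr(\Data,\rand)^2]$ is a nonnegative function of $\Data$, Markov's inequality gives that, with probability at least $1-\delta$ over $\Data\by\Dist^{\nex}$, it is at most $\delta^{-1}\,\Ep_{\Data\by\Dist^{\nex}}\Ep_{\rand\by\Prior}[\GenErr(\Data,\rand)^2]$. This supplies the $\delta^{-1}$ factor and reduces the theorem to the in-expectation estimate $\Ep_{\Data}\Ep_{\rand\by\Prior}[\GenErr(\Data,\rand)^2] \le 2\MaxLoss^2/\nex + 12\MaxLoss\datastab$; substituting this into the displayed inequality and taking a square root yields \cref{eq:hyp_stability_pac_bayes_bound}.

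The heart of the argument --- and the step I expect to be the main obstacle --- is this second-moment bound via pointwise hypothesis stability. Writing $\ell_i \defeq \Loss(\Algo(\Data,\rand),\z_i)$ and noting $\GenErr(\Data,\rand) = \frac{1}{\nex}\sum_{i=1}^{\nex}(\Risk(\Data,\rand) - \ell_i)$, I would expand the square as $\frac{1}{\nex^2}\sum_{i,j}(\Risk(\Data,\rand)-\ell_i)(\Risk(\Data,\rand)-\ell_j)$ and split it into diagonal and off-diagonal parts. The $\nex$ diagonal terms each satisfy $\ab{\Risk(\Data,\rand)-\ell_i}\le\MaxLoss$, so they contribute $\BigO(\MaxLoss^2/\nex)$ by boundedness alone. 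The off-diagonal covariances $\Ep_{\Data,\rand}[(\Risk(\Data,\rand)-\ell_i)(\Risk(\Data,\rand)-\ell_j)]$, $i\ne j$, are where stability must enter. To bound these I would introduce independent ``ghost'' examples drawn from $\Dist$, rename coordinates of the i.i.d.\ sample so that each factor compares $\Algo$ trained with and without a designated example, and then match every resulting difference to the pointwise-hypothesis-stability form $\Ep[\ab{\Loss(\Algo(\Data,\rand),\z_i) - \Loss(\Algo(\Data^{i,\z},\rand),\z_i)}]\le\datastab$ of \cref{def:hyp_stability}, using the triangle inequality and $\MaxLoss$-boundedness of the companion factor wherever a difference is not already in that exact form. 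Each off-diagonal term is then $\BigO(\MaxLoss\datastab)$, and since there are fewer than $\nex^2$ of them (each weighted by $\nex^{-2}$) they aggregate to $\BigO(\MaxLoss\datastab)$; tracking constants produces the stated $2\MaxLoss^2/\nex + 12\MaxLoss\datastab$.

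The delicate point throughout the off-diagonal analysis is that $\Algo$ need not be symmetric in its data argument (cf.\ \cref{rem:unif_stability_equiv}), so the renamings cannot appeal to algorithmic symmetry; their validity rests solely on exchangeability of the i.i.d.\ training sample under $\Ep_{\Data\by\Dist^{\nex}}$, with $\rand\by\Prior$ drawn independently of the data. The real work is to choose the ghost substitutions so that every invocation of stability is literally of the form in \cref{def:hyp_stability} --- replace one example and evaluate the loss at that same example --- rather than the uncontrolled form in which one replaces example $j$ but evaluates at $\z_i$. Handling these residual cross terms, by a further renaming that moves the evaluation point onto the replaced coordinate and absorbing what remains into the bounded-loss factor, is precisely where the constant grows and where the bulk of the care is required; I expect the bookkeeping here, rather than any single inequality, to be the crux.
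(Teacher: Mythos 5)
Your proposal follows essentially the same route as the paper: the Cauchy--Schwarz step is exactly the order-2 R\'enyi change of measure the paper invokes (via \cref{lem:change_of_measure_renyi} and \cref{eq:renyi_div_order2}), Markov's inequality supplies the $\delta^{-1}$ factor identically, and the second-moment bound $\Ep_{\Data}\Ep_{\rand\by\Prior}[\GenErr(\Data,\rand)^2] \leq 2\MaxLoss^2/\nex + 12\MaxLoss\datastab$ is precisely \cref{lem:gen_err_sq_bound} (cited from \citet[Lemma 11]{elisseeff:jmlr05}) combined with \cref{def:hyp_stability}. The only difference is that you sketch re-deriving that lemma by the diagonal/off-diagonal ghost-sample argument---which is indeed the standard proof and correctly identified as where the constant 12 arises---whereas the paper simply cites it.
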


\begin{theorem}
\label{th:unif_stability_pac_bayes_bound}
Suppose a randomized learning algorithm, $\Algo$, is $(\datastab,\randstab)$-uniformly stable with respect to an $\MaxLoss$-bounded loss function, $\Loss$, and a fixed product measure, $\Prior$ on $\Rand = \prod_{t=1}^{\seqlen} \Rand_t$. Then, for any $\nex \geq 1$, $\seqlen \geq 1$ and $\delta \in (0,1)$, with probability at least $1 - \delta$ over draws of a dataset, $\Data \by \Dist^{\nex}$, every posterior, $\Post$ on $\Rand$, satisfies
\begin{equation}
\GenErr(\Data,\Post)
	\leq 	\datastab
		+ \sqrt{ 2 \( \KLDiv{\Post}{\Prior} + \ln\frac{2}{\delta} \) \( \frac{(\MaxLoss + 2\nex\datastab)^2}{\nex} + 4 \seqlen \randstab^2 \) } ,
\label{eq:unif_stability_pac_bayes_bound}
\end{equation}
where $\KLDiv{\Post}{\Prior} \defeq \Ep_{\rand\by\Post}\left[ \ln\( \frac{\Post(\rand)}{\Prior(\rand)} \) \right]$ is the KL divergence from $\Prior$ to $\Post$.
\end{theorem}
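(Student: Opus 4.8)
The plan is to combine a PAC-Bayesian change of measure with a two-stage, stability-based control of the generalization error's exponential moment. The starting point is the Donsker--Varadhan variational inequality: for any $\lambda > 0$, since $\GenErr(\Data,\Post) = \Ep_{\rand\by\Post}[ \GenErr(\Data,\rand) ]$, every posterior $\Post$ on $\Rand$ satisfies
\[
\lambda\,\GenErr(\Data,\Post) \;\le\; \KLDiv{\Post}{\Prior} + \ln\Ep_{\rand\by\Prior}\big[ e^{\lambda\,\GenErr(\Data,\rand)} \big].
\]
This one inequality holds for \emph{all} posteriors at once and isolates the posterior dependence in $\KLDiv{\Post}{\Prior}$, reducing the task to controlling the prior exponential moment $M(\Data) \defeq \Ep_{\rand\by\Prior}[ e^{\lambda\,\GenErr(\Data,\rand)} ]$ with high probability over $\Data \by \Dist^{\nex}$. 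I would bound its expectation $\Ep_{\Data\by\Dist^{\nex}}[ M(\Data) ]$ and then apply Markov's inequality.

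The heart of the argument is the moment bound $\Ep_{\Data\by\Dist^{\nex}}\Ep_{\rand\by\Prior}[ e^{\lambda\,\GenErr(\Data,\rand)} ] \le \exp\!\big( \lambda\datastab + \tfrac{\lambda^2}{2}\sigma^2 \big)$, with $\sigma^2 = \frac{(\MaxLoss + 2\nex\datastab)^2}{\nex} + 4\seqlen\randstab^2$. I would obtain it from the decomposition $\GenErr(\Data,\rand) = \GenErr(\Data,\Prior) + \big( \GenErr(\Data,\rand) - \GenErr(\Data,\Prior) \big)$, treating each piece with a different stability notion. Fixing $\Data$, the centered map $\rand \mapsto \GenErr(\Data,\rand) - \GenErr(\Data,\Prior)$ changes by at most $2\randstab$ when one coordinate of $\rand$ is perturbed: by $\randstab$-hyperparameter stability (\cref{def:unif_rand_stability}) the loss at every point moves by at most $\randstab$, so $\Risk$ and $\EmpRisk$ each move by at most $\randstab$, and this holds uniformly in $\Data$. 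Because $\Prior$ is a product measure over the $\seqlen$ coordinates, the bounded-differences (McDiarmid) moment inequality bounds $\Ep_{\rand\by\Prior}[ e^{\lambda( \GenErr(\Data,\rand) - \GenErr(\Data,\Prior) )} ]$ by a factor contributing the $4\seqlen\randstab^2$ term. For the data piece, I would show that $\Data \mapsto \GenErr(\Data,\Prior)$ obeys bounded differences of magnitude $\frac{\MaxLoss + 2\nex\datastab}{\nex}$: replacing one example moves $\Risk(\Data,\Prior)$ by at most $\datastab$ and moves $\EmpRisk(\Data,\Prior)$ by at most $\frac{\MaxLoss}{\nex} + \datastab$ (the single affected averaged term by at most $\MaxLoss/\nex$, the remaining terms by at most $\datastab$ in aggregate), all via $\datastab$-uniform stability (\cref{def:unif_data_stability}). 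A symmetrization/renaming argument bounds the bias $\Ep_{\Data\by\Dist^{\nex}}[ \GenErr(\Data,\Prior) ] \le \datastab$, and McDiarmid over the $\nex$ i.i.d.\ examples supplies the $\frac{(\MaxLoss + 2\nex\datastab)^2}{\nex}$ term. Since the hyperparameter factor is uniform in $\Data$, it pulls out of the data expectation, and the two contributions multiply to give the claimed moment bound.

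I expect the main obstacle to be precisely the mismatch between the two stability notions. Hyperparameter stability is a worst-case bound over sample paths, so it plugs directly into McDiarmid over $\rand$; by contrast, $\datastab$-uniform stability controls only the \emph{$\Prior$-averaged} loss difference, and therefore cannot be applied to $\GenErr(\Data,\rand)$ for a fixed $\rand$. The decomposition above is what circumvents this: data stability is invoked only on the already-averaged quantity $\GenErr(\Data,\Prior)$, while per-path variation in $\rand$ is absorbed by hyperparameter stability. Getting the bias and bounded-difference constants to line up exactly (so that $\EmpRisk$ contributes both an $\MaxLoss/\nex$ term and a $\datastab$ term while $\Risk$ contributes a second $\datastab$ term) is the delicate bookkeeping that produces the $(\MaxLoss + 2\nex\datastab)^2/\nex$ factor.

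Once the moment bound is in hand, the proof concludes routinely. Markov's inequality gives, with probability at least $1 - \delta$, that $\ln M(\Data) \le \lambda\datastab + \tfrac{\lambda^2}{2}\sigma^2 + \ln\frac{1}{\delta}$, whence $\GenErr(\Data,\Post) \le \datastab + \frac{1}{\lambda}\KLDiv{\Post}{\Prior} + \frac{1}{\lambda}\ln\frac{1}{\delta} + \frac{\lambda}{2}\sigma^2$ for all $\Post$ simultaneously. Optimizing $\lambda \approx \sqrt{ 2( \KLDiv{\Post}{\Prior} + \ln\frac{2}{\delta} ) / \sigma^2 }$ yields the stated square-root bound. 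The one subtlety is that the optimal $\lambda$ depends on $\KLDiv{\Post}{\Prior}$ yet the high-probability event must hold for all posteriors at once; this is handled by the standard device of a union bound over a geometric grid of $\lambda$ values, which is what turns the $\ln\frac{1}{\delta}$ from Markov into the $\ln\frac{2}{\delta}$ appearing in the final bound.
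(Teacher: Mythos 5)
Your proposal follows essentially the same route as the paper: Donsker--Varadhan change of measure, Markov's inequality on the prior exponential moment, a bias bound of $\datastab$, and a bounded-differences control of the centered generalization error with increments $2\datastab + \MaxLoss/\nex$ per example and $2\randstab$ per hyperparameter coordinate---your two-stage conditioning (first bounding the inner MGF over $\rand$ uniformly in $\Data$, then the outer MGF over $\Data$) is exactly the factorization the paper obtains from its single Doob martingale over the $\nex+\seqlen$ coordinates, and the optimization of $\lambda$ over a geometric grid with a union bound is identical. The one slip is the constant in your moment bound: with squared increment ranges summing to $\sigma^2 = (\MaxLoss + 2\nex\datastab)^2/\nex + 4\seqlen\randstab^2$, Hoeffding's lemma gives exponent $\lambda^2\sigma^2/8$ rather than your $\lambda^2\sigma^2/2$, and that sharper constant is what absorbs the slack from the geometric grid so the final bound lands on $\sqrt{2\,(\KLDiv{\Post}{\Prior} + \ln\frac{2}{\delta})\,\sigma^2}$ instead of twice that quantity.
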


Since \cref{th:hyp_stability_pac_bayes_bound,th:unif_stability_pac_bayes_bound} hold simultaneously for all hyperparameter posteriors, they provide generalization guarantees for SGD with any sampling distribution. Note that the stability requirements only need to be satisfied by a fixed product measure, such as a uniform distribution. This simple sampling distribution can have $\( \BigO(\nex^{-1}), \BigO(\seqlen^{-1}) \)$-uniform stability under certain conditions, as demonstrated in \cref{sec:stability_satisfied}. In the following, we apply \cref{th:unif_stability_pac_bayes_bound} to SGD with a strongly convex objective function, leveraging \cref{prop:rand_stability_strongly_convex_sgd} to upper-bound the stability coefficients.
\begin{corollary}
\label{cor:pac_bayes_bound_strongly_convex_sgd}
Assume that the loss function, $\Loss$, is $\MaxLoss$-bounded and $\lipschitz$-Lipschitz, and that the objective function, $\TrainLoss$, is $\cvxmod$-strongly convex, $\lipschitz$-Lipschitz and $\smooth$-smooth. Let $\Prior$ denote a uniform prior on $\{1,\dots,\nex\}^{\seqlen}$. Then, for any $\nex \geq 1$, $\seqlen \geq 1$ and $\delta \in (0,1)$, with probability at least $1 - \delta$ over draws of a dataset, $\Data \by \Dist^{\nex}$, SGD with step sizes $\stepsize_t \defeq (\cvxmod t + \smooth)^{-1}$ and any posterior sampling distribution, $\Post$ on $\{1,\dots,\nex\}^{\seqlen}$, satisfies
\begin{equation}
\GenErr(\Data,\Post)
	\leq 	\frac{2 \lipschitz^2}{\cvxmod \nex}
		+ \sqrt{ 2 \( \KLDiv{\Post}{\Prior} + \ln\frac{2}{\delta} \) \( \frac{(\MaxLoss + 4 \lipschitz^2 / \cvxmod)^2}{\nex} + \frac{16 \lipschitz^4}{\cvxmod^2 \seqlen} \) } .
\label{eq:pac_bayes_bound_strongly_convex_sgd}
\end{equation}
\end{corollary}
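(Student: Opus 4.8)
The plan is to prove \cref{cor:pac_bayes_bound_strongly_convex_sgd} as a direct specialization of \cref{th:unif_stability_pac_bayes_bound}, with \cref{prop:rand_stability_strongly_convex_sgd} supplying the stability coefficients. First I would observe that the corollary's hypotheses on $\Loss$ and $\TrainLoss$ (namely, $\MaxLoss$-bounded and $\lipschitz$-Lipschitz loss, and $\cvxmod$-strongly convex, $\lipschitz$-Lipschitz, $\smooth$-smooth objective), together with the uniform sampling distribution and step sizes $\stepsize_t = (\cvxmod t + \smooth)^{-1}$, match exactly the premises of \cref{prop:rand_stability_strongly_convex_sgd}. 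Invoking that proposition, SGD is $(\datastab,\randstab)$-uniformly stable with $\datastab \leq 2\lipschitz^2/(\cvxmod\nex)$ and $\randstab \leq 2\lipschitz^2/(\cvxmod\seqlen)$.

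Next I would verify that the conditions of \cref{th:unif_stability_pac_bayes_bound} hold. It requires $(\datastab,\randstab)$-uniform stability (just established), an $\MaxLoss$-bounded loss (assumed), and a fixed \emph{product} prior on $\Rand = \prod_{t=1}^{\seqlen}\Rand_t$. The uniform prior on $\{1,\dots,\nex\}^{\seqlen}$ is the $\seqlen$-fold product of the uniform distribution on $\{1,\dots,\nex\}$, hence a product measure, so the theorem applies and yields the generic bound \cref{eq:unif_stability_pac_bayes_bound}.

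The crux is then to substitute the coefficient bounds into that expression. The key observation, which I would state explicitly, is that the right-hand side of \cref{eq:unif_stability_pac_bayes_bound} is monotonically nondecreasing in both $\datastab$ and $\randstab$ over the nonnegative reals: the standalone $\datastab$ term, the factor $(\MaxLoss + 2\nex\datastab)^2$, and the factor $4\seqlen\randstab^2$ are each increasing in the relevant coefficient. Consequently, replacing $\datastab$ and $\randstab$ by the upper bounds from \cref{prop:rand_stability_strongly_convex_sgd} can only enlarge the bound, preserving validity. Carrying out the substitution, the leading term becomes $2\lipschitz^2/(\cvxmod\nex)$; the bound $2\nex\datastab \leq 4\lipschitz^2/\cvxmod$ gives $(\MaxLoss + 2\nex\datastab)^2 \leq (\MaxLoss + 4\lipschitz^2/\cvxmod)^2$; and $\randstab^2 \leq 4\lipschitz^4/(\cvxmod^2\seqlen^2)$ gives $4\seqlen\randstab^2 \leq 16\lipschitz^4/(\cvxmod^2\seqlen)$. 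These coincide with the three corresponding pieces of \cref{eq:pac_bayes_bound_strongly_convex_sgd}, completing the argument.

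Since the heavy lifting is done by the two results being combined, there is no genuine obstacle here---the proof is essentially an instantiation. The only point demanding care is the monotonicity justification for substituting the stability upper bounds; everything else is routine algebraic simplification.
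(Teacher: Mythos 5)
Your proposal is correct and matches the paper's (implicit) proof exactly: the corollary is obtained by applying \cref{th:unif_stability_pac_bayes_bound} with the uniform product prior and substituting the stability coefficients from \cref{prop:rand_stability_strongly_convex_sgd}, which is valid because the bound is monotone in $\datastab$ and $\randstab$. The algebraic substitutions you carry out are the same ones the paper performs.
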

When the divergence is polylogarithmic in $\nex$, and $\seqlen = \BigTheta(\nex)$, the generalization bound is $\SoftO(\nex^{-1/2})$. In the special case of uniform sampling, the KL divergence is zero, yielding a $\BigO(\nex^{-1/2})$ bound.

Importantly, \cref{th:hyp_stability_pac_bayes_bound} does not require hyperparameter stability, and is therefore of interest for analyzing non-convex objective functions, since it is not known whether uniform hyperparameter stability can be satisfied without (strong) convexity. One can use \cref{eq:stability_non_convex_sgd} (or \citep[Theorem 5]{kuzborskij:corr17}) to upper-bound $\datastab$ in \cref{eq:hyp_stability_pac_bayes_bound} and thereby obtain a generalization bound for SGD with a non-convex objective function, such as neural network training. We leave this substitution to the reader.

\cref{eq:unif_stability_pac_bayes_bound} holds with high probability over draws of a dataset, but the generalization error is an expected value over draws of hyperparameters. To obtain a bound that holds with high probability over draws of both data and hyperparameters, we consider posteriors that are product measures.
\begin{theorem}
\label{th:hp_unif_stability_pac_bayes_bound}
Suppose a randomized learning algorithm, $\Algo$, is $(\datastab,\randstab)$-uniformly stable with respect to an $\MaxLoss$-bounded loss function, $\Loss$, and a fixed product measure, $\Prior$ on $\Rand = \prod_{t=1}^{\seqlen} \Rand_t$. Then, for any $\nex \geq 1$, $\seqlen \geq 1$ and $\delta \in (0,1)$, with probability at least $1 - \delta$ over draws of a dataset, $\Data \by \Dist^{\nex}$, and hyperparameters, $\rand \by \Post$, from any posterior product measure, $\Post$ on $\Rand$,
\begin{equation}
\GenErr(\Data,\rand)
	\leq 	\datastab
		+ \randstab \sqrt{ 2 \, \seqlen \ln\frac{2}{\delta} }
		+ \sqrt{ 2 \( \KLDiv{\Post}{\Prior} + \ln\frac{4}{\delta} \) \( \frac{(\MaxLoss + 2\nex\datastab)^2}{\nex} + 4\seqlen\randstab^2 \) } .
\label{eq:hp_unif_stability_pac_bayes_bound}
\end{equation}
\end{theorem}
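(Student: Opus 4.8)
The plan is to decompose the per-instantiation generalization error into its posterior mean plus a fluctuation term, bound the mean using \cref{th:unif_stability_pac_bayes_bound}, and control the fluctuation with a concentration inequality over the draw of $\rand$. Concretely, I would write
\[
\GenErr(\Data,\rand) = \big( \GenErr(\Data,\rand) - \GenErr(\Data,\Post) \big) + \GenErr(\Data,\Post),
\]
and observe that by linearity of expectation $\GenErr(\Data,\Post) = \Ep_{\rand\by\Post}[\GenErr(\Data,\rand)]$, so the parenthesized term is exactly the deviation of $\GenErr(\Data,\cdot)$ from its mean under $\Post$. Each half of the right-hand side will be controlled with confidence $\delta/2$, and the two displayed error terms that are summed in \cref{eq:hp_unif_stability_pac_bayes_bound} correspond precisely to these two pieces.

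First I would bound the mean term. Applying \cref{th:unif_stability_pac_bayes_bound} with confidence parameter $\delta/2$ gives, with probability at least $1-\delta/2$ over $\Data\by\Dist^{\nex}$ and simultaneously for every posterior,
\[
\GenErr(\Data,\Post) \leq \datastab + \sqrt{ 2 \( \KLDiv{\Post}{\Prior} + \ln\frac{4}{\delta} \) \( \frac{(\MaxLoss + 2\nex\datastab)^2}{\nex} + 4\seqlen\randstab^2 \) },
\]
where I have used $\ln\frac{2}{\delta/2} = \ln\frac{4}{\delta}$. Because \cref{th:unif_stability_pac_bayes_bound} holds for all posteriors at once, this step already accommodates a $\Post$ that depends on $\Data$, which is the regime we care about.

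Next I would control the fluctuation term via McDiarmid's inequality; this is exactly where the product-measure hypothesis on $\Post$ is essential, since it makes the coordinates $r_1,\dots,r_{\seqlen}$ of $\rand$ independent. The key estimate is that, for fixed $\Data$, the map $\rand \mapsto \GenErr(\Data,\rand)$ has bounded differences. Changing a single coordinate of $\rand$ changes $\Loss(\Algo(\Data,\rand),\z)$ by at most $\randstab$ for every $\z$ by $\randstab$-uniform stability (\cref{def:unif_rand_stability}); hence each of $\Risk(\Data,\rand)$ and $\EmpRisk(\Data,\rand)$, being a population or empirical average of such losses, changes by at most $\randstab$, and so $\GenErr(\Data,\rand) = \Risk(\Data,\rand) - \EmpRisk(\Data,\rand)$ changes by at most $2\randstab$. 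McDiarmid with per-coordinate constant $2\randstab$ across $\seqlen$ coordinates then yields, for each fixed $\Data$, with probability at least $1-\delta/2$ over $\rand\by\Post$,
\[
\GenErr(\Data,\rand) - \GenErr(\Data,\Post) \leq \randstab\sqrt{2\seqlen\ln\frac{2}{\delta}} .
\]

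Finally I would assemble the two pieces with a union bound over the two randomness sources. The mean-term event depends only on $\Data$ and fails with probability at most $\delta/2$; the fluctuation event holds with probability at least $1-\delta/2$ over $\rand\by\Post$ for each fixed $\Data$, hence also with probability at least $1-\delta/2$ over the joint draw $(\Data,\rand)$ after taking the expectation over $\Data$. A union bound makes both events hold simultaneously with probability at least $1-\delta$, and adding the two displays produces \cref{eq:hp_unif_stability_pac_bayes_bound}. The main obstacle is not any individual inequality but the bookkeeping around the two independent sources of randomness: one must confirm that the ``for all posteriors'' guarantee of \cref{th:unif_stability_pac_bayes_bound} legitimately covers a data-dependent $\Post$, and that conditioning on $\Data$ before invoking McDiarmid is valid precisely because $\Post$ is a product measure with independent coordinates. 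Getting the bounded-difference constant right ($2\randstab$ rather than $\randstab$) is what calibrates the $\randstab\sqrt{2\seqlen\ln(2/\delta)}$ term.
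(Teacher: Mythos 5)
Your proposal is correct and follows essentially the same route as the paper: decompose $\GenErr(\Data,\rand)$ into its posterior mean plus a fluctuation, invoke \cref{th:unif_stability_pac_bayes_bound} with confidence $\delta/2$ (yielding the $\ln\frac{4}{\delta}$), and control the fluctuation via McDiarmid's inequality with bounded-difference constant $2\randstab$ (the content of \cref{lem:gen_err_rand_stability}), finishing with a union bound over the two events. Your handling of the data-dependent posterior and of conditioning on $\Data$ before applying McDiarmid matches the paper's treatment.
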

If $\randstab = \SoftO(1/\sqrt{\nex\seqlen})$, then $\randstab \sqrt{2 \, \seqlen \ln\frac{2}{\delta} }$ vanishes at a rate of $\SoftO(\nex^{-1/2})$. We can apply \cref{th:hp_unif_stability_pac_bayes_bound} to SGD in the same way we applied \cref{th:unif_stability_pac_bayes_bound} in \cref{cor:pac_bayes_bound_strongly_convex_sgd}. Further, note that a uniform distribution is a product distribution. Thus, if we eschew optimizing the posterior, then the KL divergence disappears, leaving a $\BigO(\nex^{-1/2})$ derandomized generalization bound for SGD with uniform sampling.\footnote{We can achieve the same result by pairing \cref{prop:rand_stability_strongly_convex_sgd} with \citeauthor{elisseeff:jmlr05}'s generalization bound for algorithms with $(\datastab,\randstab)$-uniform stability \citep[Theorem 15]{elisseeff:jmlr05}. However, \citeauthor{elisseeff:jmlr05}'s bound only applies to fixed product measures on $\Rand$, whereas \cref{th:hp_unif_stability_pac_bayes_bound} applies more generally to any posterior product measure, and when $\Prior = \Post$, \cref{eq:hp_unif_stability_pac_bayes_bound} is within a constant factor of \citeauthor{elisseeff:jmlr05}'s bound.}

\section{Adaptive Sampling for Stochastic Gradient Descent}
\label{sec:adaptive_sampling_sgd}

The PAC-Bayesian theorems in \cref{sec:gen_err_bounds} motivate data-dependent posterior distributions on the hyperparameter space. Intuitively, certain posteriors may improve, or speed up, learning from a given dataset. For instance, suppose certain training examples are considered valuable for reducing empirical risk; then, a sampling posterior for SGD should weight those examples more heavily than others, so that the learning algorithm can, probabilistically, focus its attention on the valuable examples. However, a posterior should also try to stay close to the prior, to control the divergence penalty in the generalization bounds.

Based on this idea, we propose a sampling procedure for SGD (or any variant thereof) that constructs a posterior based on the training data, balancing the utility of the sampling distribution with its divergence from a uniform prior. The algorithm operates alongside the learning algorithm, iteratively generating the posterior as a sequence of conditional distributions on the training data. Each iteration of training generates a new distribution conditioned on the previous iterations, so the posterior dynamically adapts to training. We therefore call our algorithm \define{adaptive sampling SGD}.

\begin{algorithm}
\caption{Adaptive Sampling SGD}
\label{alg:ada_samp}
\begin{algorithmic}[1]
\Require{Examples, $(\z_1,\dots,\z_\nex) \in \cZ^{\nex}$; initial hypothesis, $\hyp_0 \in \Hyp$; update rule, $\Update_t : \Hyp \times \ZS \to \Hyp$; utility function, $\Utility : \ZS \times \Hyp \to \Reals$; amplitude, $\amp \geq 0$; decay, $\decay \in (0,1)$.}
\Let{(\PostWeight_1,\dots,\PostWeight_\nex)}{\vec{1}}
	\Comment{Initialize sampling weights uniformly}
\For{$t = 1,\dots,\seqlen$}
	\Draw{i_t}{\Post_t \propto (\PostWeight_1,\dots,\PostWeight_\nex)}
		\Comment{Draw index $i_t$ proportional to sampling weights}
	\Let{\hyp_t}{\Update_t(\hyp_{t-1}, \z_{i_t})}
		\Comment{Update hypothesis}
	\Let{\PostWeight_{i_t}}{\PostWeight_{i_t}^\decay \exp\( \amp \, \Utility(\z_{i_t}, \hyp_t) \)}
		\Comment{Update sampling weight for $i_t$}
\EndFor
\Return{$\hyp_{\seqlen}$}
\end{algorithmic}
\end{algorithm}

\cref{alg:ada_samp} maintains a set of nonnegative sampling weights, $(\PostWeight_1,\dots,\PostWeight_\nex)$, which define a distribution on the dataset. The posterior probability of the $i\nth$ example in the $t\nth$ iteration, given the previous iterations, is proportional to the $i\nth$ weight: $\Post_t(i) \defeq \Post(i_t = i \| i_1,\dots,i_{t-1}) \propto \PostWeight_i$. The sampling weights are initialized to 1, thereby inducing a uniform distribution. At each iteration, we draw an index, $i_t \by \Post_t$, and use example $\z_{i_t}$ to update the hypothesis. We then update the weight for $i_t$ multiplicatively as $\PostWeight_{i_t} \gets \PostWeight_{i_t}^\decay \exp\( \amp \, \Utility(\z_{i_t}, \hyp_t) \)$, where: $\Utility(\z_{i_t}, \hyp_t)$ is a \define{utility} function of the chosen example and current hypothesis; $\amp \geq 0$ is an \define{amplitude} parameter, which controls the aggressiveness of the update; and $\decay \in (0,1)$ is a \define{decay} parameter, which lets $\PostWeight_i$ gradually forget past updates.

The multiplicative weight update (line 5) can be derived by choosing a sampling distribution for the next iteration, $t + 1$, that maximizes the expected utility while staying close to a reference distribution. Consider the following constrained optimization problem:
\begin{equation}
\max_{\Post_{t+1} \in \Delta^{\nex}} \, \sum_{i=1}^{\nex} \Post_{t+1}(i) \Utility(\z_i, \hyp_t) - \frac{1}{\amp} \, \KLDiv{\Post_{t+1}}{\Post_t^\decay} .
\label{eq:ada_samp_obj}
\end{equation}
The term $\sum_{i=1}^{\nex} \Post_{t+1}(i) \Utility(\z_i, \hyp_t)$ is the expected utility under the new distribution, $\Post_{t+1}$. This is offset by the KL divergence, which acts as a regularizer, penalizing $\Post_{t+1}$ for diverging from a reference distribution, $\Post_t^\decay$, where $\Post_t^\decay(i) \propto \PostWeight_i^\decay$. The decay parameter, $\decay$, controls the \define{temperature} of the reference distribution, allowing it to interpolate between the current distribution ($\decay = 1$) and a uniform distribution ($\decay = 0$). The amplitude parameter, $\amp$, scales the influence of the regularizer relative to the expected utility. We can solve \cref{eq:ada_samp_obj} analytically using the method of Lagrange multipliers, which yields
\begin{equation}
\Post_{t+1}^\star(i) \propto \Post_t^\decay(i) \exp\( \amp \, \Utility(\z_{i_t}, \hyp_t) - 1 \) \propto \PostWeight_i^\decay \exp\( \amp \, \Utility(\z_{i_t}, \hyp_t) \) .
\label{eq:posterior_update}
\end{equation}
Updating $\PostWeight_i$ for all $i = 1,\dots,\nex$ is impractical for large $\nex$, so we approximate the above solution by only updating the weight for the last sampled index, $i_t$, effectively performing coordinate ascent.

The idea of tuning the empirical data distribution through multiplicative weight updates is reminiscent of AdaBoost \citep{freund:colt95} and focused online learning \citep{shalev-shwartz:icml16}, but note that \cref{alg:ada_samp} learns a single hypothesis, not an ensemble. In this respect, it is similar to SelfieBoost \citep{shalev-shwartz:corr14}. One could also draw parallels to exponentiated gradient dual coordinate ascent \citep{collins:jmlr08}. Finally, note that when the gradient estimate is unbiased (i.e., weighted by the inverse sampling probability), we obtain a variant of importance sampling SGD \citep{zhao:icml15}, though we do not necessarily need unbiased gradient estimates.

It is important to note that we do not actually need to compute the full posterior distribution---which would take $\BigO(\nex)$ time per iteration---in order to sample from it. Indeed, using an algorithm and data structure described in \cref{sec:sampling}, we can sample from and update the distribution in $\BigO(\log\nex)$ time, using $\BigO(\nex)$ space. Thus, the additional iteration complexity of adaptive sampling is logarithmic in the size of the dataset, which suitably efficient for learning from large datasets.

In practice, SGD is typically applied with \define{mini-batching}, whereby multiple examples are drawn at each iteration, instead of just one. Given the massive parallelism of today's computing hardware, mini-batching is simply a more efficient way to process a dataset, and can result in more accurate gradient estimates than single-example updates. Though \cref{alg:ada_samp} is stated for single-example updates, it can be modified for mini-batching by replacing line 3 with multiple independent draws from $\Post_t$, and line 5 with sampling weight updates for each unique\footnote{If an example is drawn multiple times in a mini-batch, its sampling weight is only updated once.} example in the mini-batch.

\subsection{Divergence Analysis}
\label{sec:alg_analysis}

Recall that our generalization bounds use the posterior's divergence from a fixed prior to penalize the posterior for overfitting the training data. Thus, to connect \cref{alg:ada_samp} to our bounds, we analyze the adaptive posterior's divergence from a uniform prior on $\{1,\dots,\nex\}^{\seqlen}$. This quantity reflects the potential cost, in generalization performance, of adaptive sampling. The goal of this section is to upper-bound the KL divergence resulting from \cref{alg:ada_samp} in terms of interpretable, data-dependent quantities. All proofs are deferred to \cref{sec:proofs_alg_analysis}.

Our analysis requires introducing some notation. Given a sequence of sampled indices, $(i_1,\dots,i_t)$, let $\NumOccur_{i,t} \defeq \card{\{ t' : t' < t, i_{t'} = i \}}$ denote the number of times that index $i$ was chosen \emph{before} iteration $t$. Let $\Occur_{i,j}$ denote the $j\nth$ iteration in which $i$ was chosen; for instance, if $i$ was chosen at iterations 13 and 47, then $\Occur_{i,1} = 13$ and $\Occur_{i,2} = 47$. With these definitions, we can state the following bound, which exposes the influences of the utility function, amplitude and decay on the KL divergence.
\begin{theorem}
\label{th:ada_samp_kl_bound}
Fix a uniform prior, $\Prior$, a utility function, $\Utility : \ZS \times \Hyp \to \Reals$, an amplitude, $\amp \geq 0$, and a decay, $\decay \in (0,1)$. If \cref{alg:ada_samp} is run for $\seqlen$ iterations, then its posterior, $\Post$, satisfies
\begin{equation}
\KLDiv{\Post}{\Prior}
	\leq \sum_{t=2}^{\seqlen}
		\Ep_{(i_1,\dots,i_t)\by\Post} \frac{\amp}{\nex} \sum_{i=1}^{\nex} \Bigg[
		\sum_{j=1}^{\NumOccur_{i_t,t}} \Utility(\z_{i_t}, \hyp_{\Occur_{i_t,j}}) \, \decay^{\NumOccur_{i_t,t} - j}
		- \sum_{k=1}^{\NumOccur_{i,t}} \Utility(\z_i, \hyp_{\Occur_{i,k}}) \, \decay^{\NumOccur_{i,t} - k}
		\Bigg] .
\label{eq:ada_samp_kl_bound}
\end{equation}
\end{theorem}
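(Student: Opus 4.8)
The plan is to exploit the sequential product structure of the adaptive posterior via the chain rule for KL divergence, and then bound each per-iteration conditional divergence using a single Jensen inequality on the normalizing constant. First I would write $\Post$ as a product of conditionals, $\Post(i_1,\dots,i_\seqlen) = \prod_{t=1}^{\seqlen} \Post_t(i_t)$, where $\Post_t(i) \propto \PostWeight_i$ is the sampling distribution at iteration $t$ (with $\PostWeight_i$ the weight of index $i$ at the start of that iteration, its time-dependence left implicit as in the algorithm). Since the uniform prior $\Prior$ on $\{1,\dots,\nex\}^\seqlen$ is itself a product of uniform conditionals $\Prior_t(i) = 1/\nex$, the chain rule gives
$$\KLDiv{\Post}{\Prior} = \sum_{t=1}^{\seqlen} \Ep_{(i_1,\dots,i_{t-1})\by\Post}\left[ \KLDiv{\Post_t}{\Prior_t} \right].$$
Because all weights equal $1$ at initialization, $\Post_1$ is uniform and contributes nothing, so the sum effectively begins at $t=2$.

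Second I would unroll the multiplicative update into closed form. Taking logarithms turns the update $\PostWeight_{i_t} \gets \PostWeight_{i_t}^\decay \exp(\amp\,\Utility(\z_{i_t},\hyp_t))$ into the linear recurrence $\ln\PostWeight \gets \decay\ln\PostWeight + \amp\,\Utility$, whose solution after the $\NumOccur_{i,t}$ updates applied to index $i$ before iteration $t$ is
$$\ln\PostWeight_i = \amp \sum_{j=1}^{\NumOccur_{i,t}} \Utility(\z_i,\hyp_{\Occur_{i,j}})\,\decay^{\NumOccur_{i,t}-j}.$$
This identifies the two bracketed sums in \cref{eq:ada_samp_kl_bound} as scaled log-weights: the first equals $\tfrac{1}{\amp}\ln\PostWeight_{i_t}$ and the second equals $\tfrac{1}{\amp}\ln\PostWeight_i$.

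Third, the crux, I would bound each conditional divergence. Writing $\Post_t(i) = \PostWeight_i / Z_t$ with $Z_t \defeq \sum_k \PostWeight_k$ and expanding the KL gives
$$\KLDiv{\Post_t}{\Prior_t} = \Ep_{i\by\Post_t}[\ln\PostWeight_i] + \ln\nex - \ln Z_t.$$
The only awkward term is the log-partition $\ln Z_t$; since it enters with a negative sign, lower-bounding it upper-bounds the divergence. By Jensen's inequality (concavity of $\ln$), $\ln Z_t = \ln\nex + \ln\big(\tfrac{1}{\nex}\sum_k \PostWeight_k\big) \geq \ln\nex + \tfrac{1}{\nex}\sum_k \ln\PostWeight_k$, so the $\ln\nex$ terms cancel and
$$\KLDiv{\Post_t}{\Prior_t} \leq \Ep_{i\by\Post_t}[\ln\PostWeight_i] - \frac{1}{\nex}\sum_{i=1}^{\nex}\ln\PostWeight_i.$$
Summing over $t$, folding the conditional expectation over $i_t$ into the outer expectation over $(i_1,\dots,i_{t-1})$ to obtain a single expectation over $(i_1,\dots,i_t)$, and substituting the closed-form log-weights from the second step yields exactly \cref{eq:ada_samp_kl_bound}; note that summing the $i_t$-dependent first term over $i$ and dividing by $\nex$ simply reproduces $\ln\PostWeight_{i_t}$, which matches the post-Jensen expression.

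I expect the main obstacle to be the bookkeeping in the second and third steps rather than any deep inequality: getting the decay exponents right when unrolling the recurrence, and correctly matching the post-Jensen expression (expected log-weight under $\Post_t$ minus the uniform-average log-weight) to the difference of the two inner sums in the statement. The Jensen step is the conceptual key but is short; the real effort lies in the careful indexing of the occurrence times $\Occur_{i,j}$ and the counts $\NumOccur_{i,t}$.
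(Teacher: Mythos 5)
Your proposal is correct and follows essentially the same route as the paper's proof: a chain-rule decomposition of the KL into per-iteration conditional divergences, unrolling the multiplicative update into the closed-form log-weights, and a single Jensen inequality lower-bounding the log-normalizer $\ln\big(\tfrac{1}{\nex}\sum_i \PostWeight_i\big)$ by the average log-weight. The only cosmetic difference is that you write the chain rule as an expectation of conditional KLs while the paper writes it directly as an expectation of log-ratios over $(i_1,\dots,i_t)$; these are identical by the tower rule.
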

\cref{eq:ada_samp_kl_bound} can be interpreted as measuring, on average, how the cumulative past utilities of each sampled index, $i_t$, differ from the cumulative utilities of any other index, $i$.\footnote{When $\NumOccur_{i,t} = 0$ (i.e., $i$ has not yet been sampled), a summation over $j = 1,\dots,\NumOccur_{i,t}$ evaluates to zero.} When the posterior becomes too focused on certain examples, this difference is large. The accumulated utilities decay exponentially, with the rate of decay controlled by $\decay$. The amplitude, $\amp$, scales the entire bound, which means that aggressive posterior updates may adversely affect generalization.

An interesting special case of \cref{th:ada_samp_kl_bound} is when the utility function is nonnegative, which results in a simpler, more interpretable bound.
\begin{theorem}
\label{th:ada_samp_kl_bound_nonneg_utility}
Fix a uniform prior, $\Prior$, a nonnegative utility function, $\Utility : \ZS \times \Hyp \to \Reals_+$, an amplitude, $\amp \geq 0$, and a decay, $\decay \in (0,1)$. If \cref{alg:ada_samp} is run for $\seqlen$ iterations, then its posterior, $\Post$, satisfies
\begin{equation}
\KLDiv{\Post}{\Prior}
	\leq \frac{\amp}{1 - \decay} \sum_{t=1}^{\seqlen-1} \Ep_{(i_1,\dots,i_t)\by\Post} \Big[ \Utility(\z_{i_t}, \hyp_t) \Big] .
\label{eq:ada_samp_kl_bound_nonneg_utility}
\end{equation}
\end{theorem}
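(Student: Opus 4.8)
The plan is to derive this as a direct consequence of \cref{th:ada_samp_kl_bound}, exploiting nonnegativity of $\Utility$ and then reorganizing the resulting double sum. First I would discard the second (subtracted) inner summation in \cref{eq:ada_samp_kl_bound}: since $\Utility \geq 0$ and $\decay \in (0,1)$, the term $\sum_{k=1}^{\NumOccur_{i,t}} \Utility(\z_i,\hyp_{\Occur_{i,k}}) \decay^{\NumOccur_{i,t}-k}$ is nonnegative, so removing it only loosens the bound. The surviving first term does not depend on the summation index $i$, so $\frac{1}{\nex}\sum_{i=1}^{\nex}$ collapses to a factor of one, leaving
\[
\KLDiv{\Post}{\Prior} \leq \amp \sum_{t=2}^{\seqlen} \Ep_{(i_1,\dots,i_t)\by\Post}\left[ \sum_{j=1}^{\NumOccur_{i_t,t}} \Utility(\z_{i_t},\hyp_{\Occur_{i_t,j}}) \, \decay^{\NumOccur_{i_t,t}-j} \right].
\]
Because the summand for a given $t$ depends on $(i_1,\dots,i_t)$ only, each per-$t$ expectation equals the expectation over the whole sequence $(i_1,\dots,i_\seqlen)$, so by linearity it suffices to prove a pathwise bound on $A \defeq \sum_{t=2}^{\seqlen} \sum_{j=1}^{\NumOccur_{i_t,t}} \Utility(\z_{i_t},\hyp_{\Occur_{i_t,j}}) \decay^{\NumOccur_{i_t,t}-j}$ for a fixed realization of the index sequence.

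The heart of the argument is reorganizing $A$ by grouping the contributions of each index. Fix an index $i$ and let $\tau_1 < \dots < \tau_m$ be the iterations at which $i$ is sampled. Whenever $i_t = i$, say $t = \tau_p$, we have $\NumOccur_{i_t,t} = p-1$ and $\Occur_{i_t,j} = \tau_j$, so the contribution of index $i$ to $A$ is $\sum_{p=2}^{m} \sum_{j=1}^{p-1} \Utility(\z_i,\hyp_{\tau_j}) \decay^{(p-1)-j}$. Swapping the order of summation---fixing the occurrence $j$ and summing over later occurrences $p > j$---turns the inner geometric sum into $\sum_{r=0}^{m-1-j} \decay^r \leq \frac{1}{1-\decay}$. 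Hence index $i$ contributes at most $\frac{1}{1-\decay} \sum_{j=1}^{m-1} \Utility(\z_i,\hyp_{\tau_j})$, where the final occurrence $\tau_m$ drops out because no term has $p > m$.

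Summing over all indices gives $A \leq \frac{1}{1-\decay} \sum_{t\in I} \Utility(\z_{i_t},\hyp_t)$, where $I$ is the set of all iterations \emph{except} the last occurrence of each sampled index. I would then invoke the key accounting fact: iteration $\seqlen$ is always the last occurrence of $i_\seqlen$, so $\seqlen \notin I$ and therefore $I \subseteq \{1,\dots,\seqlen-1\}$; by nonnegativity of $\Utility$, $\sum_{t\in I}\Utility(\z_{i_t},\hyp_t) \leq \sum_{t=1}^{\seqlen-1} \Utility(\z_{i_t},\hyp_t)$. Taking expectation over $(i_1,\dots,i_\seqlen)\by\Post$ and reinstating the factor $\amp$ yields \cref{eq:ada_samp_kl_bound_nonneg_utility}. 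The main obstacle is the bookkeeping in the reorganization step---correctly matching each surviving term to an iteration and verifying that the summation range contracts to $\seqlen-1$ rather than $\seqlen$, which hinges precisely on the exclusion of each index's final occurrence and, in particular, of iteration $\seqlen$.
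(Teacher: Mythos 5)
Your proposal is correct and follows essentially the same route as the paper's proof: drop the nonnegative (subtracted) term, then reorganize the double sum so that each computed utility $\Utility(\z_{i_t},\hyp_t)$ is charged for its appearances at future occurrences of index $i_t$, and bound the resulting geometric series by $1/(1-\decay)$, with the last occurrence of each index (in particular iteration $\seqlen$) contributing nothing. The only cosmetic differences are that you start from the statement of \cref{th:ada_samp_kl_bound} rather than from the intermediate inequalities in its proof, and your per-index bookkeeping via the set $I$ is a slightly more explicit rendering of the paper's reindexing in \cref{eq:reorg_util_sums}.
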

\cref{eq:ada_samp_kl_bound_nonneg_utility} is simply the sum of expected utilities computed over $\seqlen - 1$ iterations of training, scaled by $\amp / (1 - \decay)$. The implications of this bound are interesting when the utility function is defined as the loss, $\Utility(\z, \hyp) \defeq \Loss(\hyp, \z)$; then, if SGD quickly converges to a hypothesis with low \emph{maximal} loss on the training data, it can reduce the generalization error.\footnote{This interpretation concurs with ideas in \citep{hardt:icml16,shalev-shwartz:icml16}.} The caveat is that tuning the amplitude or decay to speed up convergence may actually counteract this effect.

It is worth noting that similar guarantees hold for a mini-batch variant of \cref{alg:ada_samp}. The bounds are essentially unchanged, modulo notational intricacies.

\section{Experiments}
\label{sec:experiments}

To demonstrate the effectiveness of \cref{alg:ada_samp}, we conducted several experiments with the CIFAR-10 dataset \citep{krizhevsky:tech09}. This benchmark dataset contains 60,000 $(32 \times 32)$-pixel RGB images from 10 object classes, with a standard, static partitioning into 50,000 training examples and 10,000 test examples.

We specified the hypothesis class as the following convolutional neural network architecture: 32 $(3 \times 3)$ filters with rectified linear unit (ReLU) activations in the first and second layers, followed by $(2 \times 2)$ max-pooling and 0.25 dropout\footnote{It can be shown that dropout improves data stability \citep[Lemma 4.4]{hardt:icml16}.}; 64 $(3 \times 3)$ filters with ReLU activations in the third and fourth layers, again followed by $(2 \times 2)$ max-pooling and 0.25 dropout; finally, a fully-connected, 512-unit layer with ReLU activations and 0.5 dropout, followed by a fully-connected, 10-output softmax layer. We trained the network using the cross-entropy loss. We emphasize that our goal was not to achieve state-of-the-art results on the dataset; rather, to evaluate \cref{alg:ada_samp} in a simple, yet realistic, application.

Following the intuition that sampling should focus on difficult examples, we experimented with two utility functions for \cref{alg:ada_samp} based on common loss functions. For an example $\z = (\x, \y)$, with $\hyp(\x,\y)$ denoting the predicted probability of label $\y$ given input $\x$ under hypothesis $\hyp$, let
\begin{equation}
\Utility_{0}(\z, \hyp) \defeq \1\{ \argmax\nolimits_{\y'\in\YS} \hyp(\x, \y') \neq \y \}
\eqand
\Utility_{1}(\z, \hyp) \defeq 1 - \hyp(\x, \y) .
\label{eq:utility_funcs}
\end{equation}
The first utility function, $\Utility_{0}$, is the 0-1 loss; the second, $\Utility_{1}$, is the $L_1$ loss, which accounts for uncertainty in the most likely label. We combined these utility functions with two parameter update rules: standard SGD with decreasing step sizes, $\stepsize_t \defeq \stepsize / (1 + \stepsizedecay t) \leq \stepsize / (\stepsizedecay t)$, for $\stepsize > 0$ and $\stepsizedecay > 0$; and AdaGrad \citep{duchi:jmlr11}, a variant of SGD that automatically tunes a separate step size for each parameter. We used mini-batches of 100 examples per update. The combination of utility functions and update rules yields four adaptive sampling algorithms: AdaSamp-01-SGD, AdaSamp-01-AdaGrad, AdaSamp-L1-SGD and AdaSamp-L1-AdaGrad. We compared these to their uniform sampling counterparts, Unif-SGD and Unif-AdaGrad.

We tuned all hyperparameters using random subsets of the training data for cross-validation. We then ran 10 trials of training and testing, using different seeds for the pseudorandom number generator at each trial to generate different random initializations\footnote{Each training algorithm started from the same initial hypothesis.} and training sequences. \cref{fig:cifar10_train_loss,fig:cifar10_train_acc} plot learning curves of the average cross-entropy and accuracy, respectively, on the training data; \cref{fig:cifar10_test_acc} plots the average accuracy on the test data. We found that all adaptive sampling variants reduced empirical risk (increased training accuracy) faster than their uniform sampling counterparts. Further, AdaGrad with adaptive sampling exhibited modest, yet consistent, improvements in test accuracy in early iterations of training. \cref{fig:varying_amp} illustrates the effect of varying the amplitude parameter, $\amp$. Higher values of $\amp$ led to faster empirical risk reduction, but lower test accuracy---a sign of overfitting the posterior to the data, which concurs with \cref{th:ada_samp_kl_bound,th:ada_samp_kl_bound_nonneg_utility} regarding the influence of $\amp$ on the KL divergence. \cref{fig:cond_kl_div} plots the KL divergence from the conditional prior, $\Prior_t$, to the conditional posterior, $\Post_t$, given sampled indices $(i_1,\dots,i_{t-1})$; i.e., $\KLDiv{\Post_t}{\Prior_t}$. The sampling distribution quickly diverged in early iterations, to focus on examples where the model erred, then gradually converged to a uniform distribution as the empirical risk converged.

\begin{figure}[htbp]
\begin{center}
\subfloat[Train loss \label{fig:cifar10_train_loss}]{\includegraphics[width=0.2\textwidth]{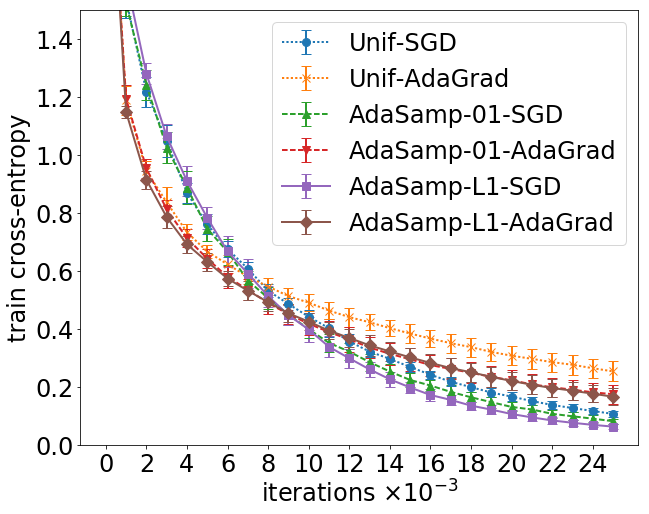}}
\subfloat[Train accuracy \label{fig:cifar10_train_acc}]{\includegraphics[width=0.2\textwidth]{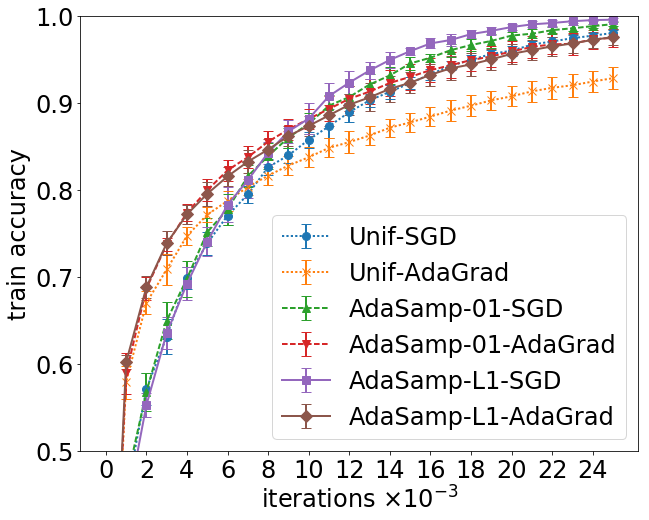}}
\subfloat[Test accuracy \label{fig:cifar10_test_acc}]{\includegraphics[width=0.2\textwidth]{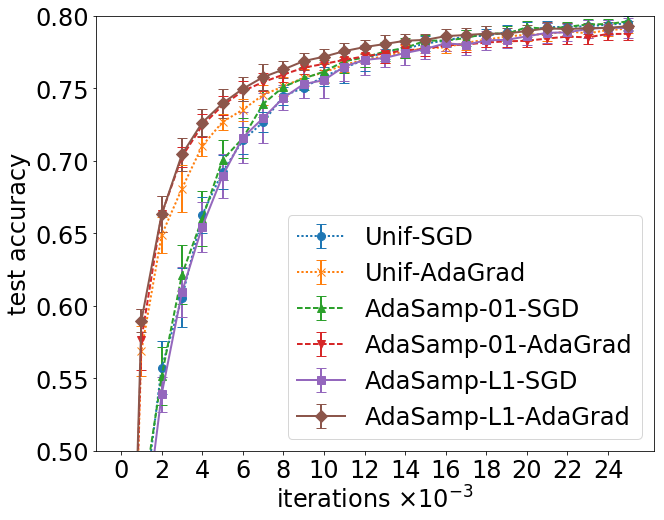}}
\subfloat[Impact of $\amp$ \label{fig:varying_amp}]{\includegraphics[width=0.2\textwidth]{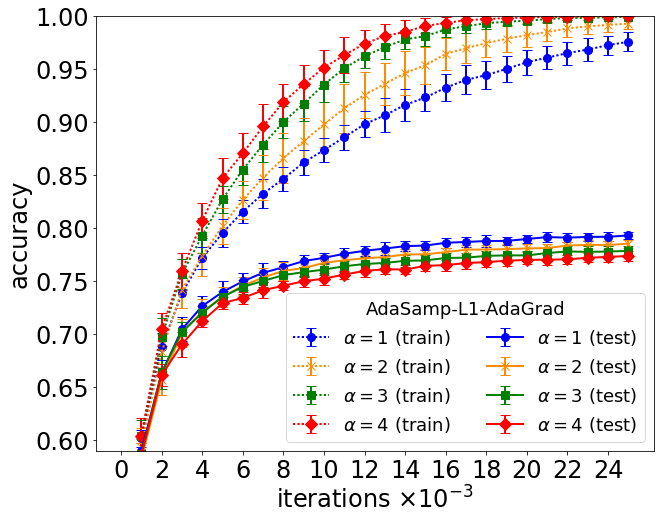}}
\subfloat[$\KLDiv{\Post_t}{\Prior_t}$ \label{fig:cond_kl_div}]{\includegraphics[width=0.2\textwidth]{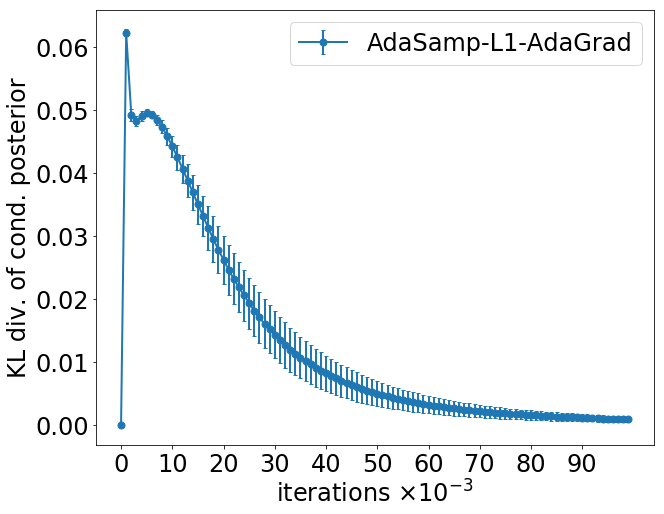}}
\caption{Experimental results on CIFAR-10, averaged over 10 random initializations and training runs. (Best viewed in color.) \cref{fig:cifar10_train_loss} plots learning curves of training cross-entropy (lower is better). \cref{fig:cifar10_train_acc,fig:cifar10_test_acc}, respectively, plot train and test accuracies (higher is better). \cref{fig:varying_amp} highlights the impact of the amplitude parameter, $\amp$, on accuracy. \cref{fig:cond_kl_div} plots the KL divergence from the conditional prior, $\Prior_t$, to the conditional posterior, $\Post_t$, given sampled indices $(i_1,\dots,i_{t-1})$.
}
\label{fig:cifar10_results}
\end{center}
\end{figure}

\section{Conclusions and Future Work}
\label{sec:conclusion}

We presented new generalization bounds for randomized learning algorithms, using a novel combination of PAC-Bayes and algorithmic stability. The bounds inspired an adaptive sampling algorithm for SGD that dynamically updates the sampling distribution based on the training data and model. Experimental results with this algorithm indicate that it can reduce empirical risk faster than uniform sampling while also improving out-of-sample accuracy. Future research could investigate different utility functions and distribution updates, or explore the connections to related algorithms. We are also interested in providing stronger generalization guarantees, with polylogarithmic dependence on $\delta^{-1}$, for non-convex objective functions, but proving $\SoftO(1/\sqrt{\nex\seqlen})$-uniform hyperparameter stability without (strong) convexity is difficult. We hope to address this problem in future work.


\appendix

\section{Proofs from \cref{sec:stability}}
\label{sec:stability_proofs}

The stability bounds in \cref{sec:stability} require several characterizations of a loss or objective function. In the following definitions, we consider generic functions of the form $\fun : \Hyp \times \ZS \to \Reals$. Since we are only interested in how a function behaves with respect to $\Hyp$, we specify the definitions accordingly.

\begin{definition}[Convexity]
\label{def:convex}
A differentiable function, $\fun : \Hyp \times \ZS \to \Reals$, is convex (in $\Hyp$) if
\begin{equation}
\forall \hyp, \hyp' \in \Hyp, \, \forall \z \in \ZS, ~ \innerprod{\grad\fun(\hyp,\z)}{\hyp' - \hyp} \leq \fun(\hyp',\z) - \fun(\hyp,\z) .
\label{eq:convex}
\end{equation}
Further, $\fun$ is $\cvxmod$-strongly convex (with respect to the 2-norm) if
\begin{equation}
\frac{\cvxmod}{2} \norm{\hyp' - \hyp}^2 + \innerprod{\grad\fun(\hyp,\z)}{\hyp' - \hyp} \leq \fun(\hyp',\z) - \fun(\hyp,\z) .
\label{eq:strongly_convex}
\end{equation}
\end{definition}

\begin{definition}[Lipschitzness]
\label{def:lipschitz}
A function, $\fun : \Hyp \times \ZS \to \Reals$, is $\lipschitz$-Lipschitz (in $\Hyp$) if
\begin{equation}
\sup_{\hyp, \hyp' \in \Hyp} \, \sup_{\z \in \ZS} \, \frac{\ab{\fun(\hyp,\z) - \fun(\hyp',\z)}}{\norm{\hyp - \hyp'}} \leq \lipschitz .
\label{eq:lipschitz}
\end{equation}
If $\fun$ is differentiable, then \cref{eq:lipschitz} is equivalent to
\begin{equation}
\sup_{\hyp \in \Hyp} \, \sup_{\z \in \ZS} \, \norm{\grad\fun(\hyp,\z)} \leq \lipschitz .
\label{eq:lipschitz_differentiable}
\end{equation}
\end{definition}

\begin{definition}[Smoothness]
\label{def:smooth}
A differentiable function, $\fun : \Hyp \times \ZS \to \Reals$, is $\smooth$-smooth (in $\Hyp$) if
\begin{equation}
\sup_{\hyp, \hyp' \in \Hyp} \, \sup_{\z \in \ZS} \, \frac{\norm{\grad\fun(\hyp,\z) - \grad\fun(\hyp',\z)}}{\norm{\hyp - \hyp'}} \leq \smooth .
\label{eq:smooth}
\end{equation}
\end{definition}
Smoothness is a form of Lipschitzness; a function is $\smooth$-smooth if its gradient is $\smooth$-Lipschitz.

\subsection{Proof of \cref{prop:stability_convex_sgd_decreasing_stepsize,prop:stability_non_convex_sgd,prop:hyp_stability_convex_sgd_decreasing_stepsize}}
\label{sec:proof_stability_sgd_extensions}

\cref{prop:stability_convex_sgd_decreasing_stepsize,prop:stability_non_convex_sgd,prop:hyp_stability_convex_sgd_decreasing_stepsize} extend work by \citet{hardt:icml16} and \citet{kuzborskij:corr17}, whose definitions of data stability differ slightly from ours (which are taken from \citep{elisseeff:jmlr05}). To reconcile our definition of $\datastab$-uniform stability with \citeauthor{hardt:icml16}'s, which does not involve an absolute value, observe that
\begin{equation}
\sup_{\Data,\Data',\z} \, \Big\vert \Ep_{\rand\by\Prior}\left[ \Loss(\Algo(\Data, \rand), \z) - \Loss(\Algo(\Data', \rand), \z) \right] \Big\vert
	= \sup_{\Data,\Data',\z} \, \Ep_{\rand\by\Prior}\left[ \Loss(\Algo(\Data, \rand), \z) - \Loss(\Algo(\Data', \rand), \z) \right]
\label{eq:symmetry_of_sup}
\end{equation}
by the symmetry of the supremum over $\Data$ and $\Data'$. \citeauthor{kuzborskij:corr17}'s definition of hypothesis stability equates to our \emph{pointwise} hypothesis stability, though they do not include an absolute value inside the expectation over $\rand \by \Prior$. Nonetheless, since the loss function is always assumed to be $\lipschitz$-Lipschitz, this distinction does not matter. Indeed, all existing stability proofs for SGD implicitly leverage the following upper bound:
\begin{align}
\Ep_{\rand\by\Prior}\left[ \Loss(\Algo(\Data, \rand), \z) - \Loss(\Algo(\Data', \rand), \z) \right]
	&\leq \Ep_{\rand\by\Prior}\left[ \ab{\Loss(\Algo(\Data, \rand), \z) - \Loss(\Algo(\Data', \rand), \z)} \right] \\
	&\leq \lipschitz \Ep_{\rand\by\Prior}\left[ \norm{\Algo(\Data, \rand) - \Algo(\Data', \rand)} \right] .
\label{eq:loss_lipschitz_bound}
\end{align}
\cref{eq:loss_lipschitz_bound} implies that \citeauthor{kuzborskij:corr17}'s proofs hold for our definition of pointwise hypothesis stability; we simply start the proof from the right-hand side of the first inequality. By the same logic, we can convert existing proofs of $\datastab$-uniform stability to proofs of $\datastab$-pointwise hypothesis stability. Moreover, \cref{eq:loss_lipschitz_bound} lets us distinguish between the loss function, $\Loss$, and the objective function, $\TrainLoss$, which is optimized by $\Algo$; though \citep{hardt:icml16,kuzborskij:corr17} do not make this distinction, their results hold when $\Loss \neq \TrainLoss$ because they assume Lipschitzness.

Using the above reasoning, we therefore arrive at the following adaptations, which will be used to prove \cref{prop:stability_convex_sgd_decreasing_stepsize,prop:hyp_stability_convex_sgd_decreasing_stepsize}.

\begin{lemma}[adapted from {\citep[Theorem 3.7]{hardt:icml16}}]
\label{lem:stability_convex_sgd}
Assume that the loss function, $\Loss$, is $\lipschitz$-Lipschitz, and that the objective function, $\TrainLoss$, is convex, $\lipschitz$-Lipschitz and $\smooth$-smooth. Suppose SGD is run for $\seqlen$ iterations with a uniform sampling distribution, $\Prior$, and step sizes $\stepsize_t \in [0, 2/\smooth]$. Then, SGD is both $\datastab$-uniformly stable and $\datastab$-pointwise hypothesis stable with respect to $\Loss$ and $\Prior$, with
\begin{equation}
\datastab \leq \frac{2\lipschitz^2}{\nex} \sum_{t=1}^{\seqlen} \stepsize_t .
\label{eq:stability_convex_sgd}
\end{equation}
\end{lemma}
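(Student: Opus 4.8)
The plan is to follow the trajectory-coupling argument of \citet{hardt:icml16}, first reducing loss stability to expected parameter divergence via the Lipschitz bound in \cref{eq:loss_lipschitz_bound}. Fix two datasets $\Data$ and $\Data'$ with $\Hamming(\Data,\Data') = 1$, say differing only in their $j\nth$ example, and run SGD on both using the \emph{same} random index sequence $\rand = (i_1,\dots,i_\seqlen)$ and the same initialization $\hyp_0 = \hyp_0'$. Let $\hyp_t, \hyp_t'$ denote the two iterates and set $d_t \defeq \norm{\hyp_t - \hyp_t'}$. By \cref{eq:loss_lipschitz_bound}, it suffices to bound $\Ep_{\rand\by\Prior}[d_\seqlen]$ and multiply by $\lipschitz$, so the entire argument reduces to controlling how the coupled trajectories drift apart.

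The core of the argument is a per-step recursion on $\Ep[d_t]$. At iteration $t$ the index $i_t$ is uniform on $\{1,\dots,\nex\}$, so two cases arise. If $i_t \neq j$ (probability $1 - 1/\nex$), both trajectories apply the \emph{same} gradient map $G_t(\hyp) \defeq \hyp - \stepsize_t \grad\TrainLoss(\hyp, \z_{i_t})$, and the key technical lemma I would establish is that, for a convex and $\smooth$-smooth objective with $\stepsize_t \leq 2/\smooth$, this map is non-expansive, i.e. $\norm{G_t(\hyp) - G_t(\hyp')} \leq \norm{\hyp - \hyp'}$, so $d_t \leq d_{t-1}$. If instead $i_t = j$ (probability $1/\nex$), the two updates use different examples, but each gradient has norm at most $\lipschitz$ by \cref{eq:lipschitz_differentiable}, so the triangle inequality gives $d_t \leq d_{t-1} + 2\stepsize_t \lipschitz$. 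Combining both cases and taking expectations yields the linear recursion $\Ep[d_t] \leq \Ep[d_{t-1}] + \frac{2\stepsize_t \lipschitz}{\nex}$.

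Unrolling this recursion from $d_0 = 0$ gives $\Ep_{\rand\by\Prior}[d_\seqlen] \leq \frac{2\lipschitz}{\nex} \sum_{t=1}^\seqlen \stepsize_t$, and substituting into \cref{eq:loss_lipschitz_bound} produces the claimed $\datastab \leq \frac{2\lipschitz^2}{\nex}\sum_{t=1}^\seqlen \stepsize_t$. The same per-trajectory bound holds for \emph{any} pair of datasets differing in one coordinate, so taking the supremum over such pairs and over $\z$ delivers $\datastab$-uniform stability, while applying the outer expectations $\Ep_{\Data\by\Dist^\nex}\Ep_{\z\by\Dist}$ to the single-coordinate perturbation $\Data \mapsto \Data^{i,\z}$ delivers $\datastab$-pointwise hypothesis stability with the identical constant.

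The step I expect to be the main obstacle is establishing non-expansivity of $G_t$, since this is precisely where convexity is essential. I would derive it from the co-coercivity of the gradient of a convex $\smooth$-smooth function (a consequence of \cref{def:convex,def:smooth}), namely $\innerprod{\grad\TrainLoss(\hyp,\z) - \grad\TrainLoss(\hyp',\z)}{\hyp - \hyp'} \geq \frac{1}{\smooth}\norm{\grad\TrainLoss(\hyp,\z) - \grad\TrainLoss(\hyp',\z)}^2$; expanding $\norm{G_t(\hyp) - G_t(\hyp')}^2$ and applying this with $\stepsize_t \leq 2/\smooth$ leaves a non-positive residual term, giving the non-expansive bound. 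Without convexity one only obtains a $(1 + \stepsize_t\smooth)$-expansion per step from smoothness alone, which is exactly why the non-convex case (\cref{prop:stability_non_convex_sgd}) attains only a weaker rate.
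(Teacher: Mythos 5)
Your proposal is correct and follows essentially the same route as the paper, which states this lemma as an adaptation of \citet[Theorem 3.7]{hardt:icml16} and only supplies the definitional reconciliation (the symmetry of the supremum, the use of \cref{eq:loss_lipschitz_bound} to separate $\Loss$ from $\TrainLoss$, and the conversion to pointwise hypothesis stability), deferring the coupling argument itself to the citation. You have simply reconstructed that cited argument in full---the non-expansivity of the gradient map via co-coercivity for $\stepsize_t \leq 2/\smooth$, the $1/\nex$-probability perturbation step, and the unrolled recursion---and all of the details, including the final supremum and expectation steps for the two stability notions, check out.
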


\begin{lemma}[adapted from {\citep[Theorem 3]{kuzborskij:corr17}}]
\label{lem:hyp_stability_convex_sgd}
Assume that the loss function, $\Loss$, is $\lipschitz$-Lipschitz, and that the objective function, $\TrainLoss$, is convex, $\lipschitz$-Lipschitz and $\smooth$-smooth. Suppose SGD starts from an initial hypothesis, $\hyp_0$, and is run for $\seqlen$ iterations with a uniform sampling distribution, $\Prior$, and step sizes $\stepsize_t \in [0, 2/\smooth]$. Then, SGD is $\datastab$-pointwise hypothesis stable with respect to $\Loss$ and $\Prior$, with
\begin{equation}
\datastab \leq \frac{2\lipschitz \sqrt{2 \smooth \Ep_{\z\by\Dist}[ \Loss( \hyp_0, \z ) ]}}{\nex} \sum_{t=1}^{\seqlen} \stepsize_t .
\label{eq:hyp_stability_convex_sgd}
\end{equation}
\end{lemma}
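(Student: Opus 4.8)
The plan is to reduce the loss-based stability quantity to the expected divergence of two coupled SGD trajectories, and then control that divergence with a one-step recursion whose growth term is sharpened by the self-bounding property of nonnegative smooth functions. Fix an index $i$, a dataset $\Data$, and a replacement example $\z\by\Dist$, and consider two SGD runs driven by the \emph{same} index sequence $\rand\by\Prior$: one on $\Data$, producing iterates $\hyp_t$, and one on $\Data^{i,\z}$, producing iterates $\hyp_t'$, both initialized at $\hyp_0$. By the Lipschitz reduction in \cref{eq:loss_lipschitz_bound}, it suffices to bound $\lipschitz\,\Ep_{\rand\by\Prior}[\norm{\hyp_\seqlen - \hyp_\seqlen'}]$; taking the outer expectations over $\Data\by\Dist^\nex$ and $\z\by\Dist$ and the supremum over $i$ then recovers $\datastab$ as in \cref{def:hyp_stability}.

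First I would set up a recursion for $\Delta_t \defeq \norm{\hyp_t - \hyp_t'}$. Since $\Data$ and $\Data^{i,\z}$ agree everywhere except position $i$, the two updates coincide whenever the sampled index $i_t \neq i$, which happens with probability $1 - 1/\nex$ under the uniform prior; in that event, convexity and $\smooth$-smoothness with $\stepsize_t \leq 2/\smooth$ make the shared gradient-step map nonexpansive (the expansiveness lemma of \citet{hardt:icml16}), so $\Delta_t \leq \Delta_{t-1}$. When $i_t = i$ (probability $1/\nex$), the two runs apply gradients of $\TrainLoss$ at $\z_i$ and at $\z$, respectively, and the triangle inequality gives $\Delta_t \leq \Delta_{t-1} + \stepsize_t\big(\norm{\grad\TrainLoss(\hyp_{t-1},\z_i)} + \norm{\grad\TrainLoss(\hyp_{t-1}',\z)}\big)$. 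The crucial move, following \citet{kuzborskij:corr17}, is to bound each gradient norm not by the Lipschitz constant but by the self-bounding inequality $\norm{\grad\TrainLoss(\hyp,\z)} \leq \sqrt{2\smooth\,\TrainLoss(\hyp,\z)}$, which holds for any nonnegative $\smooth$-smooth function. Taking expectations over $\rand$ and telescoping yields $\Ep[\Delta_\seqlen] \leq \frac{1}{\nex}\sum_{t=1}^{\seqlen} \stepsize_t\,\Ep\big[\sqrt{2\smooth\TrainLoss(\hyp_{t-1},\cdot)} + \sqrt{2\smooth\TrainLoss(\hyp_{t-1}',\cdot)}\big]$.

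Finally, I would pass each expectation inside the square root by Jensen's inequality and argue that the expected objective value along either trajectory never exceeds its value at the start, i.e.\ $\Ep[\TrainLoss(\hyp_{t-1},\cdot)] \leq \Ep_{\z\by\Dist}[\TrainLoss(\hyp_0,\z)]$. This monotonicity is the heart of the data-dependent improvement and is where I expect the main difficulty: because the objective is convex and $\smooth$-smooth with $\stepsize_t \leq 2/\smooth$, each stochastic gradient step decreases the objective in expectation, so the expected (empirical) risk of the iterates is non-increasing; and since the evaluation example is itself drawn from $\Dist$ while $\hyp_0$ is data-independent, the initial value collapses to the fixed constant $\Ep_{\z}[\Loss(\hyp_0,\z)]$. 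Substituting this constant into the recursion and summing gives $\Ep[\Delta_\seqlen] \leq \frac{2\sqrt{2\smooth\,\Ep_{\z}[\Loss(\hyp_0,\z)]}}{\nex}\sum_{t=1}^{\seqlen}\stepsize_t$, and multiplying by $\lipschitz$ produces \cref{eq:hyp_stability_convex_sgd}. The remaining point requiring care is the mismatch between the smooth objective $\TrainLoss$ used for self-bounding and the Lipschitz loss $\Loss$ appearing in the final constant, which is reconciled exactly as in the discussion following \cref{eq:loss_lipschitz_bound}.
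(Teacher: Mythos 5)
The paper does not actually reprove this lemma: it imports \citep[Theorem 3]{kuzborskij:corr17} wholesale and confines its own argument to the definitional reconciliation around \cref{eq:loss_lipschitz_bound} (absolute values, pointwise versus on-average stability, and the $\Loss$-versus-$\TrainLoss$ distinction in the outer Lipschitz reduction). Your proposal instead reconstructs the underlying coupling argument, and most of its ingredients are the right ones: the shared-index coupling of the two trajectories, non-expansiveness of the update for convex $\smooth$-smooth objectives with $\stepsize_t \leq 2/\smooth$, the self-bounding inequality $\norm{\grad\TrainLoss(\hyp,\z)} \leq \sqrt{2\smooth\,\TrainLoss(\hyp,\z)}$ for nonnegative smooth functions, and the Jensen step.

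The gap is the final monotonicity claim. It is not true that ``each stochastic gradient step decreases the objective in expectation, so the expected (empirical) risk of the iterates is non-increasing.'' The descent lemma with $\stepsize_t \leq 2/\smooth$ only guarantees $\TrainLoss(\hyp_t,\z_{i_t}) \leq \TrainLoss(\hyp_{t-1},\z_{i_t})$ on the \emph{sampled} example; the objective on the other examples can grow, and the expected empirical risk can strictly increase. For instance, with $\nex = 2$, $\TrainLoss(\hyp,\z_1) = \frac{\smooth}{2}(\hyp-1)^2$, $\TrainLoss(\hyp,\z_2) = \frac{\smooth}{2}(\hyp+1)^2$, $\hyp_0 = 0$ and $\stepsize_1 = 2/\smooth$, a single step lands at $\hyp_1 = \pm 2$ and the empirical risk jumps from $\smooth/2$ to $5\smooth/2$ (smaller step sizes still give an increase). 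So the substitution $\Ep[\TrainLoss(\hyp_{t-1},\cdot)] \leq \Ep_{\z\by\Dist}[\TrainLoss(\hyp_0,\z)]$ is exactly the delicate, data-dependent part of \citeauthor{kuzborskij:corr17}'s proof and cannot be waved through as a routine descent argument. A second, smaller issue: your derivation would naturally leave $\Ep_{\z}[\TrainLoss(\hyp_0,\z)]$ inside the square root, and the discussion following \cref{eq:loss_lipschitz_bound} does not license replacing it by $\Ep_{\z}[\Loss(\hyp_0,\z)]$ --- that discussion only covers the Lipschitz reduction of the outer loss difference, and since $\TrainLoss$ is typically an upper bound on $\Loss$, the swap goes in the wrong direction for an upper bound.
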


If $\stepsize \leq 2/\smooth$, then $\stepsize_t \leq \stepsize/t \leq 2/\smooth$ for all $t \geq 1$. We thus have from \cref{lem:stability_convex_sgd} that
\begin{equation}
\datastab
	\leq \frac{2 \lipschitz^2 \stepsize}{\nex} \sum_{t=1}^{\seqlen} \frac{1}{t}
	\leq \frac{2 \lipschitz^2 \stepsize}{\nex} \( \ln\seqlen + 1 \) ,
\end{equation}
which proves \cref{prop:stability_convex_sgd_decreasing_stepsize}. The last inequality follows from the fact that the $\seqlen\nth$ harmonic number, $\sum_{t=1}^{\seqlen} \frac{1}{t}$, is upper-bounded by $\ln\seqlen + 1$. We obtain \cref{prop:hyp_stability_convex_sgd_decreasing_stepsize} from \cref{lem:hyp_stability_convex_sgd} using an identical proof. 

\cref{prop:stability_non_convex_sgd} follows from \citep[Theorem 3.8]{hardt:icml16} with a few small modifications. As previously mentioned, we can use \cref{eq:loss_lipschitz_bound} to reconcile definitional differences, distinguish $\Loss$ from $\TrainLoss$, and adapt the proof for pointwise hypothesis complexity. We also assume that $\Loss$ is $\MaxLoss$-bounded instead of $1$-bounded, so we use $\sup_{\hyp,\z} \Loss(\hyp,\z) \leq \MaxLoss$ in the proof (see \citep[Lemma 3.11]{hardt:corr15}).

\subsection{Proof of \cref{prop:rand_stability_strongly_convex_sgd}}
\label{sec:proof_rand_stability_strongly_convex_sgd}

We characterize SGD updates using the following definition, borrowed from \citet{hardt:icml16}.
\begin{definition}[Expansivity]
\label{def:expansive_update}
An update rule, $\Update : \Hyp \times \ZS \to \Hyp$, is $\expansive$-expansive if
\begin{equation}
\sup_{\hyp, \hyp' \in \Hyp} \, \sup_{\z \in \ZS} \, \frac{\norm{\Update(\hyp,\z) - \Update(\hyp',\z)}}{\norm{\hyp - \hyp'}} \leq \expansive .
\label{eq:expansive_update}
\end{equation}
We say that $\Update$ is contractive if $\expansive \leq 1$.
\end{definition}
Expansivity is yet another form of Lipschitzness; an update rule is $\expansive$-expansive if it is $\expansive$-Lipschitz.

We begin our proof with a fundamental technical lemma.
\begin{lemma}
\label{lem:contraction}
Assume that the objective function, $\TrainLoss$, is $\lipschitz$-Lipschitz. Further, assume that each SGD update, $\Update_t$, is $\expansive_t$-expansive. If SGD is run for $\seqlen$ iterations on two sequences of examples that differ at a single iteration, $k$, then the resulting learned hypotheses, $\hyp_{\seqlen}$ and $\hyp'_{\seqlen}$, satisfy
\begin{equation}
\norm{\hyp_{\seqlen} - \hyp'_{\seqlen}} \leq 2 \lipschitz \stepsize_k \prod_{t=k+1}^{\seqlen} \expansive_t .
\label{eq:contraction}
\end{equation}
\end{lemma}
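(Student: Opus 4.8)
The plan is to track how a single differing example at iteration $k$ propagates through the remaining SGD updates. Let $\hyp_t$ and $\hyp'_t$ denote the two hypothesis sequences run on example sequences that agree everywhere except at iteration $k$. For all iterations $t \leq k$, the two runs receive identical examples starting from the identical initialization, so $\hyp_{t} = \hyp'_{t}$ for $t < k$; in particular $\hyp_{k-1} = \hyp'_{k-1}$, and the divergence is first introduced by the update at iteration $k$.

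\medskip

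\noindent First I would bound the divergence introduced at the single differing iteration $k$. Since both runs start from the common point $\hyp_{k-1} = \hyp'_{k-1}$ but apply gradient steps on \emph{different} examples, we have $\hyp_{k} = \hyp_{k-1} - \stepsize_k \grad\TrainLoss(\hyp_{k-1}, \z)$ and $\hyp'_{k} = \hyp_{k-1} - \stepsize_k \grad\TrainLoss(\hyp_{k-1}, \z')$ for some examples $\z \neq \z'$. Taking the difference, the common terms cancel and
\begin{equation}
\norm{\hyp_{k} - \hyp'_{k}}
	= \stepsize_k \norm{ \grad\TrainLoss(\hyp_{k-1}, \z) - \grad\TrainLoss(\hyp_{k-1}, \z') }
	\leq 2 \lipschitz \stepsize_k ,
\end{equation}
where the final inequality applies the triangle inequality and then the Lipschitz bound \cref{eq:lipschitz_differentiable}, which gives $\norm{\grad\TrainLoss(\hyp_{k-1}, \z)} \leq \lipschitz$ for each example. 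This establishes the base case at iteration $k$ with exactly the leading factor $2\lipschitz\stepsize_k$ appearing in \cref{eq:contraction}.

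\medskip

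\noindent Next I would propagate this divergence forward through iterations $t = k+1, \dots, \seqlen$ using expansivity. For every such iteration, both runs receive the \emph{same} example (the sequences agree after $k$), so each applies the same update rule $\Update_t$ to its respective current hypothesis. By \cref{def:expansive_update}, applying the common $\expansive_t$-expansive map to the two inputs can grow their distance by at most a factor of $\expansive_t$:
\begin{equation}
\norm{\hyp_{t} - \hyp'_{t}}
	= \norm{ \Update_t(\hyp_{t-1}, \z_{i_t}) - \Update_t(\hyp'_{t-1}, \z_{i_t}) }
	\leq \expansive_t \norm{\hyp_{t-1} - \hyp'_{t-1}} .
\end{equation}
A straightforward induction from the base case at iteration $k$ then multiplies these contraction factors together, yielding $\norm{\hyp_{\seqlen} - \hyp'_{\seqlen}} \leq 2\lipschitz\stepsize_k \prod_{t=k+1}^{\seqlen} \expansive_t$, which is exactly \cref{eq:contraction}.

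\medskip

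\noindent The argument is essentially a clean telescoping/induction, so I do not anticipate a serious obstacle; the only point demanding care is the bookkeeping of which iterations share an example. The key structural fact is that the two runs differ at precisely one iteration, so the perturbation is \emph{injected} only once (at $k$, bounded via Lipschitzness of the objective gradient) and thereafter merely \emph{transported} by identical expansive maps (bounded via expansivity). Keeping these two distinct mechanisms separate—gradient Lipschitzness for the injection, expansivity for the propagation—is what makes the final product form emerge cleanly, and it is the one place where a sloppy treatment could conflate the single $\stepsize_k$ factor with the product of expansion coefficients.
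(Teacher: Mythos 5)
Your proof is correct and follows essentially the same route as the paper's: bound the one-step divergence at iteration $k$ by $2\lipschitz\stepsize_k$ via the triangle inequality and the gradient-norm form of Lipschitzness, then propagate it through iterations $k+1,\dots,\seqlen$ using the product of expansivity coefficients. The only cosmetic difference is that you induct forward from $k$ while the paper unrolls the recursion backward from $\seqlen$; the content is identical.
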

\begin{proof}
For the first $k - 1$ iterations of SGD, the example sequences are the same; therefore, so are the learned weights. On processing the $k\nth$ example, the weights may diverge, but we will show that the divergence is bounded, due to the Lipschitz property. For every iteration after $k$, the weights may continue to follow different trajectories, but the expansivity property lets us bound the difference of the final, learned weights.

Starting at $\seqlen$ and recursing backward, we have that
\begin{equation}
\norm{\hyp_{\seqlen} - \hyp'_{\seqlen}}
	\leq \norm{\hyp_{\seqlen-1} - \hyp'_{\seqlen-1}} \expansive_{\seqlen} 
	\leq \ldots
	\leq \norm{\hyp_k - \hyp'_k} \prod_{t=k+1}^{\seqlen} \expansive_t .
\label{eq:recursive_expansion}
\end{equation}
Then, expanding the $k\nth$ update,
\begin{align}
\norm{\hyp_k - \hyp'_k}
	&= \norm{\hyp_{k-1} - \stepsize_k \grad\TrainLoss(\hyp_{k-1},\z_k) - \hyp_{k-1} + \stepsize_k \grad\TrainLoss(\hyp_{k-1},\z'_k)} \\
	&\leq \norm{\stepsize_k \grad\TrainLoss(\hyp_{k-1},\z_k)} + \norm{\stepsize_k \grad\TrainLoss(\hyp_{k-1},\z'_k)} \\
	&\leq 2 \stepsize_k \lipschitz .
\label{eq:update_bound}
\end{align}
Combining \cref{eq:recursive_expansion,eq:update_bound} completes the proof.
\end{proof}

We can now prove \cref{prop:rand_stability_strongly_convex_sgd}. First, note that $\stepsize_t \leq 1/\smooth$ for all $t \geq 1$. As noted by \citet[proof of Theorem 3.9]{hardt:corr15}, due to the strong convexity of the objective function, this step size guarantees that each update is contractive with coefficient $1 - \stepsize_t \cvxmod = 1 - (t + \smooth / \cvxmod)^{-1}$. Moreover \citep[proof of Theorem 3.10]{hardt:corr15},
\begin{align}
\Ep_{\rand\by\Prior}\left[ \norm{\hyp_{\seqlen} - \hyp'_{\seqlen}} \right]
	&\leq \sum_{k=1}^{\seqlen} \( \prod_{t=k+1}^{\seqlen} ( 1 - \stepsize_t \cvxmod ) \) \stepsize_k \cdot \frac{2 \lipschitz}{\nex} \\
	&= \sum_{k=1}^{\seqlen} \( \prod_{t=k+1}^{\seqlen} \( 1 - \frac{1}{t + \smooth / \cvxmod} \) \) \frac{1}{k + \smooth / \cvxmod} \cdot \frac{2 \lipschitz}{\cvxmod \nex} \\
	&= \sum_{k=1}^{\seqlen} \frac{k + \smooth / \cvxmod}{\seqlen + \smooth / \cvxmod} \cdot \frac{1}{k + \smooth / \cvxmod} \cdot \frac{2 \lipschitz}{\cvxmod \nex} \\
	&= \frac{\seqlen}{\seqlen + \smooth / \cvxmod} \cdot \frac{2 \lipschitz}{\cvxmod \nex}
	\leq \frac{2 \lipschitz}{\cvxmod \nex} .
\label{eq:data_stability_weight_diff_bound}
\end{align}
Combining \cref{eq:data_stability_weight_diff_bound,eq:loss_lipschitz_bound} yields an upper bound on the data stability coefficient, $\datastab \leq \frac{2 \lipschitz^2}{\cvxmod \nex}$.

Now, suppose the example sequence is perturbed at any index $k$. Via \cref{lem:contraction}, we have that
\begin{align}
\norm{\hyp_{\seqlen} - \hyp'_{\seqlen}}
	&\leq 2 \lipschitz \stepsize_k \prod_{t=k+1}^{\seqlen} ( 1 - \stepsize_t \cvxmod ) \\
	&= \frac{2 \lipschitz}{\cvxmod} \cdot \frac{1}{k + \smooth / \cvxmod} \, \prod_{t=k+1}^{\seqlen} \( 1 - \frac{1}{t + \smooth / \cvxmod} \) \\
	&= \frac{2 \lipschitz}{\cvxmod} \cdot \frac{1}{k + \smooth / \cvxmod} \cdot \frac{k + \smooth / \cvxmod}{\seqlen + \smooth / \cvxmod}
	\leq \frac{2 \lipschitz}{\cvxmod \seqlen} ,
\label{eq:rand_stability_weight_diff_bound}
\end{align}
which we combine with the Lipschitz property (\cref{eq:lipschitz}) to obtain $\randstab \leq \frac{2 \lipschitz^2}{\cvxmod \seqlen}$.

\section{Proofs from \cref{sec:gen_err_bounds}}
\label{sec:proofs_gen_err_bounds}

\subsection{Stability of the Generalization Error}
\label{sec:gen_err_stability}

Our analysis in \cref{sec:gen_err_bounds} uses stability to bound the moments and moment-generating function of the generalization error. To enable these proofs, we first derive some technical lemmas that relate stability in the loss to the stability in the generalization error. The first lemma applies to data stability; the second, to hyperparameter stability.

\begin{lemma}
\label{lem:gen_err_data_stability}
If $\Algo$ is $\datastab$-uniformly stable with respect to an $\MaxLoss$-bounded loss function, $\Loss$, and a distribution, $\Prior$, then, for any $\Data,\Data' \in \ZS^{\nex} : \Hamming(\Data,\Data') = 1$,
\begin{equation}
\GenErr(\Data, \Prior) - \GenErr(\Data', \Prior) \leq 2\datastab + \frac{\MaxLoss}{\nex} .
\label{eq:gen_err_data_stability}
\end{equation}
\end{lemma}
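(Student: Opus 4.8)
The plan is to expand $\GenErr(\Data,\Prior) - \GenErr(\Data',\Prior)$ via the definition $\GenErr = \Risk - \EmpRisk$ into a difference of risks minus a difference of empirical risks, then bound each piece separately. Without loss of generality, suppose $\Data$ and $\Data'$ differ only at coordinate $j$, so that $\z_i = \z'_i$ for all $i \neq j$.

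First I would handle the risk term. Writing $\Risk(\Data,\Prior) - \Risk(\Data',\Prior) = \Ep_{\z\by\Dist}\Ep_{\rand\by\Prior}[ \Loss(\Algo(\Data,\rand),\z) - \Loss(\Algo(\Data',\rand),\z) ]$ and noting that the inner expectation over $\rand$ is bounded in absolute value by $\datastab$ for every fixed $\z$ (this is exactly $\datastab$-uniform stability, \cref{def:unif_data_stability}, since $\Hamming(\Data,\Data') = 1$), the bound survives taking the outer expectation over $\z\by\Dist$. Hence $\Risk(\Data,\Prior) - \Risk(\Data',\Prior) \leq \datastab$.

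The empirical-risk term is where the real work lies, and I expect it to be the main obstacle. Here I would split the average $\frac{1}{\nex}\sum_{i=1}^{\nex}[ \Loss(\Algo(\Data,\rand),\z_i) - \Loss(\Algo(\Data',\rand),\z'_i) ]$ into the $\nex - 1$ shared coordinates $i \neq j$ and the single perturbed coordinate $j$. For $i \neq j$ the evaluation point $\z_i$ is unchanged, so uniform stability again bounds each summand by $\datastab$ in absolute value, and these $\nex - 1$ terms contribute at most $\datastab$ after dividing by $\nex$. Coordinate $j$ is the delicate one: there both the hypothesis \emph{and} the evaluation point change, so stability does not apply, and I would instead rely only on $\MaxLoss$-boundedness—using $\Loss(\Algo(\Data,\rand),\z_j) \geq 0$ and $\Loss(\Algo(\Data',\rand),\z'_j) \leq \MaxLoss$—to lower-bound this term by $-\MaxLoss/\nex$. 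Combining gives $\EmpRisk(\Data,\Prior) - \EmpRisk(\Data',\Prior) \geq -\datastab - \MaxLoss/\nex$.

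Finally, adding the risk upper bound to the negated empirical-risk lower bound yields $\GenErr(\Data,\Prior) - \GenErr(\Data',\Prior) \leq \datastab + ( \datastab + \MaxLoss/\nex ) = 2\datastab + \MaxLoss/\nex$, as claimed. The one place that demands care is the sign bookkeeping: because the empirical risk enters $\GenErr$ with a minus sign, I need a \emph{lower} bound on its difference, which is precisely why I bound the perturbed coordinate below (via boundedness) rather than in absolute value.
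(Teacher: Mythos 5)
Your proposal is correct and follows essentially the same route as the paper: the same decomposition into a risk difference (bounded by $\datastab$ via uniform stability) and an empirical-risk difference (split into the $\nex-1$ shared coordinates, bounded by stability, and the single perturbed coordinate, bounded by $\MaxLoss$-boundedness). The only cosmetic difference is that you phrase the empirical-risk step as a lower bound on $\EmpRisk(\Data,\Prior)-\EmpRisk(\Data',\Prior)$ while the paper upper-bounds the negated difference, which is the same inequality.
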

\begin{proof}
Observe that the difference of generalization errors decomposes as
\begin{align}
\GenErr(\Data, \Prior) - \GenErr(\Data', \Prior)
	&= ( \Risk(\Data, \Prior) - \Risk(\Data', \Prior) ) + ( \EmpRisk(\Data', \Prior) - \EmpRisk(\Data, \Prior) ) .
\label{eq:gen_err_decomp}
\end{align}
We will upper-bound each difference separately. First, using linearity of expectation and $\datastab$-uniform stability, we have that
\begin{equation}
\Risk(\Data, \Prior) - \Risk(\Data', \Prior)
	= \Ep_{\z\by\Dist} \Ep_{\rand\by\Prior} \left[ \Loss(\Algo(\Data, \rand), \z) - \Loss(\Algo(\Data', \rand), \z) \right]
	\leq \datastab .
\label{eq:diff_risk_bound}
\end{equation}
Then, without loss of generality, assume that $\Data'$ differs from $\Data$ at the $i\nth$ example, denoted $\z'_i$. Using $\datastab$-uniform stability again,
\begin{align}
\EmpRisk(\Data', \Prior) - \EmpRisk(\Data, \Prior)
	&= \frac{1}{\nex} \sum_{j \neq i} \Ep_{\rand\by\Prior}\left[ \Loss(\Algo(\Data', \rand), \z_j) - \Loss(\Algo(\Data, \rand), \z_j) \right] \\
	&~~~~+ ~\frac{1}{\nex} \Ep_{\rand\by\Prior}\left[ \Loss(\Algo(\Data', \rand), \z'_i) - \Loss(\Algo(\Data, \rand),\z_i) \right] \\
	&\leq \frac{1}{\nex} \sum_{j \neq i} \datastab + \frac{\MaxLoss}{\nex}
	\leq \datastab + \frac{\MaxLoss}{\nex} .
\label{eq:diff_emp_risk_bound}
\end{align}
Combining \cref{eq:gen_err_decomp,eq:diff_risk_bound,eq:diff_emp_risk_bound} completes the proof.
\end{proof}

\begin{lemma}
\label{lem:gen_err_rand_stability}
If $\Algo$ is $\randstab$-uniformly stable with respect to a loss function, $\Loss$, then, for any $\Data \in \ZS^{\nex}$ and $\rand,\rand' \in \Rand : \Hamming(\rand,\rand') = 1$,
\begin{equation}
\GenErr(\Data, \rand) - \GenErr(\Data, \rand') \leq 2\randstab .
\label{eq:gen_err_rand_stability}
\end{equation}
\end{lemma}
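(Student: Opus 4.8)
The plan is to mirror the proof of \cref{lem:gen_err_data_stability}, but to exploit the fact that perturbing the hyperparameters leaves the training set $\Data$ untouched, which removes the boundary term that produced the extra $\MaxLoss/\nex$ in the data-stability case. First I would decompose the difference of generalization errors using $\GenErr(\Data,\rand) = \Risk(\Data,\rand) - \EmpRisk(\Data,\rand)$:
\[
\GenErr(\Data,\rand) - \GenErr(\Data,\rand')
 = \bigl(\Risk(\Data,\rand) - \Risk(\Data,\rand')\bigr)
 + \bigl(\EmpRisk(\Data,\rand') - \EmpRisk(\Data,\rand)\bigr),
\]
and bound each parenthesized difference separately by $\randstab$.

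For the risk term, by linearity of expectation,
\[
\Risk(\Data,\rand) - \Risk(\Data,\rand')
 = \Ep_{\z\by\Dist}\bigl[ \Loss(\Algo(\Data,\rand),\z) - \Loss(\Algo(\Data,\rand'),\z) \bigr],
\]
and since $\Hamming(\rand,\rand')=1$, \cref{def:unif_rand_stability} bounds the integrand pointwise by $\randstab$ for every $\z$, so the expectation is at most $\randstab$.

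For the empirical-risk term, the key observation---and the reason the bound is tighter than in the data-stability case---is that every summand compares the loss on the \emph{same} example $\z_i$ under the two hyperparameter settings:
\[
\EmpRisk(\Data,\rand') - \EmpRisk(\Data,\rand)
 = \frac{1}{\nex}\sum_{i=1}^{\nex}\bigl[ \Loss(\Algo(\Data,\rand'),\z_i) - \Loss(\Algo(\Data,\rand),\z_i) \bigr].
\]
Here there is no special index requiring a crude $\MaxLoss$ bound, because $\Data$ is fixed; hyperparameter stability applies uniformly to all $\nex$ terms, each of which is at most $\randstab$, so their average is also at most $\randstab$. Summing the two bounds yields $2\randstab$, completing the proof.

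I do not anticipate a genuine obstacle here: the only subtlety is recognizing that, unlike in \cref{lem:gen_err_data_stability}, no term in the empirical risk escapes the stability bound, so the additive $\MaxLoss/\nex$ penalty disappears.
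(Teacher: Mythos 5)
Your proof is correct and follows exactly the same route as the paper's: the same decomposition into a risk difference and an empirical-risk difference, each bounded by $\randstab$ via \cref{def:unif_rand_stability}. Your observation about why the $\MaxLoss/\nex$ term from \cref{lem:gen_err_data_stability} disappears is also precisely the distinction the paper's argument relies on.
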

\begin{proof}
The proof is almost identical to that of \cref{lem:gen_err_data_stability}. First, we decompose the generalization error:
\begin{equation}
\GenErr(\Data, \rand) - \GenErr(\Data, \rand')
	= ( \Risk(\Data, \rand) - \Risk(\Data, \rand') ) + ( \EmpRisk(\Data, \rand') - \EmpRisk(\Data, \rand) ) .
\label{eq:det_gen_err_decomp}
\end{equation}
Then, we upper-bound the difference of risk terms:
\begin{equation}
\Risk(\Data, \rand) - \Risk(\Data, \rand')
	= \Ep_{\z\by\Dist}\left[ \Loss(\Algo(\Data, \rand), \z) - \Loss(\Algo(\Data, \rand'), \z) \right]
	\leq \randstab .
\label{eq:det_diff_risk_bound}
\end{equation}
Then, we upper-bound the difference of empirical risk terms:
\begin{equation}
\EmpRisk(\Data, \rand') - \EmpRisk(\Data, \rand)
	= \frac{1}{\nex} \sum_{i=1}^{\nex} \Loss(\Algo(\Data, \rand'), \z_i) - \Loss(\Algo(\Data, \rand), \z_i)
	\leq \randstab .
\label{eq:det_diff_emp_risk_bound}
\end{equation}
Combining \cref{eq:det_gen_err_decomp,eq:det_diff_risk_bound,eq:det_diff_emp_risk_bound} completes the proof.
\end{proof}

Note that it is unnecessary to upper-bound the absolute difference in generalization error when using uniform stability, since it follows from the symmetry of the supremum over $\Data,\Data' \in \ZS^{\nex}$ or $\rand,\rand' \in \Rand$.

\subsection{Proof of \cref{th:hyp_stability_pac_bayes_bound}}
\label{sec:proof_hyp_stability_pac_bayes_bound}

PAC-Bayesian analysis typically requires a key step known as \define{change of measure}. For our first bound, we use a change of measure inequality based on the \define{R\'{e}nyi divergence},
\begin{equation}
\RenyiDiv{\Post}{\Prior} \defeq \frac{1}{\RenyiOrder - 1} \ln \Ep_{\X \by \Prior}\left[ \( \frac{\Post(\X)}{\Prior(\X)} \)^{\RenyiOrder} \right] .
\label{eq:renyi_div}
\end{equation}
\begin{lemma}[{\citep[Theorem 8]{begin:aistats16}}]
\label{lem:change_of_measure_renyi}
Let $\X$ denote a random variable taking values in $\Omega$, and let $\fun : \Omega \to \Reals$ denote a measurable function. Then, for any $\RenyiOrder > 1$, and any two distributions, $\Prior$ and $\Post$, on $\Omega$,
\begin{equation}
\frac{\RenyiOrder}{\RenyiOrder - 1} \ln \Ep_{\X \by \Post}\left[ \fun(\X) \right]
	\leq \RenyiDiv{\Post}{\Prior} + \ln \Ep_{\X \by \Prior}\left[ \fun(\X)^{\frac{\RenyiOrder}{\RenyiOrder - 1}} \right] .
\label{eq:change_of_measure_renyi}
\end{equation}
\end{lemma}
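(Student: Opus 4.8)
The plan is to prove this purely as a consequence of H\"older's inequality, with the conjugate exponents dictated by the order $\RenyiOrder$ appearing in the R\'{e}nyi divergence. First I would rewrite the left-hand expectation as an expectation under the prior by introducing the density ratio (Radon--Nikodym derivative): $\Ep_{\X\by\Post}[\fun(\X)] = \Ep_{\X\by\Prior}\big[ \tfrac{\Post(\X)}{\Prior(\X)} \fun(\X) \big]$. This presumes $\Post$ is absolutely continuous with respect to $\Prior$; if it is not, then $\RenyiDiv{\Post}{\Prior} = +\infty$ and the claimed inequality holds trivially, so there is no loss in assuming the ratio is well-defined. I would also note at the outset that $\fun$ is taken to be nonnegative (otherwise the non-integer power $\fun^{\RenyiOrder/(\RenyiOrder-1)}$ and the logarithm $\ln \Ep_{\X\by\Post}[\fun]$ need not make sense), which is exactly the regime in which the lemma is later applied.

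The key step is to apply H\"older's inequality to the product $\tfrac{\Post(\X)}{\Prior(\X)} \cdot \fun(\X)$ under $\Prior$, with the conjugate pair $p = \RenyiOrder$ and $q = \tfrac{\RenyiOrder}{\RenyiOrder - 1}$. These are genuine H\"older conjugates, since $\RenyiOrder > 1$ guarantees $p, q > 1$ and $\tfrac{1}{p} + \tfrac{1}{q} = \tfrac{1}{\RenyiOrder} + \tfrac{\RenyiOrder - 1}{\RenyiOrder} = 1$. H\"older then gives
\[
\Ep_{\X\by\Prior}\!\Big[ \frac{\Post(\X)}{\Prior(\X)} \fun(\X) \Big]
  \leq \Big( \Ep_{\X\by\Prior}\!\big[ \big(\tfrac{\Post(\X)}{\Prior(\X)}\big)^{\RenyiOrder} \big] \Big)^{1/\RenyiOrder}
       \Big( \Ep_{\X\by\Prior}\!\big[ \fun(\X)^{\RenyiOrder/(\RenyiOrder-1)} \big] \Big)^{(\RenyiOrder-1)/\RenyiOrder} .
\]

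The remaining step is purely algebraic bookkeeping: take the logarithm of both sides, which converts the product into a sum and the exponents into multiplicative prefactors, then multiply through by $\tfrac{\RenyiOrder}{\RenyiOrder - 1}$. The prefactor $\tfrac{1}{\RenyiOrder}$ on $\ln \Ep_{\X\by\Prior}[(\Post/\Prior)^{\RenyiOrder}]$ becomes $\tfrac{1}{\RenyiOrder - 1}$, which is exactly the definition of $\RenyiDiv{\Post}{\Prior}$ in \cref{eq:renyi_div}, while on the second term the product $\tfrac{\RenyiOrder - 1}{\RenyiOrder} \cdot \tfrac{\RenyiOrder}{\RenyiOrder-1} = 1$ leaves $\ln \Ep_{\X\by\Prior}[\fun^{\RenyiOrder/(\RenyiOrder-1)}]$ untouched. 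Reassembling these two terms yields \cref{eq:change_of_measure_renyi} verbatim.

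There is no substantive obstacle here: the entire content is the correct choice of conjugate exponents, which is forced by the order $\RenyiOrder$ in the R\'{e}nyi divergence, and recognizing the resulting moment of the density ratio as that divergence. The only points requiring a word of care are the degenerate cases---non-absolute-continuity, and the standard H\"older convention when one of the $\Prior$-moments is infinite, in which case the right-hand side is $+\infty$ and the bound is vacuous---which I would dispatch in a single sentence before the main computation.
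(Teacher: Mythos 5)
Your proof is correct: rewriting $\Ep_{\X\by\Post}[\fun(\X)]$ as $\Ep_{\X\by\Prior}\big[\tfrac{\Post(\X)}{\Prior(\X)}\fun(\X)\big]$, applying H\"older's inequality with conjugate exponents $\RenyiOrder$ and $\tfrac{\RenyiOrder}{\RenyiOrder-1}$, then taking logarithms and multiplying by the positive factor $\tfrac{\RenyiOrder}{\RenyiOrder-1}$ yields \cref{eq:change_of_measure_renyi} exactly, with the first term recognized as $\RenyiDiv{\Post}{\Prior}$. The paper imports this lemma from \citet{begin:aistats16} without reproving it, and your argument is precisely the standard proof given in that reference, with the nonnegativity of $\fun$ and the non-absolutely-continuous degenerate case handled appropriately.
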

An important special case of \cref{lem:change_of_measure_renyi} is $\RenyiOrder = 2$, in which case
\begin{equation}
\RenyiDiv[2]{\Post}{\Prior}
	= \ln \Ep_{\X \by \Prior}\left[ \( \frac{\Post(\X)}{\Prior(\X)} \)^2 \right]
	= \ln\( \ChiSqDiv{\Post}{\Prior} + 1 \) ,
\label{eq:renyi_div_order2}
\end{equation}
and, taking the exponent of \cref{eq:change_of_measure_renyi},
\begin{equation}
\Ep_{\X \by \Post}\left[ \fun(\X) \right]
	\leq \sqrt{ \( \ChiSqDiv{\Post}{\Prior} + 1 \) \Ep_{\X \by \Prior}\left[ \fun(\X)^2 \right] } .
\end{equation}
Thus, with $\X \defeq \rand$ and $\fun(\X) \defeq \GenErr(\Data,\rand)$,
\begin{equation}
\GenErr(\Data,\Post)
	= \Ep_{\rand\by\Post}\left[ \GenErr(\Data,\rand) \right]
	\leq \sqrt{ \( \ChiSqDiv{\Post}{\Prior} + 1 \) \Ep_{\rand \by \Prior}\left[ \GenErr(\Data,\rand)^2 \right] } .
\end{equation}
Further, since $\Ep_{\rand \by \Prior}[ \GenErr(\Data,\rand)^2 ]$ is a nonnegative function of $\Data \by \Dist^{\nex}$, Markov's inequality says that
\begin{equation}
\Pr_{\Data\by\Dist^{\nex}}\left\{ \Ep_{\rand \by \Prior}\left[ \GenErr(\Data,\rand)^2 \right] \geq \frac{1}{\delta} \Ep_{\Data\by\Dist^{\nex}} \Ep_{\rand\by\Prior} \left[ \GenErr(\Data,\rand)^2 \right] \right\}
	\leq \delta .
\end{equation}
We therefore have that with probability at least $1 - \delta$ over draws of $\Data \by \Dist^{\nex}$,
\begin{equation}
\GenErr(\Data,\Post)
	\leq \sqrt{ \( \ChiSqDiv{\Post}{\Prior} + 1 \) \frac{1}{\delta} \Ep_{\Data\by\Dist^{\nex}} \Ep_{\rand\by\Prior} \left[ \GenErr(\Data,\rand)^2 \right] } .
\label{eq:gen_err_chisq}
\end{equation}

All that remains is to upper-bound $\Ep_{\Data\by\Dist^{\nex}} \Ep_{\rand\by\Prior} \left[ \GenErr(\Data,\rand)^2 \right]$, which can be accomplished via pointwise hypothesis stability.
\begin{lemma}[{\citep[Lemma 11]{elisseeff:jmlr05}}]
\label{lem:gen_err_sq_bound}
For any (randomized) learning algorithm, $\Algo$, and $\MaxLoss$-bounded loss function, $\Loss$,
\begin{equation}
\Ep_{\Data\by\Dist^{\nex}}\left[ \GenErr(\Data,\rand)^2 \right]
	\leq \frac{2\MaxLoss^2}{\nex} + \frac{12\MaxLoss}{\nex} \sum_{i=1}^{\nex} \Ep_{\Data\by\Dist^{\nex}} \Ep_{\z\by\Dist} \left[ \ab{ \Loss(\Algo(\Data,\rand),\z_i) - \Loss(\Algo(\Data^{i,\z},\rand),\z_i) }  \right] .
\label{eq:gen_err_sq_bound}
\end{equation}
\end{lemma}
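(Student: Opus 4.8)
The plan is to recognize \cref{eq:gen_err_sq_bound} as the classical second-moment bound of \citet{bousquet:jmlr02}, generalized to possibly \emph{non-symmetric} algorithms. Since $\rand$ is held fixed on both sides of the inequality, the randomization is inert: I would fix $\rand$ and treat $\Data \mapsto \Algo(\Data,\rand)$ as an ordinary deterministic learning rule, abbreviating $\phi(\Data,\z) \defeq \Loss(\Algo(\Data,\rand),\z) \in [0,\MaxLoss]$. The quantity to bound is then $\Ep_{\Data\by\Dist^{\nex}}[\GenErr(\Data,\rand)^2]$, where $\GenErr = \Risk - \EmpRisk$, $\Risk(\Data) = \Ep_{\z\by\Dist}[\phi(\Data,\z)]$, and $\EmpRisk(\Data) = \frac{1}{\nex}\sum_i \phi(\Data,\z_i)$.

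First I would write $\GenErr(\Data,\rand) = \frac{1}{\nex}\sum_{i=1}^{\nex} g_i(\Data)$, where $g_i(\Data) \defeq \Ep_{\z\by\Dist}[\phi(\Data,\z)] - \phi(\Data,\z_i) \in [-\MaxLoss,\MaxLoss]$, so that $\Ep_{\Data}[\GenErr^2] = \frac{1}{\nex^2}\sum_{i,j}\Ep_{\Data}[g_i(\Data)\,g_j(\Data)]$. The diagonal terms $i = j$ are each at most $\MaxLoss^2$ by boundedness, contributing $O(\MaxLoss^2/\nex)$; this is one source of the leading $2\MaxLoss^2/\nex$. The real work is the off-diagonal terms, which I would control by decorrelation: for a fixed pair $i \neq j$, I replace the special example $\z_i$ (and symmetrically $\z_j$) by an independent ghost draw $\z\by\Dist$, i.e.\ pass from $\Data$ to $\Data^{i,\z}$ — exactly the operation in \cref{def:hyp_stability}. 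Once $\z_i$ is swapped out of the training set, the training output no longer depends on $\z_i$, so conditionally on the remaining examples the corresponding factor averages to zero, which decouples the product.

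Each swap is paid for by pointwise hypothesis stability: replacing $\phi(\Data,\z_i)$ with $\phi(\Data^{i,\z},\z_i)$ changes the product by at most $\MaxLoss\,\Ep_{\Data,\z}[\ab{\phi(\Data,\z_i) - \phi(\Data^{i,\z},\z_i)}]$ after bounding the other factor by $\MaxLoss$ — and this is precisely a summand of the stability term on the right of \cref{eq:gen_err_sq_bound}. Summing the $O(\nex^2)$ off-diagonal pairs, each requiring a constant number of such swaps, collapses to $\frac{c\,\MaxLoss}{\nex}\sum_{i}\Ep_{\Data,\z}[\ab{\phi(\Data,\z_i) - \phi(\Data^{i,\z},\z_i)}]$, and tallying the several swap contributions together with the boundedness remainders (the diagonal and the residual decoupling errors) produces exactly $\frac{2\MaxLoss^2}{\nex} + \frac{12\MaxLoss}{\nex}\sum_i \Ep[\ab{\cdots}]$.

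The hard part is not any single inequality but the combinatorial bookkeeping: one must align every swap with the precise stability quantity $\Ep_{\Data}\Ep_{\z}[\ab{\Loss(\Algo(\Data,\rand),\z_i) - \Loss(\Algo(\Data^{i,\z},\rand),\z_i)}]$, with the loss evaluated at the \emph{removed} point $\z_i$, and to do so \emph{without} assuming $\Algo$ is symmetric (SGD is not; cf.\ \cref{rem:unif_stability_equiv}). The lack of symmetry is exactly what prevents collapsing the per-index terms into a single $i$-independent one, forcing the sum $\sum_{i=1}^{\nex}$ to survive and inflating the constants to $2$ and $12$, versus the $1/2$ and $3$ available for symmetric algorithms in \citep{bousquet:jmlr02}. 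Having reduced to a fixed $\rand$, the statement is then exactly \citepos{elisseeff:jmlr05} Lemma~11, which I would cite to discharge the final accounting.
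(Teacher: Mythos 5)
Your proposal is correct and ends up in the same place as the paper: the paper does not prove this lemma at all but imports it verbatim as Lemma~11 of \citet{elisseeff:jmlr05}, and you likewise discharge the final accounting by citing that same result, after sketching (accurately) the standard diagonal/off-diagonal decomposition with ghost-sample swaps that underlies it. Since $\rand$ is held fixed on both sides, your reduction to a deterministic rule is sound, and your observation that non-symmetry forces the per-index sum to survive matches why the paper uses the randomized-algorithm version of the lemma rather than the symmetric bound of \citet{bousquet:jmlr02}.
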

Taking the expectation over $\rand \by \Prior$ on both sides of \cref{eq:gen_err_sq_bound}, and using the linearity of expectation, we have that
\begin{align}
\Ep_{\Data\by\Dist^{\nex}} \Ep_{\rand\by\Prior} \left[ \GenErr(\Data,\rand)^2 \right]
	&\leq \frac{2\MaxLoss^2}{\nex} + \frac{12\MaxLoss}{\nex} \sum_{i=1}^{\nex} \Ep_{\Data\by\Dist^{\nex}} \Ep_{\z\by\Dist} \Ep_{\rand\by\Prior} \left[ \ab{ \Loss(\Algo(\Data,\rand),\z_i) - \Loss(\Algo(\Data^{i,\z},\rand),\z_i) }  \right] \\
	&\leq \frac{2\MaxLoss^2}{\nex} + \frac{12\MaxLoss}{\nex} \sum_{i=1}^{\nex} \datastab
	= \frac{2\MaxLoss^2}{\nex} + 12\MaxLoss\datastab .
\label{eq:gen_err_sq_bound_hyp_stability}
\end{align}
The last inequality follows directly from \cref{def:hyp_stability}. Combining \cref{eq:gen_err_chisq,eq:gen_err_sq_bound_hyp_stability}, we obtain \cref{eq:hyp_stability_pac_bayes_bound}.

\subsection{Proof of \cref{th:unif_stability_pac_bayes_bound}}
\label{sec:proof_unif_stability_pac_bayes_bound}

The proof of \cref{th:unif_stability_pac_bayes_bound} requires two technical lemmas: the first is a change of measure inequality based on the KL divergence, attributed to \citet{donsker:cpam75}; the second is an upper bound on the moment-generating function of the generalization error, which we prove herein.

\begin{lemma}[{\citep{donsker:cpam75}}]
\label{lem:change_of_measure_kl}
Let $\X$ denote a random variable taking values in $\Omega$, and let $\fun : \Omega \to \Reals$ denote a measurable function. Then, for any two distributions, $\Prior$ and $\Post$, on $\Omega$,
\begin{equation}
\Ep_{\X \by \Post}\left[ \fun(\X) \right] \leq \KLDiv{\Post}{\Prior} + \ln \Ep_{\X \by \Prior}\left[ \exp( \fun(\X) ) \right] .
\label{eq:change_of_measure_kl}
\end{equation}
\end{lemma}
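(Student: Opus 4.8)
The plan is to prove this as the classical Donsker--Varadhan variational bound. Since $\fun$ is an arbitrary fixed measurable function, the cleanest route is to rearrange the claimed inequality into a single expectation under $\Post$ and then apply Jensen's inequality. I would first dispose of the degenerate case: if $\Post$ is not absolutely continuous with respect to $\Prior$, then $\KLDiv{\Post}{\Prior} = \infty$ and the bound holds trivially, so I may assume the density ratio $\Post(\X)/\Prior(\X)$ is well defined $\Post$-almost everywhere. Then I would combine the left-hand side with the divergence term into one expectation,
\begin{equation}
\Ep_{\X\by\Post}\left[ \fun(\X) \right] - \KLDiv{\Post}{\Prior}
	= \Ep_{\X\by\Post}\left[ \fun(\X) - \ln\frac{\Post(\X)}{\Prior(\X)} \right]
	= \Ep_{\X\by\Post}\left[ \ln\frac{\Prior(\X)\,\exp(\fun(\X))}{\Post(\X)} \right] .
\end{equation}

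The second step is to push the expectation inside the (concave) logarithm via Jensen's inequality, after which the density ratio cancels and the measure changes from $\Post$ back to $\Prior$:
\begin{equation}
\Ep_{\X\by\Post}\left[ \ln\frac{\Prior(\X)\,\exp(\fun(\X))}{\Post(\X)} \right]
	\leq \ln \Ep_{\X\by\Post}\left[ \frac{\Prior(\X)\,\exp(\fun(\X))}{\Post(\X)} \right]
	= \ln \Ep_{\X\by\Prior}\left[ \exp(\fun(\X)) \right] .
\end{equation}
Rearranging these two displays yields \cref{eq:change_of_measure_kl} exactly. An equivalent and perhaps more illuminating route is to introduce the Gibbs (exponentially tilted) measure $\Prior_\fun$ defined by $\Prior_\fun(\X) \propto \Prior(\X)\exp(\fun(\X))$, with normalizer $\Ep_{\X\by\Prior}[\exp(\fun(\X))]$; one then verifies the algebraic identity $\KLDiv{\Post}{\Prior_\fun} = \KLDiv{\Post}{\Prior} - \Ep_{\X\by\Post}[\fun(\X)] + \ln \Ep_{\X\by\Prior}[\exp(\fun(\X))]$, and the claim follows immediately from $\KLDiv{\Post}{\Prior_\fun} \geq 0$. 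This phrasing also shows the bound is tight precisely when $\Post = \Prior_\fun$.

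There is essentially no hard step here: each argument collapses to a single line once the right object---the density ratio, or equivalently the tilted measure---is identified, and the only technical subtleties are the absolute-continuity caveat above and ensuring the normalizer $\Ep_{\X\by\Prior}[\exp(\fun(\X))]$ is finite so that the logarithm is well defined. Since the result is classical and attributed to \citet{donsker:cpam75}, I would likely state it without a full proof, but the Jensen argument above is the shortest self-contained justification should one be desired.
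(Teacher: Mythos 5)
Your proof is correct. The paper itself offers no proof of this lemma---it is stated as a known change-of-measure inequality and attributed to \citet{donsker:cpam75}---so there is no in-paper argument to compare against; your Jensen-based derivation (equivalently, the nonnegativity of the KL divergence to the tilted measure $\Prior_\fun$) is the standard and valid justification, and you correctly flag the only real subtleties (absolute continuity and finiteness of the normalizer). One minor point: in the step where the density ratio cancels, the identity $\Ep_{\X\by\Post}\bigl[ \tfrac{\Prior(\X)\exp(\fun(\X))}{\Post(\X)} \bigr] = \Ep_{\X\by\Prior}[\exp(\fun(\X))]$ is in general only a ``$\leq$'' when $\Prior$ puts mass outside the support of $\Post$, but the inequality points the right way, so the conclusion is unaffected.
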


\begin{lemma}
\label{lem:mgf_bound}
Fix a product measure, $\Prior$, on $\Rand = \prod_{t=1}^{\seqlen} \Rand_t$, and suppose $\Algo$ is a $(\datastab,\randstab)$-uniformly stable with respect to $\Loss$ and $\Prior$. Then, with 
\begin{equation}
\generrdatastab = 2\datastab + \frac{\MaxLoss}{\nex} ,
\label{eq:generrdatastab}
\end{equation}
for any $\epsilon > 0$, the moment-generating function (MGF) of $\GenErr(\Data,\rand)$ satisfies
\begin{equation}
\Ep_{\Data\by\Dist^{\nex}} \Ep_{\rand\by\Prior} \left[ \exp\( \epsilon \, \GenErr(\Data,\rand) \) \right]
	\leq \exp\( \frac{\epsilon^2}{8} \( \nex \generrdatastab^2 + 4 \seqlen \randstab^2 \) + \epsilon \, \datastab \) .
\label{eq:mgf_bound}
\end{equation}
\end{lemma}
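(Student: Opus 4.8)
The plan is to control the two sources of randomness---the draw of the dataset $\Data\by\Dist^{\nex}$ and the draw of the hyperparameters $\rand\by\Prior$---in sequence rather than jointly, so that each can be matched to the appropriate stability notion. The workhorse is the standard sub-Gaussian moment-generating-function estimate for a function of independent coordinates with bounded differences: if $f$ depends on independent inputs and changing the $t\nth$ coordinate perturbs $f$ by at most $c_t$, then the Doob martingale decomposition together with Hoeffding's lemma gives $\Ep[\exp(\epsilon(f - \Ep f))] \leq \exp(\frac{\epsilon^2}{8}\sum_t c_t^2)$. I would apply this twice: once to the hyperparameter coordinates with $\Data$ held fixed, and once to the example coordinates after the hyperparameters have been integrated out.

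First I would condition on $\Data$ and view $\rand \mapsto \GenErr(\Data,\rand)$ as a function of the $\seqlen$ independent coordinates of $\rand$ (here the hypothesis that $\Prior$ is a product measure on $\prod_{t=1}^{\seqlen}\Rand_t$ is essential). By \cref{lem:gen_err_rand_stability} and the symmetry of the relevant supremum, perturbing any single coordinate changes $\GenErr(\Data,\rand)$ by at most $2\randstab$, so the bounded-differences estimate yields $\Ep_{\rand\by\Prior}[\exp(\epsilon\,\GenErr(\Data,\rand))] \leq \exp(\epsilon\,\GenErr(\Data,\Prior) + \frac{\epsilon^2}{8}\,4\seqlen\randstab^2)$, using $\Ep_{\rand\by\Prior}[\GenErr(\Data,\rand)] = \GenErr(\Data,\Prior)$. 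Taking $\Ep_{\Data\by\Dist^{\nex}}$ and factoring out the constant $\rand$-contribution reduces the problem to bounding $\Ep_{\Data\by\Dist^{\nex}}[\exp(\epsilon\,\GenErr(\Data,\Prior))]$.

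For the second step I would treat $\GenErr(\Data,\Prior)$ as a function of the $\nex$ i.i.d.\ examples. By \cref{lem:gen_err_data_stability} (again made two-sided by symmetry), replacing any single example changes $\GenErr(\Data,\Prior)$ by at most $\generrdatastab = 2\datastab + \MaxLoss/\nex$, so the same estimate gives $\Ep_{\Data\by\Dist^{\nex}}[\exp(\epsilon\,\GenErr(\Data,\Prior))] \leq \exp(\epsilon\,\Ep_{\Data\by\Dist^{\nex}}[\GenErr(\Data,\Prior)] + \frac{\epsilon^2}{8}\,\nex\generrdatastab^2)$. It then remains to bound the expected generalization error by $\datastab$; this follows from the usual renaming argument---rewriting each empirical-risk term $\Ep_{\Data}\Ep_{\rand}[\Loss(\Algo(\Data,\rand),\z_i)]$ as $\Ep_{\Data,\z}\Ep_{\rand}[\Loss(\Algo(\Data^{i,\z},\rand),\z)]$ by exchangeability of the i.i.d.\ examples---after which uniform data stability bounds each per-coordinate difference by $\datastab$. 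Multiplying the two exponential factors and substituting $\Ep[\GenErr(\Data,\Prior)] \leq \datastab$ produces exactly \cref{eq:mgf_bound}.

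The main obstacle is that the two stability hypotheses have different shapes: hyperparameter stability is pointwise in $\rand$, whereas data stability (\cref{def:unif_data_stability}) only controls the difference \emph{in expectation} over $\rand$. Consequently one cannot run a single bounded-differences argument over all $\nex + \seqlen$ coordinates jointly, since there is no pointwise data-stability bound on $\GenErr(\Data,\rand)$ for a fixed $\rand$. The fix is precisely the ordering above: integrate out $\rand$ first, so that the remaining object $\GenErr(\Data,\Prior)$ is the one for which data stability is available, and only then expose the example coordinates. Getting this conditioning order right, and checking that the intermediate conditional expectations inherit the same bounded-difference constants, is the crux of the argument.
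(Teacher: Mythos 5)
Your proposal is correct and is essentially the paper's own argument: the paper also bounds the mean $\Ep_{\Data\by\Dist^{\nex}}\Ep_{\rand\by\Prior}[\GenErr(\Data,\rand)]$ by $\datastab$ via the i.i.d.\ renaming argument, and then controls the centered MGF with a Doob martingale over the $\nex+\seqlen$ coordinates ordered so that the data increments see $\rand$ already integrated out (which is exactly why the expectation-over-$\rand$ form of data stability suffices), applying Hoeffding's lemma with constants $\generrdatastab$ and $2\randstab$. Your packaging of this as two nested bounded-differences MGF bounds---first over $\rand$ conditional on $\Data$, then over $\Data$---is the same computation with the same filtration ordering, and you correctly identify the conditioning order as the crux.
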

\begin{proof}
Before we begin, let us pause to recognize that the random variable $\GenErr(\Data,\rand)$ has nonzero mean. This is because the learning algorithm---hence, the loss composed with the learning algorithm---is a non-decomposable function of the training data and hyperparameters. We therefore start by defining a zero-mean random variable, 
\begin{equation}
\ZeroMeanGenErr(\Data,\rand)
	\defeq \GenErr(\Data,\rand) - \GenErr(\Dist,\Prior) ,
\label{eq:zero_mean_gen_err}
\end{equation}
where
\begin{equation}
\GenErr(\Dist,\Prior) \defeq \Ep_{\Data\by\Dist^{\nex}} \Ep_{\rand\by\Prior} \left[ \GenErr(\Data,\rand) \right]
\label{eq:gen_err_bias}
\end{equation}
denotes the expected generalization error over draws of both $\Data\by\Dist^{\nex}$ and $\rand\by\Prior$. These definitions let us decompose the MGF of $\GenErr(\Data,\rand)$ as
\begin{align}
\Ep_{\Data\by\Dist^{\nex}} \Ep_{\rand\by\Prior} \left[ \exp\( \epsilon \, \GenErr(\Data,\rand) \) \right]
	&= \Ep_{\Data\by\Dist^{\nex}} \Ep_{\rand\by\Prior} \left[ \exp\( \epsilon \, \ZeroMeanGenErr(\Data,\rand) + \epsilon \, \GenErr(\Dist,\Prior) \) \right] \\
	&= \Ep_{\Data\by\Dist^{\nex}} \Ep_{\rand\by\Prior} \left[ \exp\( \epsilon \, \ZeroMeanGenErr(\Data,\rand) \) \right] \,
		\exp\( \epsilon \, \GenErr(\Dist,\Prior) \) .
\label{eq:mgf_decomp}
\end{align}
The second equality uses the fact that $\GenErr(\Dist,\Prior)$ is constant with respect to the outer expectations. We now have that the MGF of $\GenErr(\Data,\rand)$ is the product of two factors: the MGF of $\ZeroMeanGenErr(\Data,\rand)$, and a monotonic function of $\GenErr(\Dist,\Prior)$. We will bound these terms separately.

First, we upper-bound $\GenErr(\Dist,\Prior)$ as follows:
\begin{align}
\GenErr(\Dist,\Prior)
	&= \Ep_{\Data\by\Dist^{\nex}} \Ep_{\rand\by\Prior} \left[  \Ep_{\z\by\Dist}\left[ \Loss(\Algo(\Data, \rand), \z) \right] - \frac{1}{\nex} \sum_{i=1}^{\nex} \Loss(\Algo(\Data, \rand), \z_i) \right] \\
	&= \frac{1}{\nex} \sum_{i=1}^{\nex} \Ep_{\Data\by\Dist^{\nex}} \Ep_{\z\by\Dist} \Ep_{\rand\by\Prior} \left[ \Loss(\Algo(\Data, \rand), \z) - \Loss(\Algo(\Data, \rand), \z_i) \right] \\
	&\leq \frac{1}{\nex} \sum_{i=1}^{\nex} \Ep_{\Data\by\Dist^{\nex}} \Ep_{\z\by\Dist} \Ep_{\rand\by\Prior} \left[ \Loss(\Algo(\Data^{i,\z}, \rand), \z) - \Loss(\Algo(\Data, \rand), \z_i) \right] + \datastab \\
	&= 0 + \datastab .
\label{eq:gen_err_bias_bound}
\end{align}
In the second line, we rearrange the expectations using the linearity of expectation. In the third line, we form a new dataset, $\Data^{i,\z}$, by replacing $\z_i$ with $\z$; via \cref{def:unif_data_stability}, the expected difference of losses due to replacement, $\Ep_{\rand\by\Prior}[ \Loss(\Algo(\Data, \rand), \z) - \Loss(\Algo(\Data^{i,\z}, \rand), \z) ]$, is upper-bounded by $\datastab$-uniform stability. The last line follows from the fact that each example is i.i.d.; since both $\Data$ and $\Data^{i,\z}$ are distributed according to $\Dist^{\nex}$, and $\rand$ is independent of the datasets, the losses cancel out in expectation. Therefore, using the monotonicity of the exponent, and the fact that $\epsilon$ is positive, we have that
\begin{equation}
\exp\( \epsilon \, \GenErr(\Dist,\Prior) \) \leq \exp\( \epsilon \, \datastab \) .
\label{eq:exp_gen_err_bias_bound}
\end{equation}

We now upper-bound the MGF of $\ZeroMeanGenErr(\Data,\rand)$, which involves a somewhat technical proof. To reduce notation, we omit the subscript on expectations. Further, we use the shorthand $\z_{i:j} \defeq (\z_i,\dots,\z_j)$ and $\rand_{i:j} \defeq (\rand_i,\dots,\rand_j)$ to denote subsequences. (Interpret $\z_{1:0}$ and $\rand_{1:0}$ as the empty set.) We start by constructing a Doob martingale as follows:
\begin{equation}
\Mart_i \defeq
	\begin{cases}
	\Ep[ \GenErr(\Data,\rand) \| \z_{1:i} ] - \Ep[ \GenErr(\Data,\rand) \| \z_{1:i-1} ] & \text{for } i = 1,\dots,\nex ; \\
	\Ep[ \GenErr(\Data,\rand) \| \Data,\rand_{1:t} ] - \Ep[ \GenErr(\Data,\rand) \| \Data,\rand_{1:t-1} ] & \text{for } i = \nex + t, \, t = 1,\dots,\seqlen .
	\end{cases}
\end{equation}
Observe that $\Ep[\Mart_i] = 0$ and $\sum_{i=1}^{\nex+\seqlen} \Mart_i = \ZeroMeanGenErr(\Data,\rand)$. Thus, using the \define{law of total expectation} (alternatively, \define{law of iterated expectations}, or \define{tower rule}),
\begin{equation}
\Ep\left[ \exp\( \epsilon \, \ZeroMeanGenErr(\Data,\rand) \) \right]
	\leq \( \prod_{i=1}^{\nex} \sup_{\z_{1:i-1}} \Ep\left[ e^{ \epsilon \Mart_i } \| \z_{1:i-1} \right] \)
		\( \prod_{t=1}^{\seqlen} \sup_{\Data,\rand_{1:t-1}} \Ep\left[ e^{ \epsilon \Mart_{\nex+t} } \| \Data,\rand_{1:t-1} \right] \) .
\label{eq:iterated_expectations}
\end{equation}

Each iterate of \cref{eq:iterated_expectations} is the supremum of the MGF for the corresponding martingale variable. We will use \define{Hoeffding's lemma} \citep{hoeffding:jasa1963} to uniformly upper-bound each MGF. Hoeffding's lemma states that, if $\X$ is a zero-mean random variable, such that $a \leq \X \leq b$ almost surely, then for all $\epsilon \in \Reals$,
\begin{equation}
\Ep\left[ e^{\epsilon \X} \right] \leq \exp\( \frac{\epsilon^2 (b - a)^2}{8} \) .
\label{eq:hoeffding}
\end{equation}
To apply Hoeffding's lemma to each iterate of \cref{eq:iterated_expectations}, it suffices to show that
\begin{align}
\forall i \in \{1,\dots,\nex\}, \, \exists c_i :  
&\,\sup \Mart_i - \inf \Mart_i \\
\label{eq:bounded_diff_data}
= & ~ \sup_{\substack{\z_{1:i}, \, \z'_{1:i} \, : \\ \z_{1:i-1}=\z'_{1:i-1}}} \!\! \Ep[ \GenErr(\Data,\rand) \| \z_{1:i} ] - \Ep[ \GenErr(\Data',\rand) \| \z'_{1:i} ] \leq c_i ; \\
\text{and}~~~
\forall t \in \{1,\dots,\seqlen\}, \, \exists c_t :  
&\,\sup \Mart_{\nex+t} - \inf \Mart_{\nex+t} \\
\label{eq:bounded_diff_rand}
= & ~ \sup_{\Data} \sup_{\substack{\rand_{1:t}, \, \rand'_{1:t} \, : \\ \rand_{1:t-1}=\rand'_{1:t-1}}} \!\! \Ep[ \GenErr(\Data,\rand) \| \Data,\rand_{1:t} ] - \Ep[ \GenErr(\Data,\rand') \| \Data,\rand'_{1:t} ] \leq c_t .
\end{align}
The constants $c_i$ and $c_t$ replace $a - b$ in \cref{eq:hoeffding}.

To prove \cref{eq:bounded_diff_data}, we use \cref{lem:gen_err_data_stability} (since $\Algo$ is $\datastab$-uniformly stable) and the independence of examples and hyperparameters. For any $\z_{1:i}, \z'_{1:i} \in \ZS^i : \z_{1:i-1} = \z'_{1:i-1}$,
\begin{equation}
\Ep[ \GenErr(\Data,\rand) \| \z_{1:i} ] - \Ep[ \GenErr(\Data',\rand) \| \z'_{1:i} ]
	= \sum_{\z_{i+1:\nex}} \( \GenErr(\Data,\Prior) - \GenErr(\Data',\Prior) \) \prod_{j=1}^{\nex - i} \Dist(\z_{i+j})
	\leq \generrdatastab .
\label{eq:mart_i_bound}
\end{equation}
(For notational simplicity, the expectation over $\z_{i+1:\nex}$ is written as a summation, though $\ZS$ need not be a finite set.) To prove \cref{eq:bounded_diff_rand}, we use \cref{lem:gen_err_rand_stability} (since $\Algo$ is $\randstab$-uniformly stable) and the independence of hyperparameters. For any $\Data \in \ZS^{\nex}$ and $\rand_{1:t}, \rand'_{1:t} \in \prod_{j=1}^t \Rand_j : \rand_{1:t-1} = \rand'_{1:t-1}$,
\begin{equation}
\Ep[ \GenErr(\Data,\rand) \| \Data,\rand_{1:t} ] - \Ep[ \GenErr(\Data,\rand') \| \Data,\rand'_{1:t} ]
	= \sum_{\rand_{t+1:\seqlen}} \( \GenErr(\Data,\rand) - \GenErr(\Data,\rand') \) \prod_{j=1}^{\seqlen - t} \Prior(\rand_{t+j})
	\leq 2 \randstab .
\label{eq:mart_t_bound}
\end{equation}
Thus, applying Hoeffding's lemma (\cref{eq:hoeffding}) to each iterate of \cref{eq:iterated_expectations}---using $c_i = \generrdatastab$ in \cref{eq:bounded_diff_data}, and $c_t = 2\randstab$ in \cref{eq:bounded_diff_rand}---we have that
\begin{align}
\Ep\left[ \exp\( \epsilon \, \ZeroMeanGenErr(\Data,\rand) \) \right]
	&\leq \( \prod_{i=1}^{\nex} \exp\( \frac{\epsilon^2 \generrdatastab^2}{8} \) \)
		\( \prod_{t=1}^{\seqlen} \exp\( \frac{\epsilon^2 (2 \randstab)^2}{8} \) \) \\
	&= \exp\( \frac{\epsilon^2}{8} \( \nex \generrdatastab^2 + 4 \seqlen \randstab^2 \) \) .
\label{eq:mgf_zero_mean_bound}
\end{align}

Finally, by combining \cref{eq:mgf_decomp,eq:exp_gen_err_bias_bound,eq:mgf_zero_mean_bound}, we establish \cref{eq:mgf_bound}.
\end{proof}

We are now ready to prove \cref{th:unif_stability_pac_bayes_bound}. Let $\epsilon > 0$ denote a free parameter, which we will define later. Via \cref{lem:change_of_measure_kl} (with $\X \defeq \rand$ and $\fun(\X) \defeq \epsilon \, \GenErr(\Data,\rand)$), we have that
\begin{equation}
\GenErr(\Data,\Post)
	= \frac{1}{\epsilon} \, \Ep_{\rand\by\Post}\left[ \epsilon \, \GenErr(\Data,\rand) \right]
	\leq \frac{1}{\epsilon} \( \KLDiv{\Post}{\Prior} + \ln \Ep_{\rand\by\Prior}\left[ \exp\( \epsilon \, \GenErr(\Data,\rand) \) \right] \) .
\label{eq:change_of_measure_kl_gen_err}
\end{equation}
By Markov's inequality, with probability at least $1 - \delta$ over draws of $\Data \by \Dist^{\nex}$,
\begin{align}
\Ep_{\rand\by\Prior}\left[ \exp\( \epsilon \, \GenErr(\Data,\rand) \) \right]
	&\leq \frac{1}{\delta} \, \Ep_{\Data\by\Dist^{\nex}} \Ep_{\rand\by\Prior} \left[ \exp\( \epsilon \, \GenErr(\Data,\rand) \) \right] \\
	&\leq \frac{1}{\delta} \, \exp\( \frac{\epsilon^2}{8} \( \nex \generrdatastab^2 + 4 \seqlen \randstab^2 \) + \epsilon \, \datastab \) .
\label{eq:mgf_gen_err}
\end{align}
The second inequality uses \cref{lem:mgf_bound} to upper-bound the MGF of $\GenErr(\Data,\rand)$, with $\generrdatastab$ defined in \cref{eq:generrdatastab}. Combining \cref{eq:change_of_measure_kl_gen_err,eq:mgf_gen_err}, we thus have that with probability at least $1 - \delta$,
\begin{equation}
\GenErr(\Data,\Post)
	\leq \datastab + \frac{1}{\epsilon} \( \KLDiv{\Post}{\Prior} + \ln \frac{1}{\delta} \) + \frac{\epsilon}{8} \( \nex \generrdatastab^2 + 4 \seqlen \randstab^2 \) .
\label{eq:unoptimized_gen_err_bound}
\end{equation}

What remains is to optimize $\epsilon$ to minimize the bound. Minimizing an expression of the form $a/\epsilon + b \epsilon$ is fairly straightforward; the optimal value for $\epsilon$ is $\sqrt{a/b}$. However, if we were to apply this formula to \cref{eq:unoptimized_gen_err_bound}, the optimal $\epsilon$ would depend on $\Post$ via the KL divergence term. Since we want the bound to hold simultaneously for all $\Post$, we need to define $\epsilon$ such that it does not depend on $\Post$. To do so, we construct an infinite sequence of $\epsilon$ values; for $i = 0, 1, 2, \dots$, let
\begin{equation}
\epsilon_i \defeq 2^i \sqrt{ \frac{8 \ln\frac{2}{\delta}}{\nex \generrdatastab^2 + 4 \seqlen \randstab^2} } .
\label{eq:free_param}
\end{equation}
For each $\epsilon_i$, we assign $\delta_i \defeq \delta 2^{-(i+1)}$ mass to the probability that \cref{eq:unoptimized_gen_err_bound} does not hold, substituting $\epsilon_i$ and $\delta_i$ for $\epsilon$ and $\delta$, respectively. Thus, by the union bound, with probability at least $1 - \sum_{i=0}^{\infty} \delta_i = 1 - \delta \sum_{i=0}^{\infty} 2^{-(i+1)} = 1 - \delta$, all $i = 0,1,2,\dots$ satisfy
\begin{equation}
\GenErr(\Data,\Post)
	\leq \datastab + \frac{1}{\epsilon_i} \( \KLDiv{\Post}{\Prior} + \ln \frac{1}{\delta_i} \) + \frac{\epsilon_i}{8} \( \nex \generrdatastab^2 + 4 \seqlen \randstab^2 \) .
\label{eq:unoptimized_gen_err_bound_i}
\end{equation}

For any $\Post$, we select the optimal index, $i^{\star}$, as
\begin{equation}
i^{\star} = \floor{\frac{1}{2\ln2} \, \ln\( \frac{\KLDiv{\Post}{\Prior}}{\ln(2/\delta)} + 1 \)} .
\label{eq:posterior_i}
\end{equation}
Since, with a bit of arithmetic,
\begin{equation}
\frac{1}{2} \sqrt{ \frac{\KLDiv{\Post}{\Prior}}{\ln(2/\delta)} + 1 }
\leq
2^{i^{\star}}
\leq
\sqrt{ \frac{\KLDiv{\Post}{\Prior}}{\ln(2/\delta)} + 1 } ,
\label{eq:2_posterior_i_bounds}
\end{equation}
combining \cref{eq:free_param,eq:2_posterior_i_bounds}, we have that
\begin{equation}
\sqrt{ \frac{2 (\KLDiv{\Post}{\Prior} + \ln\frac{2}{\delta})}{\nex \generrdatastab^2 + 4 \seqlen \randstab^2} }
\leq
\epsilon_{i^{\star}}
\leq
\sqrt{ \frac{8 (\KLDiv{\Post}{\Prior} + \ln\frac{2}{\delta})}{\nex \generrdatastab^2 + 4 \seqlen \randstab^2} } .
\label{eq:free_param_bound}
\end{equation}
It can also be shown \citep{london:jmlr16} that
\begin{equation}
\KLDiv{\Post}{\Prior} + \ln \frac{1}{\delta_{i^{\star}}}
	\leq \frac{3}{2} \( \KLDiv{\Post}{\Prior} + \ln \frac{2}{\delta} \) .
\label{eq:kl_plus_log_inv_delta_bound}
\end{equation}
Therefore, with probability at least $1 - \delta$ over draws of $\Data \by \Dist^{\nex}$, every posterior, $\Post$, satisfies
\begin{align}
\GenErr(\Data,\Post)
	&\leq \datastab + \frac{1}{\epsilon_{i^{\star}}} \( \KLDiv{\Post}{\Prior} + \ln \frac{1}{\delta_{i^{\star}}} \) + \frac{\epsilon_{i^{\star}}}{8} \( \nex \generrdatastab^2 + 4 \seqlen \randstab^2 \) \\
	&\leq \datastab + \sqrt{ \frac{\nex \generrdatastab^2 + 4 \seqlen \randstab^2}{2 (\KLDiv{\Post}{\Prior} + \ln\frac{2}{\delta})} } \cdot \frac{3}{2} \( \KLDiv{\Post}{\Prior} + \ln \frac{2}{\delta} \) \\
	&\eqtab~~ + \sqrt{ \frac{8 (\KLDiv{\Post}{\Prior} + \ln\frac{2}{\delta})}{\nex \generrdatastab^2 + 4 \seqlen \randstab^2} } \cdot \frac{\nex \generrdatastab^2 + 4 \seqlen \randstab^2}{8} \\
	&= \datastab + \sqrt{ 2 \( \KLDiv{\Post}{\Prior} + \ln\frac{2}{\delta} \) \( \nex\generrdatastab^2 + 4\seqlen\randstab^2 \) } .
\label{eq:optimized_gen_err_bound_i}
\end{align}
Substituting \cref{eq:generrdatastab} for $\generrdatastab$, we obtain \cref{eq:unif_stability_pac_bayes_bound}.

\subsection{Proof of \cref{th:hp_unif_stability_pac_bayes_bound}}
\label{sec:proof_hp_unif_stability_pac_bayes_bound}

To accommodate all posteriors that might arise from drawing $\Data \by \Dist^{\nex}$, it helps to consider $\Post$ as a function of $\Data$. Accordingly, we let $\Post_{\Data}$ denote the distribution induced by $\Data$. With $\delta_1 \defeq \delta / 2$, let
\begin{equation}
E_1(\Data) \defeq \1 \left\{ \exists \Post :
	\GenErr(\Data,\Post) \geq + \datastab
		+ \sqrt{ 2 \( \KLDiv{\Post}{\Prior} + \ln\frac{2}{\delta_1} \) \( \frac{(\MaxLoss + 2\nex\datastab)^2}{\nex} + 4\seqlen\randstab^2 \) } 
	\right\}
\end{equation}
denote the event that there exists a posterior for which \cref{eq:unif_stability_pac_bayes_bound} does not hold. With $\delta_2 \defeq \delta / 2$, let
\begin{equation}
E_2(\Data, \rand) \defeq \1 \left\{ \GenErr(\Data,\rand) \geq \GenErr(\Data,\Post_{\Data}) + \randstab \sqrt{ 2\, \seqlen \ln\frac{1}{\delta_2} } \right\}
\end{equation}
denote the event that the generalization error for a given $\rand$ exceeds the expected generalization error under the posterior $\Post_{\Data}$ by more than $\randstab \sqrt{2 \, \seqlen \ln\frac{1}{\delta_2} }$.

The probability we want to upper-bound is
\begin{align}
\Pr_{\substack{\Data\by\Dist^{\nex} \\ \rand\by\Post_{\Data}}} \{ E_1(\Data) \vee E_2(\Data, \rand) \}
	&\leq \Pr_{\Data\by\Dist^{\nex}} \{ E_1(\Data) \}
		\, + \Pr_{\substack{\Data\by\Dist^{\nex} \\ \rand\by\Post_{\Data}}} \{ E_2(\Data, \rand) \} \\
	&\leq \Pr_{\Data\by\Dist^{\nex}} \{ E_1(\Data) \}
		\, + \, \sup_{\Data\in\ZS^{\nex}} \, \Pr_{\rand\by\Post_{\Data}} \{ E_2(\Data, \rand) \| \Data \} .
\end{align}
The first inequality follows from the union bound; the second inequality follows from probability theory. By \cref{th:unif_stability_pac_bayes_bound}, $\Pr_{\Data\by\Dist^{\nex}} \{ E_1(\Data) \} \leq \delta_1$. To upper-bound $\Pr_{\rand\by\Post_{\Data}} \{ E_2(\Data, \rand) \| \Data \}$, it suffices to show that $\GenErr(\Data,\rand)$ concentrates tightly around $\GenErr(\Data,\Post_{\Data})$. We will do so with \define{McDiarmid's inequality} \citep{mcdiarmid:sc89}. The following is a specialized version of the theorem.

\begin{lemma}[{\citep{mcdiarmid:sc89}}]
Let $\X_1,\dots,\X_{\nex}$ denote i.i.d. random variables, each taking values in $\Omega$. Suppose $\fun : \Omega^{\nex} \to \Reals$ is a measurable function for which there exists a constant, $\stability$, such that
\begin{equation}
\sup_{\omega_1,\dots,\omega_{\nex} \in \Omega^{\nex}} \, \sup_{\omega'_i \in \Omega} \, \ab{ \fun(\omega_1,\dots,\omega_i,\dots,\omega_{\nex}) - \fun(\omega_1,\dots,\omega'_i,\dots,\omega_{\nex}) } \leq \stability .
\label{eq:bounded_differences}
\end{equation}
Then, for any $\epsilon > 0$,
\begin{equation}
\Pr\left\{ \fun(\X) - \Ep\fun(\X) \geq \epsilon \right\} \leq \exp\( \frac{-2 \epsilon^2}{\nex \stability^2} \) .
\label{eq:mcdiarmid}
\end{equation}
\end{lemma}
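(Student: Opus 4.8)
The plan is to prove McDiarmid's inequality by the standard martingale (Azuma--Hoeffding) method, which is precisely the machinery already assembled in the proof of \cref{lem:mgf_bound}. The strategy has three movements: express $\fun(\X) - \Ep\fun(\X)$ as a telescoping sum of martingale differences, show that each difference has conditional range at most $\stability$ as a consequence of the bounded-differences hypothesis \cref{eq:bounded_differences}, and then combine Hoeffding's lemma with a Chernoff argument. First I would construct the Doob martingale obtained by revealing the coordinates one at a time. For $i = 1,\dots,\nex$, define
\[
\Mart_i \defeq \Ep[ \fun(\X) \| \X_1,\dots,\X_i ] - \Ep[ \fun(\X) \| \X_1,\dots,\X_{i-1} ] ,
\]
so that $\Ep[ \Mart_i \| \X_1,\dots,\X_{i-1} ] = 0$ by the tower property, and the sum telescopes to $\sum_{i=1}^{\nex} \Mart_i = \fun(\X) - \Ep\fun(\X)$, since conditioning on all coordinates recovers $\fun(\X)$ and conditioning on none recovers $\Ep\fun(\X)$.

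The crux of the argument, and the step I expect to be the main obstacle, is verifying that, conditioned on $\X_1,\dots,\X_{i-1}$, the variable $\Mart_i$ lies in an interval of width at most $\stability$. I would introduce the $\X_1,\dots,\X_{i-1}$-measurable quantities
\[
\underline{\Mart}_i \defeq \inf_{x\in\Omega} \Ep[ \fun(\X) \| \X_1,\dots,\X_{i-1}, \X_i = x ] , \qquad \overline{\Mart}_i \defeq \sup_{x\in\Omega} \Ep[ \fun(\X) \| \X_1,\dots,\X_{i-1}, \X_i = x ] ,
\]
so that, writing $\mu_{i-1} \defeq \Ep[ \fun(\X) \| \X_1,\dots,\X_{i-1} ]$, the conditional law of $\Mart_i$ is supported on $[\underline{\Mart}_i - \mu_{i-1}, \overline{\Mart}_i - \mu_{i-1}]$. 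The key point is that, because the $\X_j$ are independent, the expectation over the untouched coordinates $\X_{i+1},\dots,\X_{\nex}$ is taken against the same product law regardless of the value assigned to $\X_i$; consequently the difference $\Ep[ \fun(\X) \| \X_1,\dots,\X_{i-1}, \X_i = x ] - \Ep[ \fun(\X) \| \X_1,\dots,\X_{i-1}, \X_i = x' ]$ can be bounded by the pointwise gap $\fun(\dots,x,\dots) - \fun(\dots,x',\dots)$ integrated over those coordinates, which is at most $\stability$ by \cref{eq:bounded_differences}. Hence $\overline{\Mart}_i - \underline{\Mart}_i \le \stability$. This is the same independence-plus-bounded-difference mechanism that produced \cref{eq:mart_i_bound,eq:mart_t_bound}; the delicacy lies in justifying that the supremum over the replaced coordinate passes through the expectation over the remaining ones, which is exactly where independence is used.

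With the conditional range bound in hand, the remainder is routine. Applying Hoeffding's lemma (\cref{eq:hoeffding}) to the zero-mean conditional variable $\Mart_i$ gives $\Ep[ e^{t \Mart_i} \| \X_1,\dots,\X_{i-1} ] \le \exp(t^2 \stability^2 / 8)$ for every $t \in \Reals$. Peeling off the conditional expectations successively by the law of total expectation, exactly as in \cref{eq:iterated_expectations,eq:mgf_zero_mean_bound}, yields $\Ep[ \exp(t(\fun(\X) - \Ep\fun(\X))) ] \le \exp(\nex t^2 \stability^2 / 8)$. Finally I would close with the Chernoff bound: for $t > 0$, Markov's inequality gives $\Pr\{ \fun(\X) - \Ep\fun(\X) \ge \epsilon \} \le \exp(-t\epsilon + \nex t^2 \stability^2 / 8)$, and minimizing the exponent by choosing $t = 4\epsilon / (\nex \stability^2)$ produces the claimed bound $\exp(-2\epsilon^2 / (\nex \stability^2))$.
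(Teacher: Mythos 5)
Your proof is correct: it is the standard Doob-martingale argument (reveal coordinates one at a time, bound each conditional range by $\stability$ using independence and the bounded-differences condition, apply Hoeffding's lemma conditionally, then optimize the Chernoff parameter $t = 4\epsilon/(\nex\stability^2)$ to get the exponent $-2\epsilon^2/(\nex\stability^2)$), and all the steps, including the interval-width bound via the $\X_{1:i-1}$-measurable envelopes and the final optimization arithmetic, check out. The paper itself imports this lemma from \citet{mcdiarmid:sc89} without proof, so there is no in-paper proof to compare against; but your machinery is precisely the one the paper deploys in its own proof of \cref{lem:mgf_bound} (cf.\ \cref{eq:iterated_expectations,eq:hoeffding,eq:mgf_zero_mean_bound}), so your argument matches the paper's methods in all essentials.
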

An important special case is when $\stability = \BigTheta(\nex^{-1})$, in which case \cref{eq:mcdiarmid} is $\BigO( \exp(-2\nex\epsilon^2) )$, which decays rapidly.

Recall that $\Algo$ is $\randstab$-uniformly stable with respect to $\Loss$, independent of the posterior. Remember also that, by \cref{lem:gen_err_rand_stability}, $\GenErr$ satisfies McDiarmid's stability condition (\cref{eq:bounded_differences}) with $\stability \defeq 2\randstab$. Since $\Post_{\Data}$ is a product measure, we can therefore apply McDiarmid's inequality; with $\epsilon \defeq \randstab \sqrt{2 \, \seqlen \ln\frac{1}{\delta_2} }$,
\begin{equation}
\Pr_{\rand\by\Post_{\Data}} \{ E_2(\Data, \rand) \| \Data \}
	\leq \exp\( \frac{ -2 \( \randstab \sqrt{ 2\, \seqlen \ln\frac{1}{\delta_2} } \)^2 }{ \seqlen \( 2\randstab \)^2 } \)
	= \delta_2 .
\end{equation}
Thus,
\begin{equation}
\Pr_{\substack{\Data\by\Dist^{\nex} \\ \rand\by\Post_{\Data}}} \{ E_1(\Data) \vee E_2(\Data, \rand) \}
	\leq \delta_1 + \delta_2
	= \delta ;
\end{equation}
so, with probability at least $1 - \delta$,
\begin{align}
\GenErr(\Data,\rand)
	&\leq \randstab \sqrt{ 2\, \seqlen \ln\frac{1}{\delta_2} } + \GenErr(\Data,\Post_{\Data}) \\
	&\leq \randstab \sqrt{ 2\, \seqlen \ln\frac{1}{\delta_2} } + \datastab
		+ \sqrt{ 2 \( \KLDiv{\Post_{\Data}}{\Prior} + \ln\frac{2}{\delta_1} \) \( \frac{(\MaxLoss + 2\nex\datastab)^2}{\nex} + 4\seqlen\randstab^2 \) } .
\end{align}
Replacing $\delta_1$ and $\delta_2$ with $\delta / 2$ yields \cref{eq:hp_unif_stability_pac_bayes_bound}.

\section{Efficient Iteratively Re-weighted Sampling}
\label{sec:sampling}

At each iteration of \cref{alg:ada_samp}, we sample from a categorical distribution on $\{1,\dots,\nex\}$, then re-weight the distribution. While sampling from a uniform distribution is trivial, sampling from a nonuniform distribution is complicated. If the distribution is static, sampling can be performed in constant time, with $\BigO(\nex)$ initialization time and $\BigO(\nex)$ space, using the \define{alias method} \citep{kronmal:as79}. However, the data structure that enables the alias method cannot be updated in sublinear time, which makes the alias method inefficient for iterative re-weighting when $\nex$ is large.

In this section, we describe an algorithm for iteratively re-weighted sampling that balances sampling efficiency with re-weighting efficiency. Like the alias method, the algorithm requires $\BigO(\nex)$ initialization time and $\BigO(\nex)$ space, but the cost of sampling and re-weighting is $\BigO(\log\nex)$-time. Even for very large $\nex$, logarithmic time is an acceptable iteration complexity---especially since it may pale in comparison to the complexity of updating the hypothesis.

Before training, we initialize a full binary tree of depth $\ceil{\log\nex}$. We label the first $\nex$ leaves with the initial sampling weights (e.g., for uniform initialization, $\nex^{-1}$) and label the remaining $2^{\ceil{\log\nex}} - \nex$ leaves with 0. We then label each internal node with the sum of its children. During training, we sample from the distribution by performing a random tree traversal: at each internal node visited, we flip a biased coin, whose outcome probabilities are proportional to the labels of the node's children, then move to the corresponding child; the index of the leaf node we arrive at is the sampled value. It is easy to verify that this procedure results in a sample from the distribution. To modify the weight for a given index, we add the change in weight to each node in the path from the root to the associated leaf node. Pseudocode for these procedures is given in \cref{alg:sampling}.

\begin{algorithm}
\caption{Efficient Iteratively Re-weighted Sampling}
\label{alg:sampling}
\begin{algorithmic}[1]
\Procedure{Initialize}{$\PostWeight_1,\dots,\PostWeight_\nex$}
	\State{Initialize a full binary tree, $\Tree$, of depth $\ceil{\log\nex}$}
	\State{For $i = 1,\dots,\nex$, label the $i\nth$ leaf node with $\PostWeight_i$; label the remaining leaf nodes with 0}
	\State{Label each internal node with the sum of its children's labels.}
\EndProcedure
\Procedure{Sample}{$\Tree$}
	\Let{\node}{\Root(\Tree)}
	\While{$\node$ is not a leaf}
		\State{Flip a biased coin, $c$, with outcome probabilities proportional to the labels of $\node$'s children}
		\If{$c = \textsc{\Heads}$}
			\Let{\node}{\LeftChild}
		\Else
			\Let{\node}{\RightChild}		
		\EndIf
	\EndWhile
	\Return{index of leaf node $\node$}
\EndProcedure
\Procedure{Update}{$\Tree, i, \PostWeight$}
	\Let{\DiffPostWeight}{\PostWeight - \PostWeight_i}
	\For{node $\node$ on the path from the root to the $i\nth$ leaf node}
		\State{Add $\DiffPostWeight$ to the label of $\node$}
	\EndFor
\EndProcedure
\end{algorithmic}
\end{algorithm}

\section{Proofs from \cref{sec:adaptive_sampling_sgd}}
\label{sec:proofs_alg_analysis}

\subsection{Proof of \cref{th:ada_samp_kl_bound}}
\label{sec:proof_ada_samp_kl_bound}

Observe that the KL divergence decomposes as
\begin{equation}
\KLDiv{\Post}{\Prior}
	= \Ep_{(i_1,\dots,i_{\seqlen})\by\Post}\left[ \ln\( \frac{\Post(i_1,\dots,i_{\seqlen})}{\Prior(i_1,\dots,i_{\seqlen})} \) \right]
	= \sum_{t=1}^{\seqlen} \Ep_{(i_1,\dots,i_t)\by\Post}\left[ \ln\( \frac{\Post_t(i_t)}{\Prior_t(i_t)} \) \right] ,
\label{eq:kl_decomp}
\end{equation}
where $\Post_t(i) = \Post(i_t = i \| i_1,\dots,i_{t-1})$ is the conditional posterior at iteration $t$, and $\Prior_t(i)$, the conditional prior, is simply a uniform distribution on $\{1,\dots,\nex\}$. In the first iteration, $\Post_1(i) = \Prior_1(i)$, since the sampling weights are initialized uniformly to 1. Then, for every $t \geq 2$,
\begin{equation}
\ln\( \frac{\Post_t(i_t)}{\Prior_t(i_t)} \)
	= \ln\( \frac{\PostWeight\tidx{t}_{i_t} / \sum_{i=1}^{\nex} \PostWeight\tidx{t}_i}{\nex^{-1}}  \)
	= \ln \PostWeight\tidx{t}_{i_t} - \ln\( \frac{1}{\nex} \sum_{i=1}^{\nex} \PostWeight\tidx{t}_i \) ,
\label{eq:ratio_of_probs}
\end{equation}
where $\PostWeight\tidx{t}_i$ denotes the state of $\PostWeight_i$ at the \emph{start} of the $t\nth$ iteration. Unrolling the recursive definition of $\PostWeight\tidx{t}_i$, we have
\begin{equation}
\ln \PostWeight\tidx{t}_i
	= \ln \prod_{j=1}^{\NumOccur_{i,t}} \exp\( \amp \, \Utility(\z_i, \hyp_{\Occur_{i,j}}) \, \decay^{\NumOccur_{i,t} - j} \)
	= \amp \sum_{j=1}^{\NumOccur_{i,t}} \Utility(\z_i, \hyp_{\Occur_{i,j}}) \, \decay^{\NumOccur_{i,t} - j} .
\label{eq:sum_unroll_recursion}
\end{equation}
Further, using Jensen's inequality and the concavity of the logarithm,
\begin{align}
\ln\( \frac{1}{\nex} \sum_{i=1}^{\nex} \PostWeight\tidx{t}_i \)
	&= \ln\( \frac{1}{\nex} \sum_{i=1}^{\nex} \prod_{j=1}^{\NumOccur_{i,t}} \exp\( \amp \, \Utility(\z_i, \hyp_{\Occur_{i,j}}) \, \decay^{\NumOccur_{i,t} - j} \) \) \\
	&\geq \frac{1}{\nex} \sum_{i=1}^{\nex} \ln\( \prod_{j=1}^{\NumOccur_{i,t}} \exp\( \amp \, \Utility(\z_i, \hyp_{\Occur_{i,j}}) \, \decay^{\NumOccur_{i,t} - j} \) \) \\
	&= \frac{\amp}{\nex} \sum_{i=1}^{\nex} \, \sum_{j=1}^{\NumOccur_{i,t}} \Utility(\z_i, \hyp_{\Occur_{i,j}}) \, \decay^{\NumOccur_{i,t} - j} .
\label{eq:unroll_partition}
\end{align}
Combining \cref{eq:kl_decomp,eq:ratio_of_probs,eq:sum_unroll_recursion,eq:unroll_partition}, we have
\begin{equation}
\KLDiv{\Post}{\Prior}
	\leq \sum_{t=2}^{\seqlen}
		\Ep_{(i_1,\dots,i_t)\by\Post} \left[
		\amp \sum_{j=1}^{\NumOccur_{i_t,t}} \Utility(\z_{i_t}, \hyp_{\Occur_{i_t,j}}) \, \decay^{\NumOccur_{i_t,t} - j}
		- \frac{\amp}{\nex} \sum_{i=1}^{\nex} \sum_{k=1}^{\NumOccur_{i,t}} \Utility(\z_i, \hyp_{\Occur_{i,k}}) \, \decay^{\NumOccur_{i,t} - k}
		\right] .
\end{equation}
We then reorder the summations to obtain \cref{eq:ada_samp_kl_bound}.

\subsection{Proof of \cref{th:ada_samp_kl_bound_nonneg_utility}}
\label{sec:proof_ada_samp_kl_bound_nonneg_utility}

First, observe that the lower bound in \cref{eq:unroll_partition} is nonnegative, due to the nonnegativity of the utility function, amplitude and decay. We can therefore drop $\ln\big( \frac{1}{\nex} \sum_{i=1}^{\nex} \PostWeight\tidx{t}_i \big)$ from \cref{eq:ratio_of_probs}, which yields the following upper bound:
\begin{equation}
\KLDiv{\Post}{\Prior}
	\leq \Ep_{(i_1,\dots,i_{\seqlen})\by\Post}\left[ \sum_{t=2}^{\seqlen} \ln \PostWeight\tidx{t}_{i_t} \right]
	= \Ep_{(i_1,\dots,i_{\seqlen})\by\Post}\left[ \amp \sum_{t=2}^{\seqlen} \sum_{j=1}^{\NumOccur_{i_t,t}} \Utility(\z_{i_t}, \hyp_{\Occur_{i_t,j}}) \, \decay^{\NumOccur_{i_t,t} - j} \right] .
\label{eq:kl_drop_partition}
\end{equation}
Since $i_t = i_{\Occur_{i_t,j}}$ for all $j \in \NumOccur_{i_t,t}$, we have that
\begin{equation}
\Utility(\z_{i_t}, \hyp_{\Occur_{i_t,j}})
	= \Utility(\z_{i_{\Occur_{i_t,j}}}, \hyp_{\Occur_{i_t,j}})
	= \Utility(\z_{i_{t'}}, \hyp_{t'})
\end{equation}
for every $t' < t : i_t = i_{t'}$. Thus, the $t\nth$ computed utility value, $\Utility(\z_{i_t}, \hyp_t)$, is referenced whenever the same index is sampled in future iterations. We can therefore reorder the above summations as
\begin{equation}
\sum_{t=2}^{\seqlen} \sum_{j=1}^{\NumOccur_{i_t,t}} \Utility(\z_{i_t}, \hyp_{\Occur_{i_t,j}}) \, \decay^{\NumOccur_{i_t,t} - j}
	= \sum_{t=1}^{\seqlen-1} \Utility(\z_{i_t}, \hyp_t) \sum_{j=1}^{\NumOccur_{i_t,T+1} - \NumOccur_{i_t,t+1}} \decay^{j-1} .
\label{eq:reorg_util_sums}
\end{equation}
Note that when $\NumOccur_{i_t,T+1} - \NumOccur_{i_t,t+1} = 0$ (i.e., when $i_t$ is not sampled again in iterations $t+1,\dots,\seqlen$), the inner summation evaluates to zero. Since the utility function and amplitude are nonnegative, adding a term for $i_t$ that never appears again can only increase the bound. Thus, we can simplify the above expression by extending the inner summation to an infinite series:
\begin{equation}
\sum_{j=1}^{\NumOccur_{i_t,T+1} - \NumOccur_{i_t,t+1}} \decay^{j-1}
	\, \leq \, \sum_{j=0}^{\infty} \decay^j
	\, \leq \, \frac{1}{1 - \decay} .
\label{eq:geometric_series_bound}
\end{equation}
The last inequality follows from the geometric series identity, since $\decay \in (0,1)$. Combining \cref{eq:kl_drop_partition,eq:reorg_util_sums,eq:geometric_series_bound} yields \cref{eq:ada_samp_kl_bound_nonneg_utility}.

\bibliographystyle{plainnat}
\bibliography{london-nips17}

\end{document}